\newcommand{\cmark}{\ding{51}}%
\newcommand{\xmark}{\ding{55}}%
\algrenewcommand\algorithmicindent{1em}%
\setlist[itemize]{itemsep=0pt, parsep=0.5em, topsep=0pt}
\newcommand{\red}[1]{{\color{red}#1}}
\definecolor{subaction1}{HTML}{EA6B66}
\definecolor{subaction2}{HTML}{3399FF}
\definecolor{subaction3}{HTML}{008A00}
\DeclareMathOperator*{\argmax}{arg\,max}
\DeclareMathOperator{\rank}{rank}
\DeclareMathOperator*{\proj}{proj}
\DeclareMathOperator{\colspace}{colspace}
\DeclareMathOperator{\ncols}{ncols}
\DeclareMathOperator{\vect}{vec}
\newcommand{\Scal}{\mathcal{S}}
\newcommand{\Acal}{\mathcal{A}}
\newcommand{\Zcal}{\mathcal{Z}}
\newcommand{\Fcal}{\mathcal{F}}
\newcommand{\Mcal}{\mathcal{M}}
\newcommand{\avec}{\boldsymbol{a}}
\newcommand{\svec}{\boldsymbol{s}}
\newcommand\set[1]{\{#1\}}
\newcommand{\rotvert}{\rotatebox[origin=c]{90}{$\vert$}}
\newcommand\coolunder[2]{\mathrlap{\smash{\underbrace{\phantom{%
    \begin{matrix} #2 \end{matrix}}}_{\mbox{$#1$}}}}#2}
\newcommand\coolrightbrace[2]{%
\left.\vphantom{\begin{matrix} #1 \end{matrix}}\right\}#2}
\crefname{equation}{Eqn.}{Eqns.}
\newtheorem{theorem}{Theorem}
\newtheorem{proposition}[theorem]{Proposition}
\newtheorem{corollary}[theorem]{Corollary}
\theoremstyle{definition}
\newtheorem{definition}{Definition}
\declaretheorem[name=Example,qed={\lower-0.3ex\hbox{$\triangleleft$}}]{example}
\theoremstyle{remark}
\newtheorem*{remark}{Remark}
\newcommand*\circled[1]{\tikz[baseline=(char.base)]{
            \node[shape=circle,draw,inner sep=1pt] (char) {#1};}}
\newcommand*\smallcircled[1]{{\small \circled{#1}}}
\title{Leveraging Factored Action Spaces for Efficient Offline Reinforcement Learning in Healthcare}
\author{%
  Shengpu Tang$^1$ ~ Maggie Makar$^1$ ~ Michael W. Sjoding$^2$ ~ Finale Doshi-Velez$^3$ ~ Jenna Wiens$^1$ \\
  ~ \\
  \small $^1$Division of Computer Science \& Engineering, University of Michigan, Ann Arbor, MI, USA \\
  \small $^2$Division of Pulmonary and Critical Care, Michigan Medicine, Ann Arbor, MI, USA \\
  \small $^3$SEAS, Harvard University, Cambridge, MA, USA \\[0.8ex]
  Correspondence to: \href{mailto:tangsp@umich.edu,wiensj@umich.edu}{\color{black}\{tangsp,wiensj\}@umich.edu}\\[0.8ex]
  Reviewed on OpenReview: \href{https://openreview.net/forum?id=Jd70afzIvJ4}{\color{black} https://openreview.net/forum?id=Jd70afzIvJ4}
}
\begin{document}

\maketitle

\begin{abstract}
  Many reinforcement learning (RL) applications have combinatorial action spaces, where each action is a composition of sub-actions. A standard RL approach ignores this inherent factorization structure, resulting in a potential failure to make meaningful inferences about rarely observed sub-action combinations; this is particularly problematic for offline settings, where data may be limited. In this work, we propose a form of linear Q-function decomposition induced by factored action spaces. We study the theoretical properties of our approach, identifying scenarios where it is guaranteed to lead to zero bias when used to approximate the Q-function. Outside the regimes with theoretical guarantees, we show that our approach can still be useful because it leads to better sample efficiency without necessarily sacrificing policy optimality, allowing us to achieve a better bias-variance trade-off. Across several offline RL problems using simulators and real-world datasets motivated by healthcare, we demonstrate that incorporating factored action spaces into value-based RL can result in better-performing policies. Our approach can help an agent make more accurate inferences within underexplored regions of the state-action space when applying RL to observational datasets. 
\end{abstract}

\section{Introduction}

In many real-world decision-making problems, the action space exhibits an inherent combinatorial structure. For example, in healthcare, an action may correspond to a combination of drugs and treatments. When applying reinforcement learning (RL) to these tasks, past work \citep{ernst2006clinical,komorowski2018artificial,prasad2017reinforcement,parbhoo2020transfer} typically considers each combination a distinct action, resulting in an exponentially large action space (\cref{fig:arch}a). This is inefficient as it fails to leverage any potential independence among dimensions of the action space. 

This type of factorization structure in action space could be incorporated when designing the architecture of function approximators for RL (\cref{fig:arch}b). Similar ideas have been used in the past, primarily to improve online exploration \citep{tavakoli2018action,tavakoli2021learning}, or to handle multiple agents \citep{boutilier1996planning,sunehag2018value,rashid2018qmix,son2019qtran,zhou2019factorized} or multiple rewards \citep{juozapaitis2019explainable}. However, the applicability of this approach has not been systematically studied, especially in offline settings and when the MDP presents no additional structure (e.g., when the state space cannot be explicitly factorized). 

In this work, we develop an approach for offline RL with factored action spaces by learning linearly decomposable Q-functions. First, we study the theoretical properties of this approach, investigating the sufficient and necessary conditions for it to lead to an unbiased estimate of the Q-function (i.e., zero approximation error). Even when the linear decomposition is biased, we note that our approach leads to a reduction of variance, which in turn leads to an improvement in sample efficiency. Lastly, we show that when sub-actions exhibit certain structures (e.g., when two sub-actions ``reinforce'' their independent effects), the linear approximation, though biased, can still lead to the optimal policy. We test our approach in offline RL domains using a simulator \citep{oberst2019gumbel} and a real clinical dataset \citep{komorowski2018artificial}, where domain knowledge about the relationship among actions suggests our proposed factorization approach is applicable. Empirically, our approach outperforms a non-factored baseline when the sample size is limited, even when the theoretical assumptions (around the validity of a linear decomposition) are not perfectly satisfied. Qualitatively, in the real-data experiment, our approach learns policies that better capture the effect of less frequently observed treatment combinations. 

\begin{figure}[t]
    \centering
    \includegraphics[scale=0.75,trim=0 0 0 0]{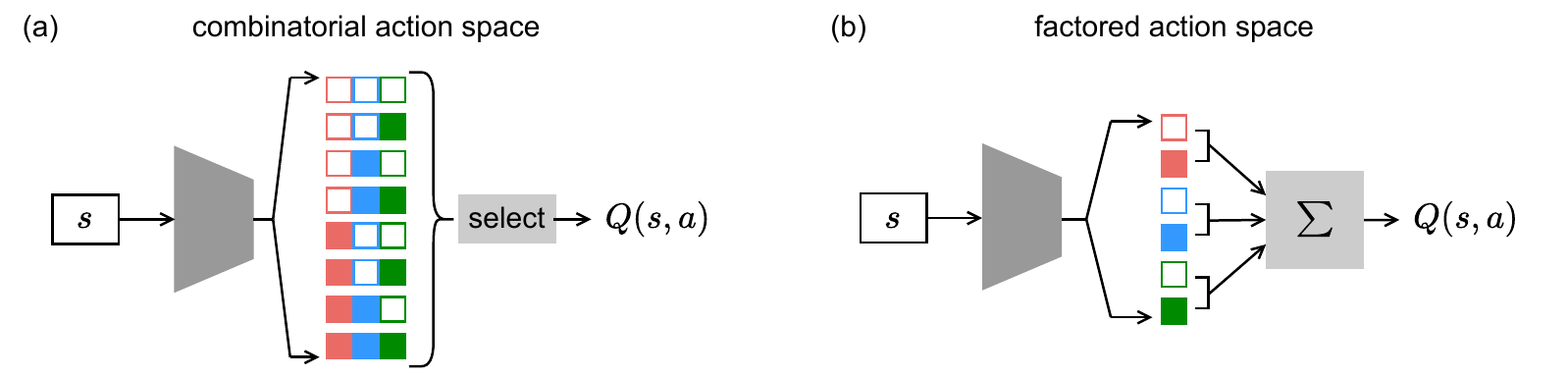}
    \caption{Illustration of Q-network architectures, which take the state $s$ as input and output $Q(s,a)$ for a selected action. In this example, the action space $\mathcal{A}$ consists of $D=3$ binary sub-action spaces $\{\textcolor{subaction1}{\square}, \textcolor{subaction1}{\blacksquare}\}$, $\{\textcolor{subaction2}{\square}, \textcolor{subaction2}{\blacksquare}\}$ and $\{\textcolor{subaction3}{\square}, \textcolor{subaction3}{\blacksquare}\}$. (a) Learning with the combinatorial action space requires $2^3 = 8$ output heads (exponential in $D$), one for each combination of sub-actions. (b) Incorporating the linear Q decomposition for the factored action space requires $2 \times 3 = 6$ output heads (linear in $D$). } 
    \label{fig:arch}
\end{figure}

\vspace{-0.25ex}
Our work provides both theoretical insights and empirical evidence for RL practitioners to consider this simple linear decomposition for value-based RL approaches. Our contribution complements many popular offline RL methods focused on distribution shift (e.g., BCQ \citep{fujimoto2019BCQ}) and goes beyond pessimism-only methods by leveraging domain knowledge. Compatible with any algorithm that has a Q-function component, we expect our approach will lead to gains for offline RL problems with combinatorial action spaces where data are limited and when domain knowledge can be used to check the validity of theoretical assumptions. 
\vspace{-0.25ex}

\vspace{-1ex}
\section{Problem Setup}
\vspace{-2ex}
We consider Markov decision processes (MDPs) defined by a tuple $\mathcal{M} = (\mathcal{S}, \mathcal{A}, p, r, \mu_0, \gamma)$, where $\mathcal{S}$ and $\mathcal{A}$ are the state and action spaces, $p(s'|s,a)$ and $r(s,a)$ are the transition and instantaneous reward functions, $\mu_0(s)$ is the initial state distribution, and $\gamma \in [0,1]$ is the discount factor. A probabilistic policy $\pi(a|s)$ specifies a mapping from each state to a probability distribution over actions. For a deterministic policy, $\pi(s)$ refers to the action with $\pi(a|s)=1$. The state-value function is defined as \( V^{\pi}(s) = \mathbb{E}_{\pi}\mathbb{E}_{\mathcal{M}} \left[ \sum_{t=1}^{\infty}\gamma^{t-1} r_t \ | \  s_1 = s \right] \). The action-value function, $Q^{\pi}(s,a)$, is defined by further restricting the action taken from the starting state. The goal of RL is to find a policy \( \pi^* = \argmax_{\pi} \mathbb{E}_{s \sim \mu_0}[V^{\pi}(s)] \) (or an approximation) that has the maximum expected performance.

\vspace{-1.25ex}
\subsection{Factored Action Spaces}
\vspace{-1.75ex}
While the standard MDP definition abstracts away the underlying structure within the action space $\mathcal{A}$, in this paper, we explicitly express a factored action space as a Cartesian product of $D$ sub-action spaces, $\mathcal{A} = \bigotimes_{d=1}^{D} \Acal_d = \mathcal{A}_1 \times \cdots \times \mathcal{A}_D$. We use $\avec \in \mathcal{A}$ to denote each action, which can be written as a vector of sub-actions $\avec = [a_1, \dots, a_D]$, with each $a_d \in \mathcal{A}_d$. In general, a sub-action space can be discrete or continuous, and the cardinalities of discrete sub-action spaces are not required to be the same. For clarity of analysis and illustration, we consider discrete sub-action spaces in this paper. 
%extensions to continuous sub-action spaces are straightforward. 

\vspace{-1.25ex}
\subsection{Linear Decomposition of Q Function}
\vspace{-1.75ex}

The traditional factored MDP literature almost exclusively considers state space factorization \cite{koller1999computing}. In contrast, here we capitalize on action space factorization to parameterize value functions. Specifically, our approach considers a linear decomposition of the $Q$ function, as illustrated in \cref{fig:arch}b: 
\begin{equation} \textstyle
    Q^{\pi}(s,\avec) = \sum_{d=1}^{D} q_d(s,a_d). \label{eqn:Q-decomposition}
\end{equation}
Each component $q_d(s,a_d)$ in the summation is allowed to condition on the full state space $s$ and only one sub-action $a_d$. While similar forms of decomposition have been used in past work, there are key differences in how the summation components are parameterized. In the multi-agent RL literature, each component $q_d(s_d,a_d)$ can only condition on the corresponding state space of the $d$-th agent \citep[e.g.,][]{sunehag2018value,rashid2018qmix}. The decomposition in \cref{eqn:Q-decomposition} also differs from a related form of decomposition considered by \citet{juozapaitis2019explainable} where each component $q_d(s,\avec)$ can condition on the full action $\avec$. To the best of our knowledge, we are the first to consider this specific form of Q-function decomposition backed by both theoretical rigor and empirical evidence; in addition, we are the first to apply this idea to offline RL. We discuss other related work in \cref{sec:related}.

\vspace{-1ex}
\section{Theoretical Analyses} \label{sec:theory}
\vspace{-1.75ex}
In this section, we study the theoretical properties of the linear Q-function decomposition induced by factored action spaces. We first present sufficient and necessary conditions for our approach to yield unbiased estimates, and then analyze settings in which our approach can reduce variance without sacrificing policy performance when the conditions are violated. Finally, we discuss how domain knowledge may be used to check the validity of these conditions, providing examples in healthcare. 

\subsection{Sufficient Conditions for Zero Bias} \label{sec:sufficient}
\vspace{-1.5ex}
If we consider the total return of $D$ MDPs running in parallel, where each MDP is defined by their respective state space $\Scal_d$ and action space $\Acal_d$, then the desired linear decomposition holds for the MDP defined by the joint state space $\bigotimes_{d=1}^{D}\Scal_d$ and joint action space $\bigotimes_{d=1}^{D}\Acal_d$ (formally discussed in \cref{appx:sufficient-trivial}). However, this relies on an explicit, known state space factorization, limiting its applicability. In contrast, we now present a generalization that forgoes the explicit factorization of the state space by making use of state abstractions. Intuitively, the MDP should have some implicit factorization, such that it is homomorphic to $D$ parallel MDPs. It is, however, not a requirement that this factorization is known, as long as it exists. 

\begin{theorem}\label{thm:sufficient-abstract}
Given an MDP defined by $\Scal, \Acal, p, r$ and a policy $\pi: \Scal\rightarrow\Delta(\Acal)$, where $\Acal = \bigotimes_{d=1}^{D} \Acal_d$ is a factored action space with $D$ sub-action spaces, if there exists $D$ unique corresponding state abstractions $\boldsymbol{\phi} = [\phi_1, \cdots, \phi_D]$ where $\phi_d: \Scal \rightarrow \Zcal_d$, $z_d = \phi_d(s)$, $z'_d = \phi_d(s')$, such that for all $s,a,s'$ the following holds: 
\begin{equation} \textstyle
  \sum_{\tilde{s} \in \boldsymbol{\phi}^{-1}(\boldsymbol{\phi}(s'))} p(\tilde{s}|s,\avec) = \prod_{d=1}^{D} p_d(z'_d|z_d,a_d) \label{eqn:abstract-factored-transition}
\end{equation}
\begin{tabularx}{\textwidth}{@{}XX@{}}
  \begin{equation} \textstyle
  r(s,\avec) = \sum_{d=1}^{D} r_d(z_d,a_d) \label{eqn:abstract-factored-reward}
  \end{equation} &
  \begin{equation} \textstyle
  \pi(\avec|s) = \prod_{d=1}^{D} \pi_d(a_d|z_d) \label{eqn:abstract-factored-policy}
  \end{equation}
\end{tabularx}
for some $p_d: \Zcal_d \times \Acal_d \to \Delta(\Zcal_d)$, $r_d: \Zcal_d \times \Acal_d \to \mathbb{R}$, and $\pi_d: \Zcal_d \to \Delta(\Acal_d)$, then the Q-function of policy $\pi$ can be expressed as \(Q^{\pi}(s,\avec) = \sum_{d=1}^{D} q_{d}(s,a_d). \)
\end{theorem}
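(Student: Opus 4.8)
My plan is to write down an explicit candidate for the decomposition, check that it satisfies the policy-evaluation Bellman equation for $\pi$ in the original MDP, and then appeal to uniqueness of that equation's solution. For each $d$, the tuple $\Mcal_d = (\Zcal_d, \Acal_d, p_d, r_d, \gamma)$ together with the policy $\pi_d$ is a well-defined MDP/policy pair; write $Q_d^{\pi_d}(z_d,a_d)$ for its action-value function and $V_d^{\pi_d}(z_d) := \sum_{a_d \in \Acal_d} \pi_d(a_d|z_d)\, Q_d^{\pi_d}(z_d,a_d)$ for its state-value function. I would then set $q_d(s,a_d) := Q_d^{\pi_d}(\phi_d(s),a_d)$ (each conditions on the full state, through $\phi_d$, and on the single sub-action $a_d$, as \eqref{eqn:Q-decomposition} requires) and $\widehat Q(s,\avec) := \sum_{d=1}^D q_d(s,a_d)$. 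It then suffices to verify $\widehat Q(s,\avec) = r(s,\avec) + \gamma \sum_{s'} p(s'|s,\avec) \sum_{\avec'} \pi(\avec'|s')\, \widehat Q(s',\avec')$ for all $s,\avec$, since $Q^\pi$ is the unique solution of this equation.

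The verification proceeds by expanding the right-hand side and simplifying in three steps. (i) The reward factorization \eqref{eqn:abstract-factored-reward} rewrites the immediate term as $\sum_d r_d(z_d,a_d)$. (ii) In the bootstrap term, since $q_d(s',a'_d)$ depends on $\avec'$ only through its $d$-th coordinate, the policy factorization \eqref{eqn:abstract-factored-policy} lets me carry out the sum over each $a'_e$ with $e \neq d$ for free (each $\sum_{a'_e}\pi_e(a'_e|z'_e)=1$), collapsing $\sum_{\avec'}\pi(\avec'|s')\widehat Q(s',\avec')$ into $\sum_d \sum_{a'_d}\pi_d(a'_d|\phi_d(s'))\, Q_d^{\pi_d}(\phi_d(s'),a'_d) = \sum_d V_d^{\pi_d}(\phi_d(s'))$. (iii) Since $V_d^{\pi_d}\circ\phi_d$ is constant on each joint cell $\boldsymbol{\phi}^{-1}(\boldsymbol{z}')$, I would regroup the sum over $s'$ by the value $\boldsymbol{z}' = \boldsymbol{\phi}(s')$ and apply the transition factorization \eqref{eqn:abstract-factored-transition}, obtaining $\sum_d \sum_{\boldsymbol{z}'} \big(\prod_e p_e(z'_e|z_e,a_e)\big) V_d^{\pi_d}(z'_d)$; marginalizing out the coordinates $z'_e$ with $e \neq d$ leaves $\sum_d \sum_{z'_d} p_d(z'_d|z_d,a_d)\, V_d^{\pi_d}(z'_d)$. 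Putting the pieces back together, the right-hand side equals $\sum_d \big( r_d(z_d,a_d) + \gamma \sum_{z'_d} p_d(z'_d|z_d,a_d) V_d^{\pi_d}(z'_d) \big)$, and each summand is precisely the Bellman right-hand side for $\Mcal_d$ at $(z_d,a_d)$, hence equals $Q_d^{\pi_d}(z_d,a_d)$. Therefore the right-hand side is $\sum_d Q_d^{\pi_d}(\phi_d(s),a_d) = \widehat Q(s,\avec)$, so $\widehat Q$ is a Bellman fixed point and $Q^\pi = \widehat Q = \sum_d q_d(s,a_d)$, as claimed.

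I expect the crux to be bookkeeping rather than any conceptual difficulty: \eqref{eqn:abstract-factored-transition} only pins down the \emph{aggregated} transition mass on a \emph{joint} abstraction cell $\boldsymbol{\phi}^{-1}(\boldsymbol{\phi}(s'))$, so recovering the marginal kernel $p_d$ requires the extra summation over the remaining coordinates in step (iii), and the regrouping of $\sum_{s'}$ into $\sum_{\boldsymbol{z}'}$ is legitimate only because each $V_d^{\pi_d}$ is a function of $\phi_d(s')$ alone (a fortiori measurable with respect to $\boldsymbol{\phi}$). One should also be explicit about which coordinate's policy/transition marginal is invoked where, so that the index $d$ of the value component and the indices $e$ being summed out are not conflated. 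Finally, the uniqueness step is standard for $\gamma<1$ with bounded rewards (the evaluation operator is a $\gamma$-contraction on a complete metric space); to cover $\gamma=1$ as well, I would instead argue directly by unrolling $Q^\pi(s,\avec) = \sum_d \mathbb{E}_{\pi,\Mcal}\big[\sum_{t\ge 1}\gamma^{t-1} r_d(z_{d,t},a_{d,t}) \mid s_1=s,\avec_1=\avec\big]$ via \eqref{eqn:abstract-factored-reward}, then showing by induction on $t$, using \eqref{eqn:abstract-factored-transition} and \eqref{eqn:abstract-factored-policy}, that the law of $(z_{d,t},a_{d,t})_{t\ge1}$ matches that of a trajectory of $\Mcal_d$ under $\pi_d$ started from $(\phi_d(s),a_d)$, so that the $d$-th expectation equals $Q_d^{\pi_d}(\phi_d(s),a_d)$.
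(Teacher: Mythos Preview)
Your proposal is correct. The algebraic core---using \eqref{eqn:abstract-factored-reward} for the immediate reward, \eqref{eqn:abstract-factored-policy} to collapse the action marginals, and the regrouping of $\sum_{s'}$ by joint abstraction cells together with \eqref{eqn:abstract-factored-transition} to reduce to the abstract kernels---is exactly what the paper does as well. The difference is in the outer scaffolding: the paper proceeds by induction on finite-horizon Q-functions $Q^{(h)}$, showing $Q^{(h)}(s,\avec) = \sum_d q_d^{(h)}(z_d,a_d)$ at each step and then sending $h\to\infty$, whereas you name the limit object $Q_d^{\pi_d}$ up front and verify in one shot that $\widehat Q = \sum_d Q_d^{\pi_d}\circ\phi_d$ is a fixed point of the evaluation Bellman operator, invoking contraction uniqueness. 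Your route is slightly more economical for $\gamma<1$ (no induction, no limit) and makes the identity of each summand $q_d$ as the Q-function of the abstract MDP $\Mcal_d$ explicit from the outset, which the paper's inductive construction arrives at only implicitly. The paper's horizon-induction framing, on the other hand, sidesteps the need to invoke a separate uniqueness argument and is uniform in $\gamma\in[0,1]$; you correctly anticipate this and sketch the trajectory-law-matching fallback for $\gamma=1$, which is essentially the same induction dressed differently.
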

In \cref{appx:sufficient-abstract}, we present an induction-based proof of \cref{thm:sufficient-abstract}. Since every assumption is used in key steps of the proof, we conjecture that the sufficient conditions cannot be relaxed in general. Consequently, if the sufficient conditions are satisfied, then using \cref{eqn:Q-decomposition} to parameterize the Q-function leads to zero approximation error and results in an unbiased estimator. Note that this does not require knowledge of $\boldsymbol{\phi}$. To highlight the significance of \cref{thm:sufficient-abstract}, we present the following example, in which the state space cannot be explicitly factored, yet the linear decomposition exists (additional examples probing the sufficient conditions can be found in \cref{appx:examples}).

\begin{figure}[h]
    \setlength{\belowcaptionskip}{-10pt}
    \centering
    \centerline{
    \hspace{-1em}\adjustbox{valign=t}{\begin{minipage}{0.5\textwidth}
    \begin{tabular}{c}
    \multicolumn{1}{l}{(a)} \\
    \begin{tabular}{lll}
    \includegraphics[scale=0.675, valign=c, trim=0 0 0 10]{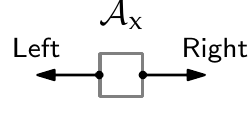} \qquad &
    \includegraphics[scale=0.675, valign=c, trim=0 0 0 10]{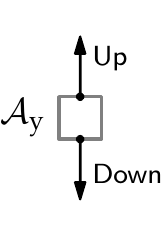} \qquad &
    \includegraphics[scale=0.675, valign=c, trim=0 0 0 10]{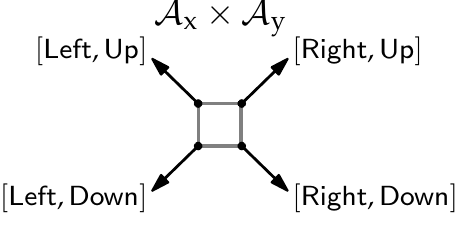}
    \end{tabular} \\[2em]
    \multicolumn{1}{l}{(c)} \\
\scalebox{0.75}{
    \begin{tabular}{cccc}
    \toprule
    $\avec = [a_{\mathrm{x}}, a_{\mathrm{y}}]$
    & \multicolumn{1}{c@{\hspace*{\tabcolsep}\makebox[0pt]{$+$}}}{$q_{\mathrm{x}}(s_{0,0},a_{\mathrm{x}})$} 
    & \multicolumn{1}{c@{\hspace*{\tabcolsep}\makebox[0pt]{$=$}}}{$q_{\mathrm{y}}(s_{0,0},a_{\mathrm{y}})$} 
    & $Q^{\pi}(s_{0,0},\avec)$ \\
    \midrule
    $\swarrow = [\leftarrow,\downarrow]$ & 0.9 & 0.9 & 1.8 \\
    $\nwarrow = [\leftarrow,\uparrow]$ & 0.9 & 1 & 1.9 \\
    $\searrow = [\rightarrow,\downarrow]$ & 1 & 0.9 & 1.9 \\
    $\nearrow = [\rightarrow,\uparrow]$ & 1 & 1 & 2 \\
    \bottomrule
    \end{tabular}%
}
    \end{tabular}
    \end{minipage}}
    \hspace*{\fill}
    \adjustbox{valign=t}{\begin{minipage}{0.46\textwidth}
    \begin{tabular}{c}
    \multicolumn{1}{l}{(b)} \\
    \includegraphics[scale=0.75, trim=0 30 0 20]{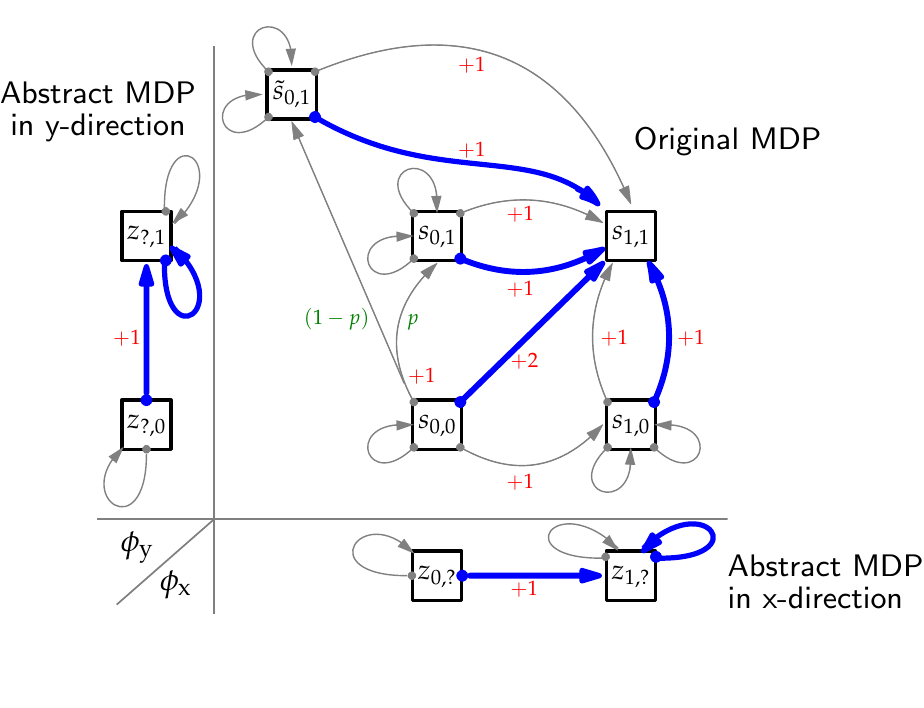}
    \end{tabular}
    \end{minipage}}
    }
    \caption{(a) The composition of sub-action spaces $\Acal_{\mathrm{x}}$ and $\Acal_{\mathrm{y}}$ results in $\Acal = \Acal_{\mathrm{x}} \times \Acal_{\mathrm{y}}$ depicted by outgoing arrows exiting the corners of each state (denoted by $\square$). \textbf{\textit{The corner from which the action exits encodes the direction.}} (b) An MDP with 5 states and 4 actions of the factored action space $\Acal$. For example, action $\nearrow = [\rightarrow, \uparrow]$ from $s_{0,0}$ moves the agent both right ($\rightarrow$) and up ($\uparrow$), to $s_{1,1}$. Under abstractions $\boldsymbol{\phi} = [\phi_{\mathrm{x}}, \phi_{\mathrm{y}}]$, this MDP can be mapped to two abstract MDPs (with action spaces $\Acal_{\mathrm{x}}$ and $\Acal_{\mathrm{y}}$, respectively). The abstract state spaces are $\Zcal_{\mathrm{x}} = \{z_{0,?}, z_{1,?}\}$ and $\Zcal_{\mathrm{y}} = \{z_{?,0}, z_{?,1}\}$, respectively, where $?$ indicates the coordinate ignored by the abstraction. $s_{1,1}$ is an absorbing state whose outgoing transition arrows are not shown. Taking action $\nwarrow = [\leftarrow, \uparrow]$ from $s_{0,0}$ leads to $s_{0,1}$ with probability $p$ and to $\tilde{s}_{0,1}$ with probability $(1-p)$ (denoted in {\color{ForestGreen} green}). Actions taken by a determinisitic policy $\pi$ are denoted by {\color{blue} \textbf{bold blue}} arrows. $\pi$ takes the same action $\searrow = [\rightarrow, \downarrow]$ from $s_{0,1}$ and $\tilde{s}_{0,1}$. Nonzero rewards are denoted in {\color{red} red}. (c) Linear decomposition of $Q^{\pi}$ for $s_{0,0}$ with respect to the factored action space ($\gamma=0.9$). Similar decompositions for other states also exist (omitted for space). }
    \label{fig:chain2d_abstract}
\end{figure}

\begin{example}[Two-dimensional chains with abstractions] \label{eg:chain2d_abstract}
The factored action space shown in \cref{fig:chain2d_abstract}a, $\Acal = \mathcal{A}_{\mathrm{x}} \times \mathcal{A}_{\mathrm{y}}$, is the composition of two binary sub-action spaces: $\mathcal{A}_{\mathrm{x}} = \{\leftarrow, \rightarrow\}$ leading the agent to move left or right, and $\mathcal{A}_{\mathrm{y}} = \{\downarrow, \uparrow\}$ leading the agent to move down or up. Thus, $\Acal$ consists of four actions, where each action $\avec = [a_{\mathrm{x}},a_{\mathrm{y}}]$ leads the agent to move \textit{\textbf{diagonally}}. 

Consider the MDP in \cref{fig:chain2d_abstract}b with action space $\Acal$. The state space $\Scal = \{s_{0,0}, s_{0,1}, \tilde{s}_{0,1}, s_{1,0}, s_{1,1}\}$ contains 5 different states; subscripts indicate the abstract state vector under $\boldsymbol{\phi} = [\phi_{\mathrm{x}}, \phi_{\mathrm{y}}]$ (e.g., $s_{0,1}$ and $\tilde{s}_{0,1}$ are two different raw states but are identical under the abstraction, $\boldsymbol{\phi}(s_{0,1}) = \boldsymbol{\phi}(\tilde{s}_{0,1}) = [z_{0,?}, z_{?,1}]$). There does not exist an explicit state space factorization such that $\Scal = \Scal_{\mathrm{x}} \times \Scal_{\mathrm{y}}$. One can check that \cref{eqn:abstract-factored-transition,eqn:abstract-factored-reward} are satisfied by comparing the raw transitions and rewards against the abstracted version (e.g., action $\nearrow$ from $s_{0,0}$ moves both $\rightarrow$ (under $\phi_{\mathrm{x}}$) and $\uparrow$ (under $\phi_{\mathrm{y}}$) to $s_{1,1}$ and receives the sum of the two rewards, $1+1 = 2$). For \cref{eqn:abstract-factored-policy} to hold, the policy must take the same action from $s_{0,1}$ and $\tilde{s}_{0,1}$. In \cref{fig:chain2d_abstract}c, we show the linear decomposition of the Q-function for one such policy where \cref{thm:sufficient-abstract} applies, under which the evolution of the MDP can be seen as two chain MDPs running in parallel (also in \cref{fig:chain2d_abstract}b). 
\end{example}

\vspace{-1ex}
\subsection{Necessary Conditions for Zero Bias} \label{sec:necessary}
\vspace{-1.5ex}
In \cref{appx:necessary}, we derive a necessary condition for the linear parameterization to be unbiased. Unfortunately, the condition is not verifiable unless the exact MDP parameters are known; this highlights the non-trivial nature of the problem. One may naturally question whether the sufficient conditions (which are arguably more verifiable in practice) must hold (i.e., are necessary) for the linear parameterization to be unbiased. Perhaps surprisingly, \textit{\textbf{none}} of the conditions are necessary. We state the following propositions and provide justifications through a set of counterexamples below and in \cref{appx:examples}. 
% , using the matrix form of Bellman equation for $Q$-functions similar to \citet[][Section 2]{lagoudakis2003least}

\begin{proposition}
There exists an MDP $\Mcal$ and a policy $\pi$ for which $Q_{\Mcal}^{\pi}$ decomposes as \cref{eqn:Q-decomposition} but the transition function $p$ of $\Mcal$ does not satisfy \cref{eqn:abstract-factored-transition}. 
\end{proposition}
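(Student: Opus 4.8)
The plan is to construct an explicit small counterexample — an MDP with a factored action space where the Q-function decomposes linearly but the transition kernel violates the state-abstraction factorization condition \eqref{eqn:abstract-factored-transition}. The cleanest route is to make the bias from a ``non-factored'' transition structure cancel out in the Q-function, either by exploiting the discount/horizon structure or by choosing rewards that are insensitive to the offending part of the transition dynamics.

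\textbf{Construction sketch.} I would take $D=2$ with binary sub-action spaces, a handful of states, and look for the simplest obstruction. The key observation is that \eqref{eqn:abstract-factored-transition} is a statement about the \emph{entire} transition kernel under \emph{every} candidate abstraction $\boldsymbol{\phi}$, whereas $Q^\pi$ only ``sees'' the dynamics through the discounted occupancy weighted by rewards. So I would: (i) pick a reward function that already decomposes as $r(s,\avec) = \sum_d r_d(s, a_d)$ (or even make rewards state-only, $r(s)$, collected at a terminal/absorbing state), so that the reward condition is not the issue; (ii) design transitions so that from some state, the \emph{joint} action $\avec$ has an effect on the next state that is genuinely \emph{not} a product of per-sub-action effects under any two-way abstraction — e.g., the sub-actions interact, sending the agent to a state whose downstream value is nonetheless the same as what the ``factored'' prediction would give; (iii) verify by direct computation (Bellman backup from the absorbing state outward, as in Example~\ref{eg:chain2d_abstract}) that $Q^\pi(s,\avec)$ still splits as $q_1(s,a_1) + q_2(s,a_2)$ for all $s$. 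A natural trick: route all four joint actions from the ``interacting'' state into states that are value-equivalent under $\pi$, so the interaction is invisible to $Q^\pi$ even though it is visible to $p$. Another trick: use $\gamma = 0$ (a contextual bandit), where $Q^\pi(s,\avec) = r(s,\avec)$, so any transition kernel at all — factored or not — gives a decomposable $Q^\pi$ as long as $r$ decomposes; then just pick $p$ to violate \eqref{eqn:abstract-factored-transition}. This degenerate case already proves the proposition, and I would likely present it first as the ``minimal'' witness and optionally follow with a genuinely sequential ($\gamma>0$) example for emphasis.

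\textbf{Verification steps, in order.} First, fix the MDP tuple and the policy $\pi$ explicitly (states, sub-action spaces, $p$, $r$, $\gamma$). Second, compute $Q^\pi$ by backward induction from absorbing states (finite and small, so this is a routine finite linear solve) and exhibit the decomposition $q_1, q_2$ — ideally with a small table mirroring Figure~\ref{fig:chain2d_abstract}(c). Third, show \eqref{eqn:abstract-factored-transition} fails: argue that for \emph{every} pair of abstractions $\boldsymbol{\phi} = [\phi_1,\phi_2]$ there is some $(s,\avec,s')$ violating the product form. If I use the $\gamma=0$ construction this third step is trivial because the transition kernel can be chosen to have an explicit correlated structure (e.g., $p(\cdot\mid s,\avec)$ supported on a single state that depends on $\avec$ in a non-product way, while $r$ is carefully made to still decompose). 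If I use a sequential example, I would pick $p$ with an obvious XOR-like coupling between the two sub-actions so no two-way abstraction can factor it, and just check a few abstraction candidates (or give a short counting/parity argument).

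\textbf{Main obstacle.} The delicate part is the third step when aiming for a ``non-degenerate'' example: showing the violation holds for \emph{all} abstractions $\boldsymbol\phi$, not just the obvious one, since the theorem only requires \emph{existence} of a good abstraction. With very few states the set of possible abstractions is small and can be enumerated, so this is manageable but needs care. If I instead go the $\gamma=0$ route, the obstacle essentially vanishes — the only real work is choosing $r$ that decomposes while $p$ does not, which is easy — at the cost of the example feeling a bit trivial; I would therefore lead with a compact bandit-style witness to establish the proposition rigorously and then, space permitting, include a sequential example (or defer it to \cref{appx:examples}, which the text already promises) to show the phenomenon is not an artifact of the $\gamma=0$ edge case.
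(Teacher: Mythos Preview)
Your proposal is correct and matches the paper's approach: the paper likewise establishes the proposition via the $\gamma=0$ observation (stated in \cref{appx:examples}: when $\gamma=0$, $Q^\pi=r$ so any transition kernel works provided $r$ decomposes) together with a small sequential counterexample (\cref{fig:chain2d_necesary}) where a single modified transition breaks \cref{eqn:abstract-factored-transition} yet $Q^\pi$ still decomposes. Your explicit concern about verifying the violation for \emph{all} abstractions $\boldsymbol{\phi}$ is, if anything, more careful than the paper's own treatment, which fixes the natural coordinate abstraction and checks the failure there.
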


\begin{proposition}
There exists an MDP $\Mcal$ and a policy $\pi$ for which $Q_{\Mcal}^{\pi}$ decomposes as \cref{eqn:Q-decomposition} but the reward function $r$ of $\Mcal$ does not satisfy \cref{eqn:abstract-factored-reward}. 
\end{proposition}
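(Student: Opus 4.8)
The plan is to exhibit an explicit finite MDP with a factored action space together with a policy $\pi$ such that $Q^{\pi}$ decomposes as in \cref{eqn:Q-decomposition}, yet no choice of state abstractions makes the reward obey \cref{eqn:abstract-factored-reward}. The guiding observation is that $Q^{\pi}(s,\avec) = r(s,\avec) + \gamma \sum_{s'} p(s'|s,\avec) V^{\pi}(s')$ adds a ``future'' term that depends on $\avec$ only through the transition kernel. If the transition out of $s$ depends on $\avec$ in a non-additive (cross-sub-action) way, this future term carries an interaction among sub-actions that can be tuned to exactly cancel the interaction present in the immediate reward $r(s,\cdot)$, leaving $Q^{\pi}(s,\cdot)$ additive in $\avec$ — in fact constant. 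So the temporal aggregation in $Q^{\pi}$ can ``launder'' a non-additive reward.

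Concretely, I would take $D=2$ with binary sub-actions $\Acal_1 = \Acal_2 = \{0,1\}$, a discount $\gamma \in (0,1)$, and two states $\{s, s_\dagger\}$. At $s$, set $r(s,[0,0]) = r(s,[0,1]) = r(s,[1,0]) = 0$ and $r(s,[1,1]) = c$ for some $c \neq 0$; let action $[1,1]$ transition deterministically to $s_\dagger$ while the other three actions keep the agent at $s$. Make $s_\dagger$ absorbing with constant reward $r_\dagger = -c(1-\gamma)/\gamma$, so $V^{\pi}(s_\dagger) = r_\dagger/(1-\gamma) = -c/\gamma$. Let $\pi$ be any deterministic policy with $\pi(s) \in \{[0,0],[0,1],[1,0]\}$ (its action at $s_\dagger$ is immaterial). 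Then $V^{\pi}(s) = 0$, and a one-line computation gives $Q^{\pi}(s,[1,1]) = c + \gamma V^{\pi}(s_\dagger) = 0$ while $Q^{\pi}(s,\avec) = 0$ for the other three actions; also $Q^{\pi}(s_\dagger,\avec) \equiv -c/\gamma$. Since $Q^{\pi}$ is action-independent at each state, it is trivially of the form $\sum_d q_d(s,a_d)$.

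It then remains to argue that $r$ genuinely violates \cref{eqn:abstract-factored-reward} for \emph{every} choice of abstractions. The key point is that an abstraction acts as the identity when restricted to a single raw state: for fixed $s$, \cref{eqn:abstract-factored-reward} forces the $2\times 2$ table $r(s,\cdot)$ to be writable as $\alpha_{a_1} + \beta_{a_2}$, which in turn forces $r(s,[0,0]) + r(s,[1,1]) = r(s,[0,1]) + r(s,[1,0])$; but the left side is $c$ and the right side is $0$, a contradiction since $c \neq 0$. I expect the main thing to get right is exactly this ``for all $\boldsymbol{\phi}$'' clause, together with checking that the decomposition of $Q^{\pi}$ holds at \emph{all} states (here immediate, since $Q^{\pi}$ is constant in $\avec$ at each state) and that every quantity is well defined for the chosen parameters (in particular $\gamma \in (0,1)$ so that $r_\dagger$ and $V^{\pi}(s_\dagger)$ make sense, and $c \neq 0$ so the reward is genuinely non-additive). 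One could dress this up with more states or richer dynamics, but the two-state construction already isolates the mechanism and, incidentally, mirrors the strategy one would use for the companion proposition about \cref{eqn:abstract-factored-transition}.
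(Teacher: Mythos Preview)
Your construction is correct and complete: the two-state MDP with the cancellation between the non-additive immediate reward at $s$ and the future value from $s_\dagger$ does yield an action-independent (hence trivially additive) $Q^{\pi}$, and your argument that \cref{eqn:abstract-factored-reward} fails at $s$ for \emph{every} abstraction $\boldsymbol{\phi}$ is exactly the right observation --- the abstraction is irrelevant once you fix a single state, so the requirement collapses to the classical ``$2\times 2$ table is additive iff the two diagonals have equal sums'' test, which your choice $c\neq 0$ breaks.

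The paper takes a different route. Its main justification is a single four-state ``modified two-dimensional chain'' (\cref{fig:chain2d_necesary}) that violates all three sufficient conditions simultaneously while still admitting a \emph{non-trivial} linear decomposition of $Q^{\pi}$ (the $Q$-values genuinely vary across actions and split into $q_{\mathrm{x}}+q_{\mathrm{y}}$); a separate appendix example (\cref{fig:2d-chain-reward-adversarial}) isolates the reward condition specifically, again with action-dependent $Q$-values. Your approach is more minimal and self-contained --- the trivial constant-in-$\avec$ $Q^{\pi}$ makes the decomposition claim immediate without any bookkeeping --- but it buys that simplicity by making $Q^{\pi}$ degenerate, so it says less about how a \emph{structured} decomposition can survive a non-additive reward. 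The paper's examples are heavier but illustrate that the phenomenon is not confined to the trivial case; yours is the cleanest possible existence witness.
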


\begin{proposition}
There exists an MDP $\Mcal$ and a policy $\pi$ for which $Q_{\Mcal}^{\pi}$ decomposes as \cref{eqn:Q-decomposition} but the policy $\pi$ does not satisfy \cref{eqn:abstract-factored-policy}. 
\end{proposition}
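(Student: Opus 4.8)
The plan is to prove all three propositions by explicit construction: for each one I would build a small finite MDP with $D=2$ binary sub-action spaces $\Acal=\Acal_1\times\Acal_2$, $\Acal_1=\Acal_2=\{0,1\}$, together with a (possibly stochastic) policy, so that $Q^\pi$ satisfies \cref{eqn:Q-decomposition} at every state while the targeted equation is violated. Two tricks keep the constructions manageable. First, whenever an immediate reward or a one-step successor distribution carries an ``interaction'' between $a_1$ and $a_2$ — a term that is not additively separable in the two sub-actions — I can cancel it inside the continuation value $\gamma\,\mathbb{E}_{s'\sim p(\cdot\mid s,\avec)}\!\left[V^\pi(s')\right]$ by routing $\avec$ to successor states whose policy-values are set to the negative of that interaction; the net result is an additively separable $Q^\pi(s,\cdot)$ assembled from non-separable pieces. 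Second, in the most permissive case I can simply make $Q^\pi$ constant, so \cref{eqn:Q-decomposition} holds trivially and only the broken condition needs attention.

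For the policy proposition I would take constant reward $r\equiv c$ with deterministic dynamics that funnel every state into a single absorbing state, so $Q^\pi\equiv c/(1-\gamma)$ for \emph{any} policy and \cref{eqn:Q-decomposition} holds trivially. Then choose $\pi(\cdot\mid s)$ to be perfectly correlated across the two coordinates, e.g.\ $\pi([0,0]\mid s)=\pi([1,1]\mid s)=\tfrac12$. For any abstractions $\phi_1,\phi_2$ and any fixed $s$, the product $\pi_1(a_1\mid\phi_1(s))\,\pi_2(a_2\mid\phi_2(s))$ is a product measure on $\{0,1\}^2$, which $\pi(\cdot\mid s)$ is not; hence \cref{eqn:abstract-factored-policy} fails for every $\boldsymbol{\phi}$, and that is the whole argument. (A less degenerate witness can be obtained by adding separable rewards and product dynamics while keeping $\pi$ correlated.)

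For the reward proposition I would use a two-stage MDP: at the root $s_0$ the action $\avec$ collects $r(s_0,\avec)=\alpha\,\mathbbm{1}[a_1=a_2]$ — an \textsc{xor}-type term, so $r(s_0,\cdot)$ is not additively separable in $\avec$ — and moves deterministically to one of four states $s_{\avec}$, from each of which a single transition to a terminal state collects $-\alpha\,\mathbbm{1}[a_1=a_2]/\gamma$ plus an optional separable correction. Then $Q^\pi(s_0,\cdot)$ is separable (in fact $\equiv 0$ if the correction is dropped) while $r(s_0,\cdot)$ is not; and since at the single state $s_0$ the values $\phi_1(s_0),\phi_2(s_0)$ are fixed, \cref{eqn:abstract-factored-reward} at $s_0$ literally requires $r(s_0,\cdot)$ to be additively separable, which fails for every $\boldsymbol{\phi}$. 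The rewards at the $s_{\avec}$ (constant in the action there) and at terminal states are trivially separable, so \cref{eqn:Q-decomposition} holds everywhere. The transition proposition follows the same template: from $s_0$ let $\avec$ induce a successor distribution that is correlated across the two ``natural'' coordinates of the state (so no abstracted one-step law can be written as $p_1(z'_1\mid z_1,a_1)\,p_2(z'_2\mid z_2,a_2)$), and tune the successor values so that $\gamma\,\mathbb{E}_{s'}\!\left[V^\pi(s')\right]$ depends on $\avec$ only through a separable function; downstream states are patched exactly as before.

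The main obstacle is twofold. The part that is easy to overlook is that \cref{eqn:Q-decomposition} must hold at \emph{every} state, not merely at the root where the interaction is engineered; this is exactly why the successor states should be terminal (or absorbing with additively separable rewards), and one must verify the policy's choices there keep $V^\pi$ — and hence $Q^\pi$ at predecessors — separable. The genuinely delicate part is the transition case: \cref{eqn:abstract-factored-transition} is satisfied vacuously by the trivial constant abstraction, so ``$p$ does not satisfy \cref{eqn:abstract-factored-transition}'' must be read as ``no $\boldsymbol{\phi}$ makes \cref{eqn:abstract-factored-transition,eqn:abstract-factored-reward,eqn:abstract-factored-policy} hold simultaneously, with the obstruction sitting in the transition equation.'' To make that precise I would use the reward condition as a lever: if the predecessor states carry rewards whose \emph{shapes} in $a_1$ (resp.\ $a_2$) differ, then \cref{eqn:abstract-factored-reward} forces $\phi_1$ (resp.\ $\phi_2$) to separate those states, pinning down a sufficiently fine abstraction for which the correlated transition provably violates \cref{eqn:abstract-factored-transition}. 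Carrying this bookkeeping through — including for the successor states, whose abstraction must in turn be forced fine enough — is where I expect most of the work to go.
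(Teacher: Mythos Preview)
Your construction for the policy proposition is correct but takes a genuinely different route from the paper. The paper's witness is the modified two-dimensional chain of \cref{fig:chain2d_necesary}: a non-trivial MDP with a \emph{deterministic} policy whose $Q^\pi$ admits an explicit, non-constant linear decomposition, and the violation of \cref{eqn:abstract-factored-policy} is exhibited only for the natural coordinate abstraction $[\phi_{\mathrm{x}},\phi_{\mathrm{y}}]$ (the policy selects $\leftarrow$ from $s_{0,0}$ and $\rightarrow$ from $s_{0,1}$, two states sharing the same $\phi_{\mathrm{x}}$-image). You instead flatten $Q^\pi$ to a constant so that \cref{eqn:Q-decomposition} is vacuous, and break \cref{eqn:abstract-factored-policy} with a \emph{stochastic} perfectly-correlated policy that is provably not a product measure at any state, for \emph{any} choice of $\boldsymbol{\phi}$. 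Your argument is shorter and cleanly discharges the universal quantifier over abstractions that the paper leaves implicit; the paper's single example, on the other hand, simultaneously witnesses all three propositions and displays a non-degenerate decomposition, which is pedagogically more informative about when the linear parameterization can remain exact despite a violated hypothesis.
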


\begin{example}[Modified two-dimensional chains] In \cref{fig:chain2d_necesary}, all conditions in \cref{thm:sufficient-abstract} are violated, yet for each state, there exists a linear decomposition of Q-values (see \cref{appx:examples}). 
\end{example}

\vspace{-5pt}
\begin{figure}[h]
    \centering
    \begin{minipage}[c]{0.32\textwidth}
    \includegraphics[scale=0.8, trim=0 0 0 10]{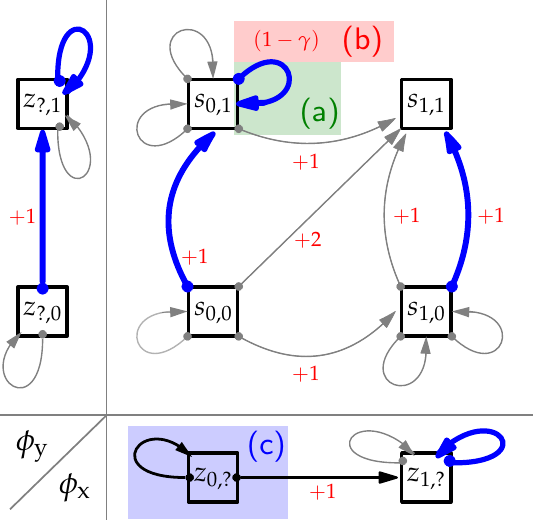}
    \end{minipage}
    \hspace*{\fill}
    \begin{minipage}[c]{0.65\textwidth}
    \caption{This MDP is similar to \cref{eg:chain2d_abstract} (except it does not have state $\tilde{s}_{0,1}$) and we consider the same abstractions $\boldsymbol{\phi} = [\phi_{\mathrm{x}}, \phi_{\mathrm{y}}]$. The Q-function and decomposition are exactly the same as in the previous example. However, none of the conditions in \cref{thm:sufficient-abstract} are satisfied. (a) The transition function does not satisfy \cref{eqn:abstract-factored-transition} because action $\nearrow = [\rightarrow, \uparrow]$ from $s_{0,1}$ does not move right ($\rightarrow$ under $\phi_{\mathrm{x}}$) to $s_{1,1}$ and instead moves back to state $s_{0,1}$. (b) The reward function does not satisfy \cref{eqn:abstract-factored-reward} as the reward of $(1-\gamma)$ for action $\nearrow = [\rightarrow, \uparrow]$ from $s_{0,1}$ is not the sum of $+1$ ($\rightarrow$ from $z_{0,?}$ under $\phi_{\mathrm{x}}$) and $0$ ($\uparrow$ from $z_{?,1}$ under $\phi_{\mathrm{y}}$). (c) The policy does not satisfy \cref{eqn:abstract-factored-policy} as it takes different sub-actions from $z_{0,?}$ under $\phi_{\mathrm{x}}$ ($\nwarrow$ from $s_{0,0}$ specifies $\leftarrow$, whereas $\nearrow$ from $s_{0,1}$ specifies $\rightarrow$). }
    \label{fig:chain2d_necesary}
    \end{minipage}
\end{figure}
\vspace{-5pt}

Therefore, while \cref{thm:sufficient-abstract} imposes a rather stringent set of assumptions on the MDP structure (transitions, rewards) and the policy, violations of these conditions do not preclude the linear parameterization of the Q-function from being an unbiased estimator.

\subsection{How Does Bias Affect Policy Learning?}
When the sufficient conditions do not hold perfectly, using the linear parameterization in \cref{eqn:Q-decomposition} to fit the Q-function may incur nonzero approximation error (bias). This can affect the performance of the learned policy; in \cref{appx:perf_bounds}, we derive error bounds based on the extent of bias relative to the sufficient conditions in \cref{thm:sufficient-abstract}. Despite this bias, our approach always leads to a reduction in the variance of the estimator. This gives us an opportunity to achieve a better bias-variance trade-off, especially given limited historical data in the offline setting. In addition, as we will demonstrate, biased Q-values do not always result in suboptimal policy performance, and we identify the characteristics of problems where this occurs under our proposed linear decomposition. 

\subsubsection{Bias-Variance Trade-off} \label{sec:bias-variance}
While the amount of bias incurred depends on the problem structure, the benefit of variance reduction is immediate. Intuitively, to learn the Q-function of a tabular MDP with state space $\Scal$ and action space $\Acal = \bigotimes_{d=1}^{D} \Acal_d$, the linear parameterization reduces the number of free parameters from $|\Scal||\Acal| = |\Scal|(\prod_{d=1}^{D}|\Acal_d|)$ to $|\Scal|(\sum_{d=1}^{D}|\Acal_d| - D + 1 )$ (see \cref{appx:subspace_proof}). This reduces the hypothesis class from exponential in $D$ to linear in $D$. To analyze variance reduction, we compare the bounds on Rademacher complexity \citep{mohri2018foundations,duan2021risk,makar2022causally} of the Q-function approximator using the factored action space with that of the full combinatorial action space (formally discussed in \cref{appx:rademacher}). 

\begin{proposition} \label{thm:variance}
Using the linear Q-function decomposition for the factored action space in \cref{eqn:Q-decomposition} has a smaller lower bound on the empirical Rademacher complexity compared to learning the Q-function in the combinatorial action space.
\end{proposition}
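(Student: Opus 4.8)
The plan is to exhibit an explicit injection from the function class realizable by the factored (linear) parameterization into the function class realizable by the combinatorial parameterization, and then invoke the monotonicity of (lower bounds on) empirical Rademacher complexity with respect to class inclusion. First I would set up notation: fix a finite sample $\{(s_i,\avec_i)\}_{i=1}^{m}$ and let $\mathcal{F}_{\text{fact}}$ denote the hypothesis class $\{(s,\avec)\mapsto \sum_{d=1}^{D} q_d(s,a_d)\}$ and $\mathcal{F}_{\text{full}} = \{(s,\avec)\mapsto Q(s,\avec)\}$ the unrestricted tabular class. The key structural observation — already implicit in the parameter-counting remark preceding the proposition and in \cref{appx:subspace_proof} — is that $\mathcal{F}_{\text{fact}} \subseteq \mathcal{F}_{\text{full}}$: any linearly decomposable $Q$ is in particular \emph{some} function of $(s,\avec)$, so the factored class is a linear subspace of the full class. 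Hence for every realization of the Rademacher variables $\sigma \in \{\pm 1\}^m$ we have $\sup_{f\in\mathcal{F}_{\text{fact}}} \frac{1}{m}\sum_i \sigma_i f(s_i,\avec_i) \le \sup_{f\in\mathcal{F}_{\text{full}}} \frac{1}{m}\sum_i \sigma_i f(s_i,\avec_i)$, and taking expectations over $\sigma$ preserves the inequality, giving $\widehat{\mathfrak{R}}_m(\mathcal{F}_{\text{fact}}) \le \widehat{\mathfrak{R}}_m(\mathcal{F}_{\text{full}})$.

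The subtlety is that the proposition is phrased in terms of \emph{lower bounds} on empirical Rademacher complexity rather than the complexities themselves, so the second step is to pass from the class-inclusion inequality to the corresponding statement about whatever lower bound is used in \cref{appx:rademacher}. Concretely, I would recall the standard lower bound for a linear class: if $\mathcal{F}$ is the set of linear functionals of norm at most $B$ over a feature map $\psi$, then $\widehat{\mathfrak{R}}_m(\mathcal{F}) \ge \frac{B}{m\sqrt{2}}\sqrt{\sum_i \|\psi(s_i,\avec_i)\|^2}$ (or the analogous Khintchine-type bound), and note that replacing the combinatorial one-hot feature map (dimension $|\Scal|\prod_d|\Acal_d|$) by the factored feature map (dimension $|\Scal|\sum_d|\Acal_d|$) strictly shrinks the relevant Gram-matrix trace / eigenvalue quantity, because the factored features are obtained by summing/collapsing blocks of the combinatorial one-hot features. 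I would then verify that this shrinkage propagates through the particular lower-bound expression cited from \citet{duan2021risk, makar2022causally}, yielding a strictly smaller value.

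The step I expect to be the main obstacle is precisely this last verification: making the comparison of lower bounds rigorous requires being careful that the two lower bounds are instances of the \emph{same} bounding technique applied to the two classes (otherwise comparing them is meaningless), and that the data-dependent quantities (norms of feature vectors, effective dimension, covering-number exponents) genuinely decrease and do not merely have a smaller \emph{formula} while potentially being equal on degenerate samples. I would handle this by (i) stating the lower bound as a lemma in the form ``for a linear-in-$\psi$ hypothesis class with bounded coefficients, $\widehat{\mathfrak{R}}_m \ge g\big(\{\psi(x_i)\}\big)$ for an explicit monotone $g$,'' (ii) showing the factored feature map is a fixed linear image (a block-averaging matrix $M$ with $\|M\|\le 1$) of the combinatorial one, so $\|\psi_{\text{fact}}(x_i)\| \le \|\psi_{\text{full}}(x_i)\|$ pointwise and the induced Gram matrix is dominated in Loewner order, and (iii) invoking monotonicity of $g$. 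The genuinely nontrivial modeling choice — which I would flag rather than belabor — is the normalization convention that makes the coefficient-norm bounds $B$ comparable across the two parameterizations; under the natural choice where each $q_d$ and each $Q(s,\avec)$ are bounded by the same $\Vmax$, the comparison goes through cleanly, and I would make that assumption explicit in the statement of the supporting lemma.
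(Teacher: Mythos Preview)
Your first step --- showing $\mathcal{F}_{\text{fact}} \subseteq \mathcal{F}_{\text{full}}$ and invoking monotonicity of empirical Rademacher complexity --- is correct but, as you already note, proves the wrong thing: it bounds the complexities themselves, whereas the proposition compares \emph{lower bounds} arising from a common formula. So the real content is in your second step, and that is where the argument breaks.

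Under your chosen feature maps --- one-hot $e_{\avec} \in \{0,1\}^{|\Acal|}$ for the combinatorial class versus the multi-hot row $\boldsymbol{\psi}(\avec) \in \{0,1\}^{\sum_d |\Acal_d|}$ for the factored class --- the norm comparison goes the \emph{opposite} way from what you claim. Each combinatorial feature has $\|e_{\avec}\|_2 = 1$, while each factored feature has $\|\boldsymbol{\psi}(\avec)\|_2 = \sqrt{D}$ (it contains exactly $D$ ones, one per sub-action space). Hence for a sample of size $m$, $\|\mathbf{X}_{\text{full}}\|_{2,2}^2 = m$ but $\|\mathbf{X}_{\text{fact}}\|_{2,2}^2 = mD$, and the Awasthi-type lower bound $\tfrac{A}{\sqrt{2}m}\|\mathbf{X}\|_{2,2}$ is \emph{larger} for the factored representation, not smaller. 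The linear map $M$ you describe as ``block-averaging'' is in fact block-\emph{summing} (it sends a single one-hot to a $D$-hot vector) and has operator norm $\sqrt{D} > 1$, so neither the pointwise inequality $\|\psi_{\text{fact}}(x_i)\| \le \|\psi_{\text{full}}(x_i)\|$ nor the Loewner domination of Gram matrices holds. Rescaling the features to restore $\|M\|\le 1$ forces a compensating rescaling of the weight bound $A$, and the lower bound does not move in the desired direction.

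The paper sidesteps this by choosing a different feature representation for the combinatorial class: it writes the full-action design matrix as $\mathbf{X}' = [\mathbf{X}, \mathbf{x}_{\text{Interact}}]$, i.e., the factored feature columns \emph{augmented} by explicit interaction column(s). With the same weight-norm radius $A$ imposed on both classes, the factored class is then obtained literally by deleting columns from $\mathbf{X}'$, and deleting a column with nonzero norm strictly decreases $\|\cdot\|_{2,2}$. That column-dropping device is the missing ingredient; your one-hot-versus-multi-hot comparison cannot supply it.
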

\vspace{-0.5em}
\cref{thm:variance} shows that our linear Q-function parameterization leads to a smaller function space, which implies a lower-variance estimator. Hence, our factored-action approach can make more efficient use of limited samples, leading to an interesting bias-variance trade-off that is especially beneficial for offline settings with limited data.

\subsubsection{Bias $\not\Rightarrow$ Suboptimal Performance} \label{sec:bias-suboptimal}
Even in the presence of bias, an inaccurate Q-function may still correctly identify the value-maximizing action (\cref{thm:argmax-preserve}). While this statement is generally true, in this section, we identify \textbf{\textit{when}} this occurs \textbf{\textit{specifically given}} our linear decomposition based on factored action spaces. To focus the analysis on the most interesting aspects unique to our approach, we consider a bandit setting; extensions to the sequential RL setting are possible by applying induction similar to the proof for the main theorems (\cref{appx:sufficient-trivial,appx:sufficient-abstract}). 

\begin{proposition} \label{thm:argmax-preserve}
There exists an MDP with the optimal $Q^*$ and its approximation $\hat{Q}$ parameterized in the form of \cref{eqn:Q-decomposition}, such that $\hat{Q} \neq Q^*$ and yet $\argmax_{\avec}\hat{Q}(\avec) = \argmax_{\avec}Q^*(\avec)$. 
\end{proposition}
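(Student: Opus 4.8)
The plan is to establish the statement by exhibiting an explicit counterexample, as it is a pure existence claim. I would work in a bandit --- equivalently, a two-state MDP whose only non-absorbing state $s_0$ transitions (under every action) to an absorbing, zero-reward state, with $\gamma\in(0,1)$ arbitrary. Then $Q^\pi(s_0,\avec)=r(s_0,\avec)$ for every $\pi$, so $Q^*(s_0,\cdot)=r(s_0,\cdot)$, and the problem reduces to the static question of whether the $\argmax$ over $\Acal$ of the best additive fit of a non-additive function agrees with the $\argmax$ of that function. This is precisely the ``most interesting aspect unique to our approach'' flagged in the text; the extension to genuinely sequential MDPs then follows by the same induction used for \cref{thm:sufficient-abstract}, so I would not dwell on it.

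For the construction I would take $D=2$ with $\Acal_1=\Acal_2=\{0,1\}$ (four actions) and a reward that exhibits \emph{reinforcing}, i.e.\ super-additive, sub-action effects:
\[
r(s_0,[0,0])=0,\qquad r(s_0,[1,0])=r(s_0,[0,1])=1,\qquad r(s_0,[1,1])=3 .
\]
The interaction term $r(s_0,[1,1])-r(s_0,[1,0])-r(s_0,[0,1])+r(s_0,[0,0])=1\neq 0$, so this $Q^*=r$ admits no decomposition of the form \cref{eqn:Q-decomposition}, and its unique maximizer is $\avec^*=[1,1]$. I would then write the additive surrogate as $\hat Q(s_0,[a_1,a_2])=q_1(s_0,a_1)+q_2(s_0,a_2)=\alpha+\beta_1 a_1+\beta_2 a_2$ and pick the coefficients; the natural choice is the least-squares projection of $Q^*$ onto the additive family, which by symmetry gives $\beta_1=\beta_2=\tfrac32$ and $\alpha=-\tfrac14$, so that $\hat Q$ takes the values $-\tfrac14,\ \tfrac54,\ \tfrac54,\ \tfrac{11}{4}$ on $[0,0],[1,0],[0,1],[1,1]$. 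Checking the two requirements is immediate: $\hat Q(s_0,[0,0])=-\tfrac14\neq 0 = Q^*(s_0,[0,0])$, hence $\hat Q\neq Q^*$; and $-\tfrac14<\tfrac54<\tfrac{11}{4}$, hence $\argmax_{\avec}\hat Q(s_0,\avec)=[1,1]=\argmax_{\avec}Q^*(s_0,\avec)$. Since only existence is claimed, one can equally well just posit these coefficients and verify the inequalities, with no appeal to any fitting procedure.

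There is no genuine obstacle in this argument; the only care required is to choose an example that is both clean and \emph{representative} rather than ad hoc. I would therefore present it so the mechanism is transparent: when both sub-actions independently raise value and their interaction further reinforces (rather than cancels) those gains, the additive fit --- which is structurally forced to average the interaction away --- nonetheless ranks $[1,1]$ first because the positive main effects $\beta_1,\beta_2$ dominate. If space permits I would close by remarking that this is not a coincidence of the particular numbers: whenever the best additive fit's main effects respect the coordinatewise ordering that $Q^*$ induces at its maximizer, the $\argmax$ is preserved, which is the sufficient structural condition I would state and then transport to the sequential setting via the induction of \cref{appx:sufficient-abstract}.
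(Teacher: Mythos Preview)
Your proposal is correct and follows essentially the same approach as the paper: a two-sub-action bandit with a nonzero interaction term, least-squares projection onto the additive family, and a direct check that the $\argmax$ is preserved despite $\hat Q\neq Q^*$. Your numerical example is precisely the paper's parameterized construction at $\alpha=1,\ \beta=1$ (the paper writes the rewards as $[0,\alpha,1,1+\alpha+\beta]$ and obtains $\hat Q=[-\tfrac{1}{4}\beta,\ \alpha+\tfrac{1}{4}\beta,\ 1+\tfrac{1}{4}\beta,\ 1+\alpha+\tfrac{3}{4}\beta]$), and your closing remark about reinforcing interactions mirrors the paper's identification of the $\alpha\geq 0,\ \beta\geq 0$ region.
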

\vspace{-0.5em}

\begin{figure}[b]
    \centering
\begin{minipage}[c]{0.55\textwidth}
    \begin{tabular}{ll}
    (a) & (b) \\
    \rule{0pt}{-2ex} 
    \makecell[t]{\includegraphics[scale=0.8, valign=c]{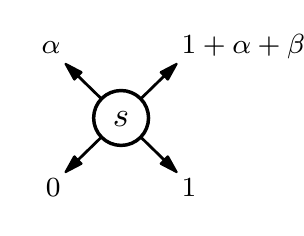}} &
\scalebox{0.75}{\parbox{0.75\linewidth}{%
\begin{equation*}
\setlength\arraycolsep{1pt}
    \begin{matrix}
    r_{\scriptscriptstyle\textsf{Left}} & & + r_{\scriptscriptstyle\textsf{Down}} & & & = & 0 \\
    r_{\scriptscriptstyle\textsf{Left}} & & & + r_{\scriptscriptstyle\textsf{Up}} & & = & \alpha \\
    & \, r_{\scriptscriptstyle\textsf{Right}} & + r_{\scriptscriptstyle\textsf{Down}} & & & = & 1 \\
    & \, r_{\scriptscriptstyle\textsf{Right}} & & + r_{\scriptscriptstyle\textsf{Up}} & + \cancel{r_{\scriptscriptstyle\textsf{Interact}}} & = & 1 + \alpha + \beta
    \end{matrix}
\end{equation*}
}} \\
    (c) \\
    \multicolumn{2}{c}{
\begin{tabular}{cc}
    True Value Function & Linear Approximation \\
\scalebox{0.7}{\parbox{0.5\linewidth}{%
\begin{equation*}
\setlength\arraycolsep{1pt}
\def\arraystretch{1.2}
\begin{bmatrix} Q^*({\scriptstyle\swarrow}) \\ Q^*({\scriptstyle\nwarrow}) \\ Q^*({\scriptstyle\searrow}) \\ Q^*({\scriptstyle\nearrow}) \end{bmatrix} = \begin{bmatrix} 0 \\ \alpha \\ 1 \\ 1+\alpha+\beta \end{bmatrix}
\end{equation*}
}}
& 
\scalebox{0.7}{\parbox{0.5\linewidth}{%
\begin{equation*}
\setlength\arraycolsep{1pt}
\def\arraystretch{1.2}
\begin{bmatrix} \hat{Q}({\scriptstyle\swarrow}) \\ \hat{Q}({\scriptstyle\nwarrow}) \\ \hat{Q}({\scriptstyle\searrow}) \\ \hat{Q}({\scriptstyle\nearrow}) \end{bmatrix} = \begin{bmatrix} & - \frac{1}{4} \beta \\ \alpha & + \frac{1}{4} \beta \\ 1 & +\frac{1}{4} \beta \\ 1 + \alpha & +\frac{3}{4} \beta \end{bmatrix}
\end{equation*}
}}
    \end{tabular}
    }
    \end{tabular}
\end{minipage}
\hspace*{\fill}
\begin{minipage}[c]{0.42\textwidth}
    \caption{(a) A two-dimensional bandit problem with action space $\Acal$. Rewards are denoted for each arm. (b) Learning using the linear Q decomposition approach corresponds to a system of linear equations that relates the reward of each arm. The parameter $r_{\scriptscriptstyle\textsf{Interact}}$ is dropped in our linear approximation, leading to omitted-variable bias. (c) Solving the system results in an approximate value function $\hat{Q}$, which does not equal to the true value function $Q^*$ unless $\beta = 0$.}
    \label{fig:2d_bandit}
\end{minipage}
\end{figure}
% \begin{example}[Two-dimensional bandits] \label{example:bandit2d}
\textit{Justification.} Consider a 1-step bandit problem with a single state and the same action space as before, $\Acal = \Acal_{\mathrm{x}} \times \Acal_{\mathrm{y}}$. Taking an action $\avec = [a_{\mathrm{x}},a_{\mathrm{y}}]$ leads the agent to move diagonally and terminate immediately. Since there are no transitions, the Q-values of any policy are simply the immediate reward from each action, $Q(\avec) = r(\avec)$. We assume the reward function is defined as in \cref{fig:2d_bandit}a (\cref{appx:bandit-standardize} describes a procedure to standardize an arbitrary reward function). Applying our approach amounts to solving for the parameters $r_{\scriptscriptstyle\textsf{Left}}, r_{\scriptscriptstyle\textsf{Right}}, r_{\scriptscriptstyle\textsf{Down}}, r_{\scriptscriptstyle\textsf{Up}}$ of the linear system in \cref{fig:2d_bandit}b, while dropping the interaction term $r_{\scriptscriptstyle\textsf{Interact}}$, resulting in a form of omitted-variable bias \citep{wooldridge2015introductory}. Solving the system gives the approximate value function where the interaction term $\beta$ appears in the approximation $\hat{Q}$ for all arms (\cref{fig:2d_bandit}c, details in \cref{appx:OVB}). 
% \end{example}

Note that $\hat{Q} = Q^*$ only when $\beta=0$, i.e., there is no interaction between the two sub-actions. We first consider the family of problems with $\alpha=1$ and $\beta\in[-4,4]$. In \cref{fig:2d_bandit_heatmaps}a, we measure the value approximation error $\mathrm{RMSE}(Q^*, \hat{Q})$, as well as the suboptimality $V^{\pi^*} - V^{\hat{\pi}} = \max_{\avec} Q^*(\avec) - Q^*(\argmax_{\avec} \hat{Q}(\avec))$ of the greedy policy defined by $\hat{Q}$ as compared to $\pi^*$. As expected, when $\beta=0$, $\hat{Q}$ is unbiased and has zero approximation error. When $\beta \neq 0$, $\hat{Q}$ is biased and RMSE $>0$; however, for $\beta \geq -1$, $\hat{Q}$ corresponds to a policy that correctly identifies the optimal action. 

We further investigate this phenomenon considering both $\alpha, \beta \in [-4,4]$ (to show all regions with interesting trends), measuring RMSE and suboptimality in the same way as above. As shown in \cref{fig:2d_bandit_heatmaps}b, the approximation error is zero only when $\beta=0$, regardless of $\alpha$. However, in \cref{fig:2d_bandit_heatmaps}c, for a wide range of $\alpha$ and $\beta$ settings, suboptimality is zero; this suggests that in those regions, even in the presence of bias (non-zero approximation error), our approach leads to an approximate value function that correctly identifies the optimal action. The irregular contour outlines multiple regions where this happens; one key region is when the two sub-actions affect the reward in the same direction (i.e., $\alpha \geq 0$) and their interaction effects also affect the reward in the same direction (i.e., $\beta \geq 0$). 
\vspace{-1ex}

\begin{figure}[h]
\setlength{\abovecaptionskip}{-5pt}
\setlength{\belowcaptionskip}{-5pt}
    \centering
    \begin{tabular}{lll}
    (a) & (b) & (c) \\
    \includegraphics[width=0.28\linewidth, trim=-20 0 0 40]{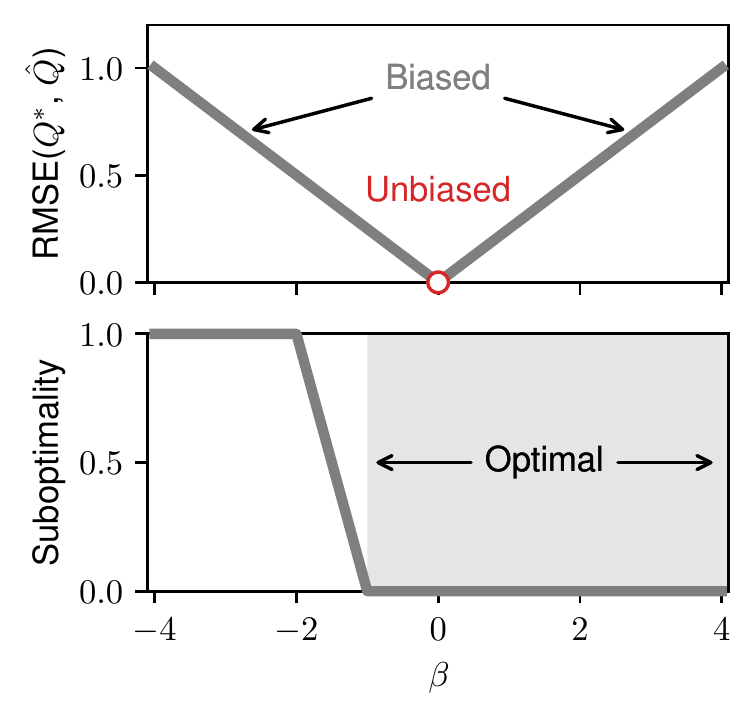} &
    \includegraphics[width=0.28\linewidth, trim=-10 0 10 30]{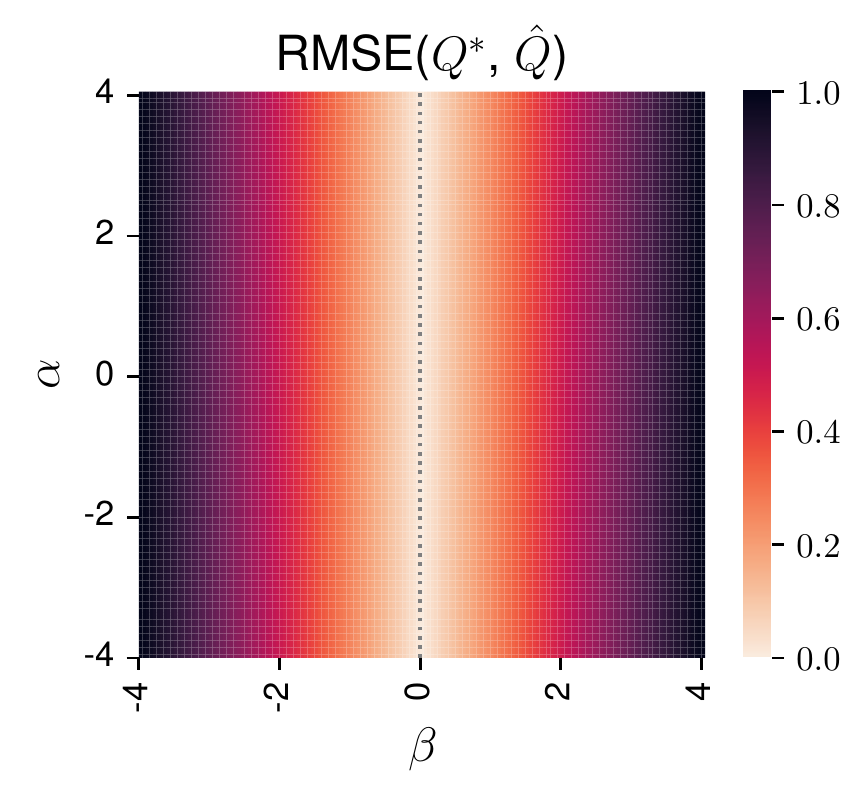} &
    \includegraphics[width=0.28\linewidth, trim=-10 0 10 30]{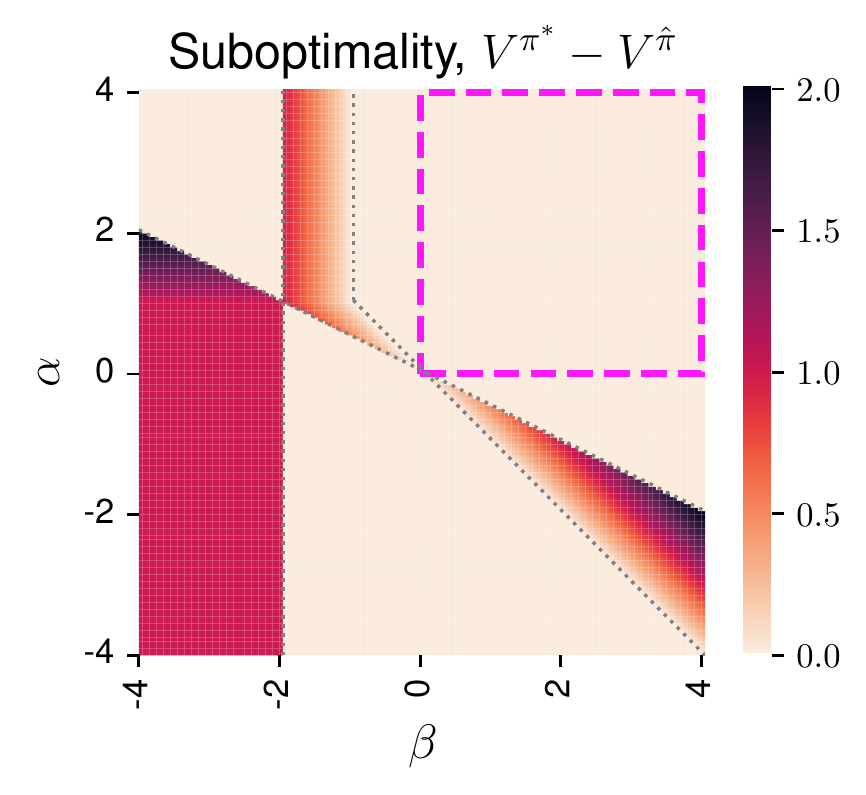}
    \end{tabular}
    \caption{(a) The approximation error and policy suboptimality of our approach for the bandit problem in \cref{fig:2d_bandit}a, for different settings of $\beta$ when $\alpha=1$. The Q-value approximation is unbiased only when $\beta=0$, but the corresponding approximate policy is optimal for a wider range of $\beta\geq -1$. (b-c) The approximation error and policy suboptimality of our approach for the bandit problem in \cref{fig:2d_bandit}a, for different settings of $\alpha$ and $\beta$. The Q-value approximation is unbiased only when $\beta=0$, but the corresponding approximate policy is optimal for a wide range of $\alpha$ and $\beta$ values. The highlighted region of zero suboptimality corresponds to $\alpha \geq 0$ and $\beta \geq 0$. }
    \label{fig:2d_bandit_heatmaps}
\end{figure}

\subsection{Practical Considerations: Are these Assumptions too Strong?} \label{sec:practical}

Based on our theoretical analysis, strong assumptions (\cref{sec:sufficient}) on the problem structure (though not necessary, \cref{sec:necessary}) are the only known way to guarantee the unbiasedness of our proposed linear approximation. It is thus crucial to understand the applicability (and inapplicability) of our approach in real-world scenarios. Exploring to what extent these assumptions hold in practice is especially important for safety-critical domains such as healthcare where incorrect actions (treatments) can have devastating consequences. Fortunately, RL tasks for healthcare are often equipped with significant domain knowledge, which serves as a better guide to inform the algorithm design than heuristics-driven reasoning alone \citep{sharma2017learning, tavakoli2018action,rashid2018qmix}. 
\vspace{-0.5ex}

Oftentimes, when clinicians treat conditions using multiple medications at the same time (giving rise to the factored action space), it is because each medication has a different ``mechanism of action,''  resulting in negligible or limited interactions. For example, several classes of medications are used in the management of chronic heart failure, and each has a unique and incremental benefits on patient outcomes \citep{komajda2018incremental}. Problems such as this satisfy the sufficient conditions in \cref{sec:sufficient} in spite of a non-factorized state space. Moreover, any small interactions would have a bounded effect on RL policy performance (according to \cref{appx:perf_bounds}). 
\vspace{-0.5ex}

Similarly, in the management of sepsis (which we consider in \cref{sec:mimic-sepsis}), fluids and vasopressors affect blood pressure to correct hypotension via different mechanisms \citep{gotts2016sepsis}. Fluid infusion increases ``preload'' by increasing the blood return to the heart to make sure the heart has enough blood to pump out \citep{guerin2015effects}. In contrast, common vasopressors (e.g., norepinephrine) increase ``inotropy'' by stimulating the heart muscle and increase peripheral vascular resistance to maintain perfusion to organs \citep{hamzaoui2010early,monnet2011norepinephrine}. Therefore, while the two treatments may appear to operate on the same part of the state space (e.g., they both increase blood pressure), in general they are not expected to interfere with each other. Recently, there has also been evidence suggesting that their combination can better correct hypotension \citep{hamzaoui2021combining}, which places this problem approximately in the regime discussed in \cref{sec:bias-suboptimal}. 
\vspace{-0.5ex}

In offline settings with limited historical data, the benefits of a reduction in variance can outweigh any potential small bias incurred in the scenarios above and lead to overall performance improvement (\cref{sec:bias-variance}). However, our approach is not suitable if the interaction is counter to the effect of the sub-actions (e.g., two drugs that raise blood pressure individually, but when combined lead to a decrease). In such scenarios, the resulting bias will likely lead to suboptimal performance (\cref{sec:bias-suboptimal}). Nevertheless, many drug-drug interactions are known and predictable \citep{saari2008effect,smithburger2012drug,webmd,epicddi}. In such cases, one can either explicitly encode the interaction terms or resort back to a combinatorial action space (\cref{appx:interactions}). While we focus on healthcare, there are other domains in which significant domain knowledge regarding the interactions among sub-actions is available, e.g., cooperative multi-agent games in finance where there is a higher payoff if agents cooperate (positive interaction effects) or intelligent tutoring systems that teach basic arithmetic operations as well as fractions (which are distinct but related skills). For these problems, this knowledge can and should be leveraged.

\section{Experimental Evaluations} \label{sec:experiments}

We apply our approach to two offline RL problems from healthcare: a simulated and a real-data problem, both having an action space that is composed of several sub-action spaces. These problems correspond to settings discussed in \cref{sec:practical} where we expect our proposed approach to perform well. In the following experiments, we compare our proposed approach (\cref{fig:arch}b), which makes assumptions regarding the effect of sub-actions in combination with other sub-actions, against a common baseline that considers a combinatorial action space (\cref{fig:arch}a). 

\vspace{-0.5em}
\subsection{Simulated Domain: Sepsis Simulator} \label{sec:sepsis-sim}
\vspace{-0.5em}

\textbf{Rationale.} First, we apply our approach to a simulated domain modeled after the physiology of patients with sepsis \citep{oberst2019gumbel}. Although the policies are learned ``offline,'' a simulated setting allows us to evaluate the learned policies in an ``online'' fashion without requiring offline policy evaluation (OPE). 

\textbf{Setup.} Following prior work \citep{tang2021model}, a state is represented by a feature vector $\mathbf{x}(s) \in \{0,1\}^{21}$ that uses a one-hot encoding for each underlying variable (diabetes status, heart rate, blood pressure, oxygen concentration, glucose; all of which are discrete). The action space is composed of 3 binary treatments: antibiotics, vasopressors, and mechanical ventilation, such that $\mathcal{A} = \mathcal{A}_{\textrm{abx}} \times \mathcal{A}_{\textrm{vaso}} \times \mathcal{A}_{\textrm{mv}}$, with $\mathcal{A}_{\textrm{abx}} = \mathcal{A}_{\textrm{vaso}} = \mathcal{A}_{\textrm{mv}} = \{0,1\}$ and $|\mathcal{A}| = 2^3 = 8$. Each treatment affects certain vital signs and may raise or lower their values with pre-specified probabilities (precise definition in \citep{tang2021model}). A patient is discharged alive when all vitals are normal and all treatments have been withdrawn; death occurs if 3 or more vitals are abnormal. Rewards are sparse and only assigned at the end of each episode ($+1$ for survival and $-1$ for death), after which the system transitions into the respective absorbing state. Episodes are truncated at a maximum length of 20 following \citep{oberst2019gumbel} (where no terminal reward is assigned). Here, the MDP partly satisfies the sufficient conditions outlined in \cref{sec:theory}. For example, oxygen saturation (which can be seen as a state abstraction) is only affected by mechanical ventilation, whereas heart rate is only affected by antibiotics. However, blood pressure is affected by both antibiotics and vasopressors, meaning the effects of these two sub-actions are \textit{not} independent. 

\textbf{Offline learning.} First, we generated datasets with different sample sizes following different behavior policies. We ran fitted Q-iteration for up to 50 iterations using a neural network function approximator, selecting the early-stopping iteration based on ground-truth policy performance. Each setting of sample size and behavior policy was repeated 10 times with different random seeds. Additional details are described in \cref{appx:sepsisSim-impl}. 

\textbf{Results.} \cref{fig:sepsisSim-results} compares median performance of the proposed approach vs. the baseline over the 10 runs (error bars are interquartile ranges). We considered behavior policies that take the optimal action with probability $\rho$ and select randomly among non-optimal actions with probability $1-\rho$. 

\textit{How does sample size affect performance?}
We first look at a uniformly random behavior policy ($\rho=1/|\Acal|=0.125$, \cref{fig:sepsisSim-results} center). As expected, larger sample sizes (i.e., more training episodes) lead to better policy performance for both the baseline and proposed approaches. For smaller sample sizes ($<5000$), the proposed approach consistently outperforms the baseline. As sample size increases further, the performance gap shrinks and eventually the baseline overtakes our proposed approach. This is because variance decreases with increasing sample size but the bias incurred by the factored approximation does not change. Once there are enough samples, reductions in variance are no longer advantageous and the incurred bias dominates the performance. Overall, this shows that our approach is promising especially for datasets with limited sample size. 

\textit{How does behavior policy affect performance?}
As we anneal the behavior policy closer to the optimal policy ($\rho > 0.125$, \cref{fig:sepsisSim-results} left two), we reduce the randomness in the behavior policy and limit the amount of exploration possible at the same sample size. The same overall trend largely holds. On the other hand, when the probability of taking the optimal action is less than random ($\rho < 0.125$, \cref{fig:sepsisSim-results} right two), the proposed approach achieves better performance than the baseline with an even larger gap for limited sample sizes ($\le 10^3$). Without observing the optimal actions ($\rho=0$), the baseline performs relatively poorly, even for large sample sizes. In comparison, our approach accounts for relationships among actions to some extent and is thus able to better generalize to the unobserved and underexplored optimal actions, thereby outperforming the baseline. 

\textbf{Takeaways.} In a challenging situation where our theoretical assumptions do not perfectly hold, our proposed approach matches or outperforms the baseline, especially for smaller sample sizes. 

\begin{figure}[t]
    \setlength{\belowcaptionskip}{-10pt}
    \centering
    \includegraphics[width=\textwidth, trim=10 10 0 15]{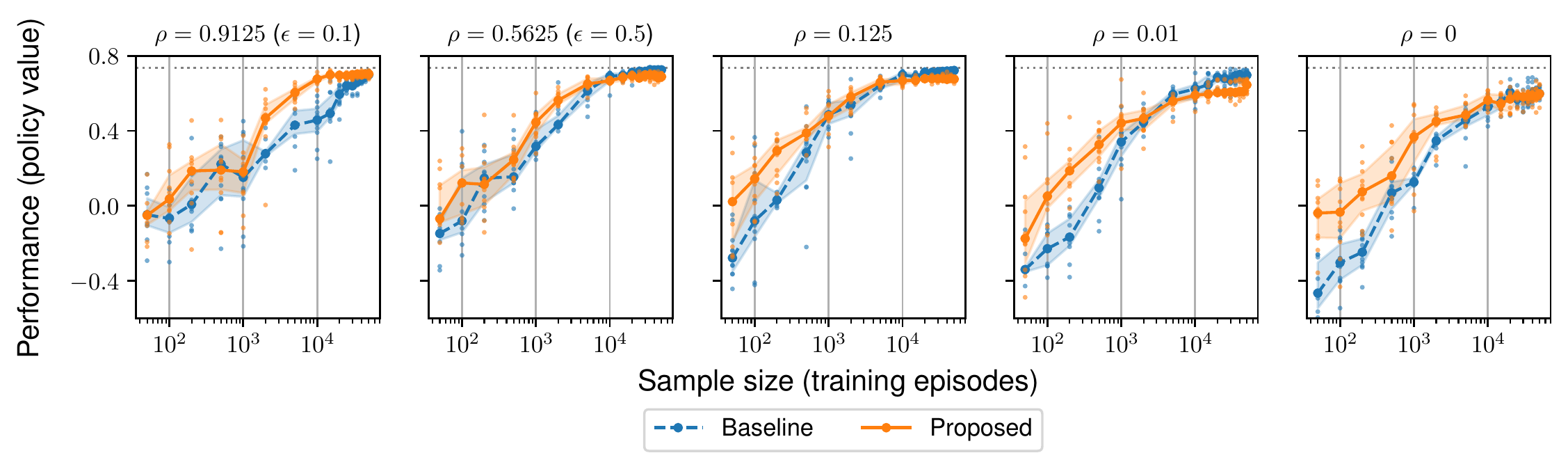}
    \caption{Performance on the sepsis simulator across sample sizes and behavior policies. Plots display the performance over 10 runs, with the trend lines showing medians and error bars showing interquartile ranges. $\rho$ is the probability of taking the optimal action under the behavior policies used to generate offline datasets from the simulator. The left two plots show two $\epsilon$-greedy policies ($\rho > 0.125$; conversion: $\rho = (1-\epsilon) + \epsilon/|\Acal|$); the middle plot shows a uniformly random policy ($\rho = 0.125$); the right two plots show two policies that undersample the optimal action, $\rho < 0.125$; from left to right, $\rho$ decreases. Across different data distributions, our proposed approach outperforms the baseline at small sample sizes, and closely matches baseline performance at large sample sizes. Dashed lines denote the value of the optimal policy, which equals to $0.736$. }
    \label{fig:sepsisSim-results}
\end{figure}

\vspace{-0.5em}
\subsection{Real Healthcare Data: Sepsis Treatment in MIMIC-III} \label{sec:mimic-sepsis}
\vspace{-0.5em}

\textbf{Rationale.} We apply our method to a real-world example of learning optimal sepsis treatment policies for patients in the intensive care unit. Acknowledging the challenging nature of OPE for quantitative comparisons \citep{gottesman2018evaluating,gottesman2019guidelines}, here we qualitatively inspect the learned policies using clinical domain knowledge. 

\textbf{Setup.} Originally introduced by \citep{komorowski2018artificial}, we use the improved formulations of this task as per \citep{tang2020clinician} and \citep{killian2020empirical}. After applying the specified inclusion and exclusion criteria to the de-identified MIMIC-III database \citep{johnson2016mimic}, we obtained a cohort of 19,287 patients and performed a 70/15/15 split for training, validation and testing. For each patient, their data include 10 time-invariant demographic and contextual features and a 33-dimensional time series collected at 4h intervals, consisting of measurements from up to 24h before until up to 48h after sepsis onset. We used a recurrent neural network (RNN) with long short-term memory (LSTM) cells to create an approximate information state \citep{subramanian2019approximate} to summarize the history into a $d_{\mathcal{S}}$-dimensional embedding vector. A terminal reward of 100 is assigned for 48h survival and 0 otherwise. Intermediate rewards are all 0. $\gamma$ for learning is $0.99$ and for evaluation is $1$. Actions pertain to treatment decisions in each 4h interval, representing total volume of intravenous (IV) fluids and amount of vasopressors administered, resulting in a $5 \times 5$ factored action space. 

\textbf{Offline learning.} After learning the state representations, we apply variants of discrete-action batch-constrained Q-learning (BCQ) \citep{fujimoto2019BCQd,fujimoto2019BCQ}, where the baseline uses the combinatorial action space and the proposed approach incorporates the linear decomposition induced by the factored action space. The Q-networks were trained for a maximum of $10,000$ iterations, with checkpoints saved every $100$ iterations. We performed model selection \citep{tang2021model} over the saved checkpoints (candidate policies) by evaluating policy performance using the validation set with OPE. Specifically, we estimated policy value using weighted importance sampling (WIS) and measured effective sample size (ESS), where the behavior policy was estimated using $k$ nearest neighbors in the embedding space. Following previous work \citep{liu2022avoiding}, the final policies were selected by maximizing validation WIS with ESS of $\geq 200$ (we consider other thresholds in \cref{appx:mimic-results}), for which we report results on the test set. 

\textbf{Results.} We visualize the validation performance over all candidate policies. \cref{tab:mimic_quantitative}-left shows that the performance Pareto frontier (in terms of WIS and ESS) of the proposed approach generally dominates the baseline. 

\textit{Quantitative comparisons.} Evaluating the final selected policies on the test set (\cref{tab:mimic_quantitative}-right) shows that the proposed factored BCQ achieves a higher policy value (estimated using WIS) than baseline BCQ at the same level of ESS. In addition, both policies have a similar level of agreement with the clinician policy, comparable to the average agreement among clinicians. 

\textit{Qualitative comparisons. } In \cref{fig:mimic_policy_heatmap}a, we compare the distributions of recommended actions by the clinician behavior policy, baseline BCQ and factored BCQ, as evaluated on the test set. While overall the policies look rather similar, in that the most frequently recommended action corresponds to low doses of IV fluids $<$500mL with no vasopressors, there are notable differences for key parts of the action space. In particular, baseline BCQ almost never recommends higher doses of IV fluids $>$500 mL, either alone or in combination with vasopressors, whereas both clinician and factored BCQ recommend IV fluids $>$500 mL more frequently. These actions are typically used for critically ill patients, for whom the Surviving Sepsis Campaign guidelines recommends up to $>$2L of fluids \citep{evans2021surviving}. We hypothesize that this difference is due to a higher level of heterogeneity in the patient states for which actions with high IV fluid doses were observed, compared to the remaining actions with lower doses of IV fluids. To further understand this phenomenon, we measure the per-action state heterogeneity in the test set by computing, for each action, the standard deviation (averaged over the embedding dimensions) of all RNN state embeddings from which that action is taken according to the behavior policy. As shown in \cref{fig:mimic_policy_heatmap}b, actions with higher IV fluids generally have larger standard deviations, supporting our hypothesis. The larger heterogeneity combined with lower sample sizes makes it difficult for baseline BCQ to correctly infer the effects of these actions, as it does not leverage the relationship among actions. In contrast, our approach leverages the factored action space and can thus make better inferences about these actions. 

\begin{figure}[t]
    \setlength{\belowcaptionskip}{-8pt}
    \centering
    \begin{tabular}{cc}
    \parbox{0.3\textwidth}{\includegraphics[scale=0.55,trim=10 10 0 20]{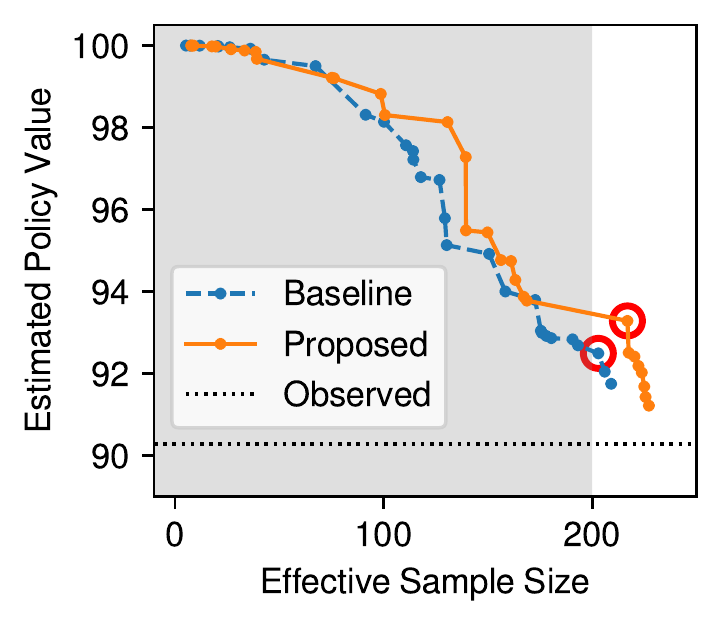}} & 
\scalebox{0.8}{
    \begin{tabular}{c|cc|c}
    \toprule
    \textbf{Policy} & \textbf{Baseline BCQ} & \textbf{Factored BCQ} & \textbf{Clinician} \\
    \midrule
    Test WIS & 90.44 $\pm$ 2.44 & 91.62 $\pm$ 2.12 & 90.29 $\pm$ 0.51 \\
    Test ESS & 178.32 $\pm$ 11.42 & 178.32 $\pm$ 11.96 & 2894 \\
    \midrule
    \makecell{\% agreement \\ with clinician} & 62.42\% & 62.37\% & 57.16\% \\
    \bottomrule
    \end{tabular}
}
    \end{tabular}
    \caption{Left - Pareto frontiers of validation performance for the candidate policies (all points plotted in \cref{fig:mimic_validation_full}). The shaded region does not meet the ESS cutoff of $\geq 200$. The red circles indicate the selected models (based on best validation WIS) for baseline and proposed (both have a BCQ threshold of $\tau=0.5$). Right - Performance on test set, $\pm$ standard errors from 100 bootstraps. }
    \label{tab:mimic_quantitative}
\end{figure}

\begin{figure*}[t]
    \centering
    \centerline{
    \begin{tabular}{ccccc}
    \multicolumn{4}{l}{(a)} & \multicolumn{1}{l}{(b)}\\[-2.8ex]
    \includegraphics[scale=0.38,trim=-5 5 60 12, clip]{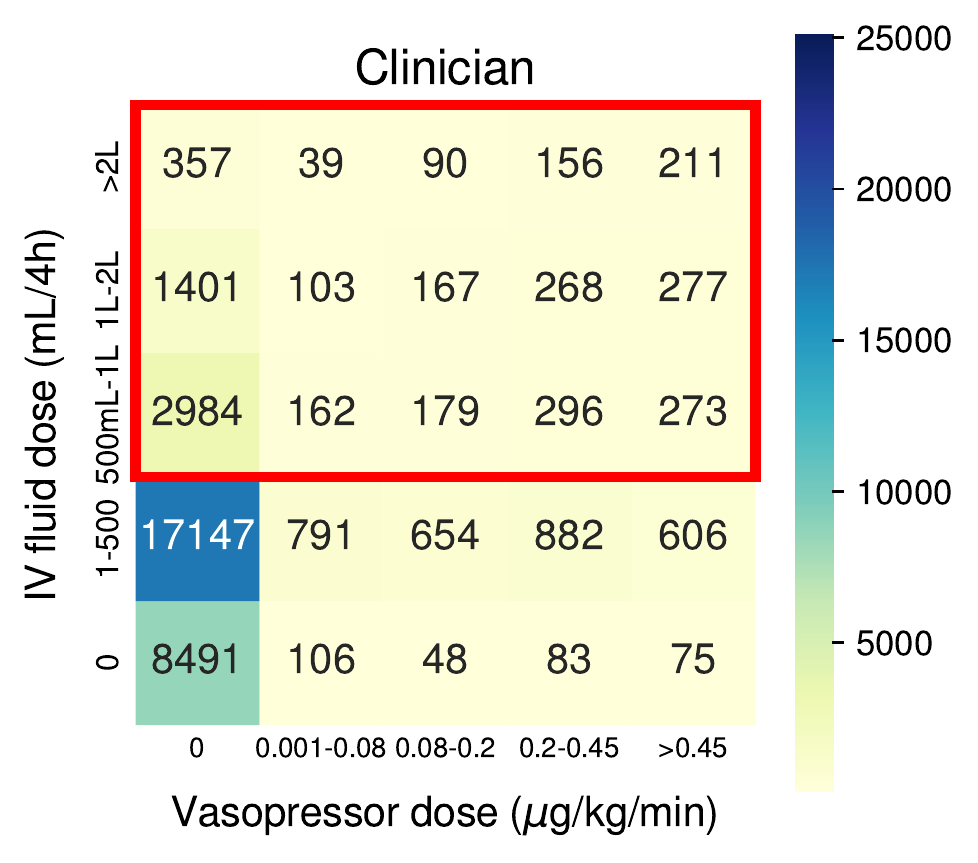} &
    \includegraphics[scale=0.38,trim=5 5 60 12, clip]{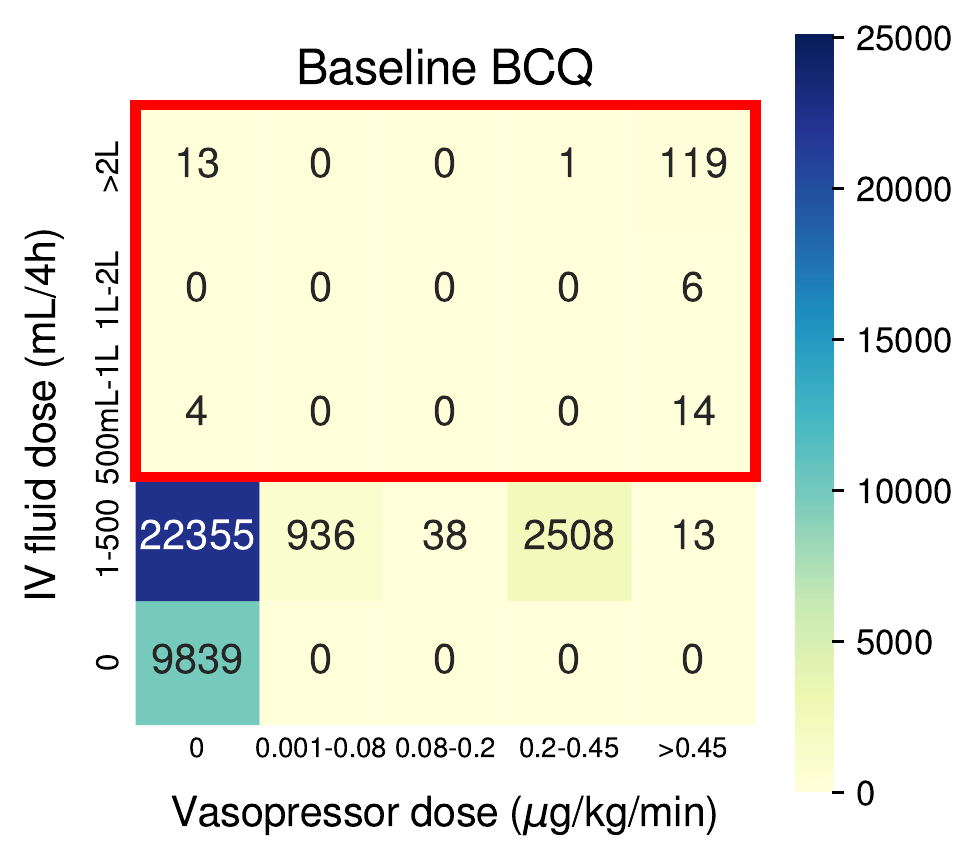} &
    \includegraphics[scale=0.38,trim=5 5 60 12, clip]{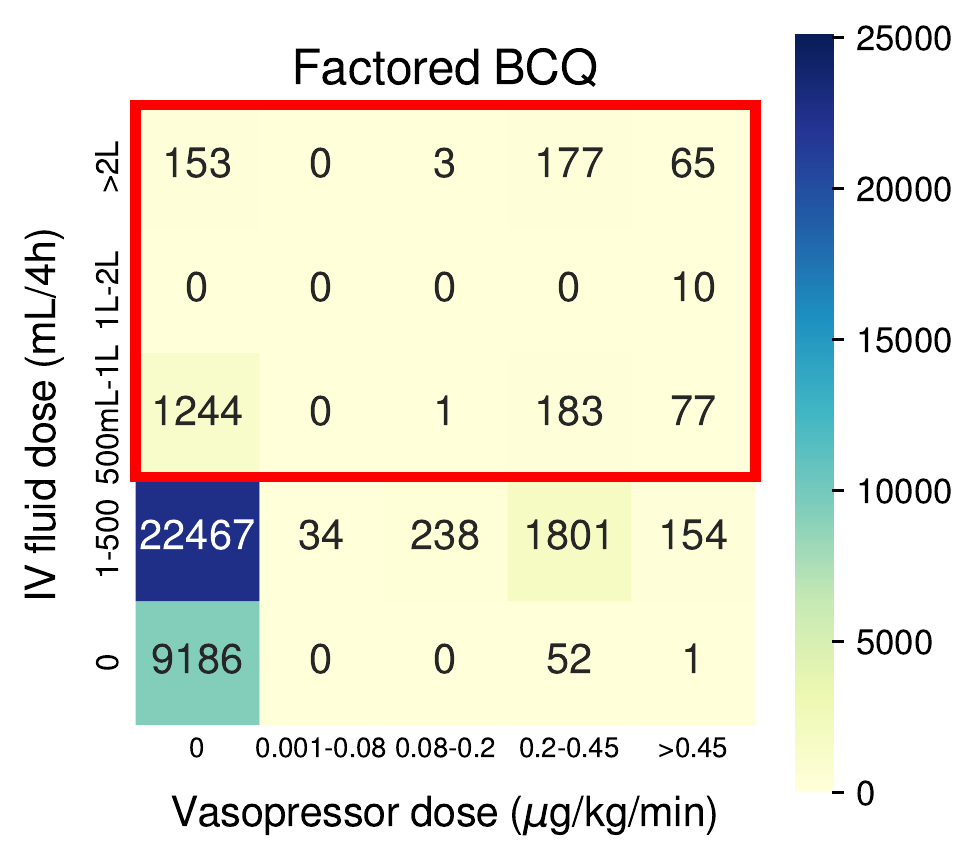} &
    \includegraphics[scale=0.38,trim=225 5 0 0, clip]{fig/mimic_test_policy_BCQf.pdf} &
    \includegraphics[scale=0.38,trim=-5 5 0 0]{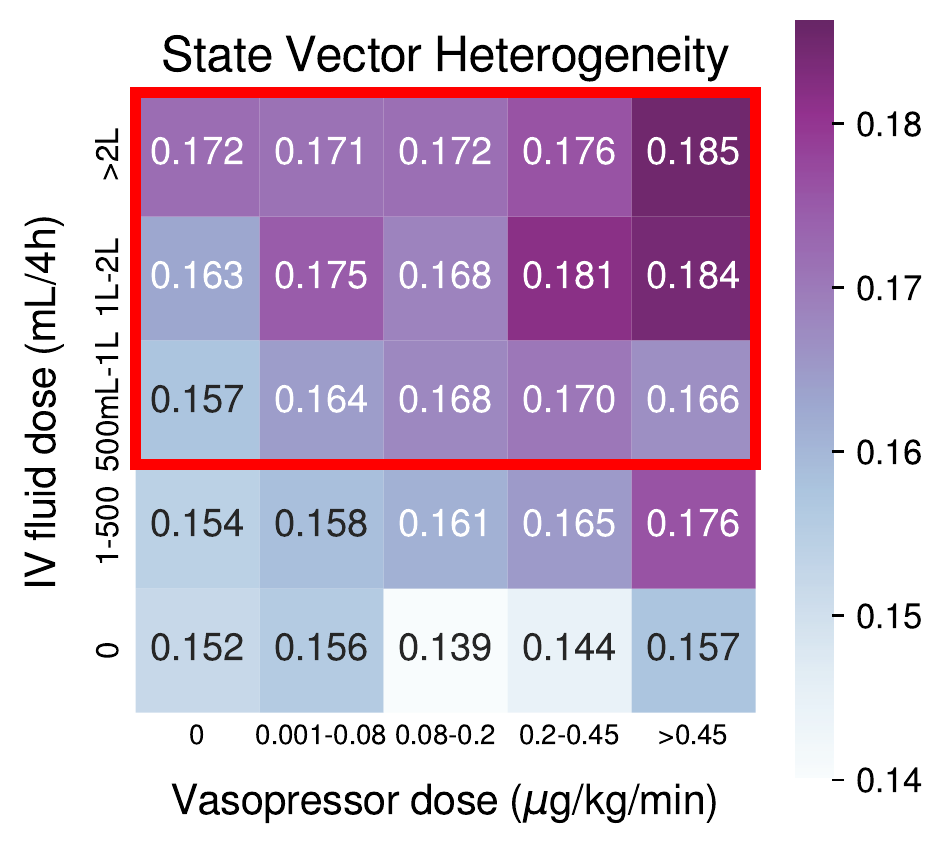}
    \end{tabular}
    }
    \vspace{-0.5em}
    \caption{(a) Qualitative comparison of policies. (b) Per-action state heterogeneity, measured as the standard deviation of all state embeddings from which a particular action is observed in the dataset, averaged over state embedding dimensions. Actions with higher IV fluid doses exhibited greater heterogeneity in the observed states from which those actions were taken (by the clinician policy). }
    \label{fig:mimic_policy_heatmap}
    \vspace{-1.5em}
\end{figure*}

\textbf{Takeaways.} Applied to real clinical data, our proposed approach outperforms the baseline quantitatively and recommends treatments that align better with clinical knowledge. While promising, these results are based in part on OPE, which has many issues \citep{gottesman2018evaluating,gottesman2019guidelines}. We stress that further investigation and close collaboration with clinicians are essential before such RL algorithms are used in practice.

\vspace{-0.5ex}
\section{Related Work} \label{sec:related}
\vspace{-1.5ex}

For many years, the factored RL literature focused exclusively on state space factorization \citep{koller1999computing,guestrin2003efficient,strehl2007efficient,delgado2011efficient}. More recently, interest in action space factorization has grown, as RL is applied in increasingly more complex planning and control tasks. In particular, researchers have previously considered the model-based setting with known MDP factorizations in which both state and action spaces are factorized \citep{raghavan2012planning,raghavan2013symbolic,osband2014near,lu2021causal}. For model-free approaches, others have studied methods for factored actions with a policy component (i.e., policy-based or actor-critic) \citep{sallans2004reinforcement,sharma2017learning, van2020q, pierrot2021factored, spooner2021factored}. In contrast, our work considers value-based methods as those have been the most successful in offline RL \citep{levine2020offlineRL}. 

Among prior work with a value-based component (e.g., Q-network), the majority pertains to multi-agent \citep{sunehag2018value,rashid2018qmix,son2019qtran,matignon2012independent,tampuu2017multiagent} or multi-objective \citep{spooner2021factored} problems that impose known, explicit assumptions on the state space or the reward function. Notably, \citet{son2019qtran} established theoretical conditions for factored optimal actions (called ``Individual-Global-Max'') for multi-agent RL and motivated subsequent works \citep{wang2021qplex,wang2021towards}; their result differs from our contribution on the unbiasedness of factored Q-functions (instead of actions) for single-agent RL. In the online setting for single-agent deep RL, \citet{sharma2017learning} and \citet{tavakoli2018action} incorporated factored action spaces into Q-network architecture designs, but did not provide a formal justification for the linear decomposition. Others have empirically compared various ``mixing'' functions to combine the values of sub-actions \citep{sharma2017learning,rashid2018qmix}. In contrast, while our work only considers the linear decomposition function, we examine its theoretical properties and provide justifications for using this approach in practical problems, especially in offline settings. Our linear Q-decomposition is related to that of \citet{swaminathan2017off} who also applied a linearity assumption for off-policy evaluation, but for combinatorial contextual bandits rather than RL. Our insights on the bias-variance trade-off is also related to a concurrent work by \citet{saito2022off} who proposed efficient off-policy evaluation for bandit problems with large (but not necessarily factored) action spaces. In appendix \cref{tab:compare}, we further outline the differences of our work compared to the existing literature. 

Finally, the sufficient conditions we establish are related to, but different from, those identified by \citet{van2017hybrid} and \citet{juozapaitis2019explainable} who considered reward decompositions in the absence of factored actions. Related, \citet{metz2018discrete} proposed an approach that sequentially predicts values for discrete dimensions of a transformed continuous action space, but assume an \textit{a priori} ordering of action dimensions, which we do not; \citet{pierrot2021factored} studied a different form of action space factorization where sub-actions are sequentially selected in an autoregressive manner. Complementary to our work, \citet{tavakoli2021learning} proposed to organize the sub-actions and interactions as a hypergraph and linearly combining the values; our theoretical results on the linear decomposition nonetheless apply to their setting where the sub-action interactions are explicitly identified and encoded.

\vspace{-0.5ex}
\section{Conclusion}
\vspace{-1.5ex}

To better leverage factored action spaces in RL, we developed an approach to learning policies that incorporates a simple linear decomposition of the Q-function. We theoretically analyze the sufficient and necessary conditions for this parameterization to yield unbiased estimates, study its effect on variance reduction, and identify scenarios when any resulting bias does not lead to suboptimal performance. We also note how domain knowledge may be used to inform the applicability of our approach in practice, for problems where any possible bias is negligible or does not affect optimality. Through empirical experiments on two offline RL problems involving a simulator and real clinical data, we demonstrate the advantage of our approach especially in settings with limited sample sizes. We provide further discussions on limitations, ethical considerations and societal impacts in \cref{sec:discussion}. Though motivated by healthcare, our approach could apply more broadly to scale RL to other applications (e.g., education) involving combinatorial action spaces where domain knowledge may be used to verify the theoretical conditions. Future work should consider the theoretical implications of linear Q decompositions when combined with other offline RL-specific algorithms \citep{levine2020offlineRL}. Given the challenging nature of identifying the best treatments from offline data, our proposed approach may also be combined with other RL techniques that do not aim to identify the single best action (e.g., learning dead-ends \citep{fatemi2021medical} or set-valued policies \citep{tang2020clinician}).

\section*{Acknowledgments}
This work was supported by the National Science Foundation (NSF; award IIS-1553146 to JW; award IIS-2007076 to FDV; award IIS-2153083 to MM) and the National Library of Medicine of the National Institutes of Health (NLM; grant R01LM013325 to JW and MWS). The views and conclusions in this document are those of the authors and should not be interpreted as necessarily representing the official policies, either expressed or implied, of the National Science Foundation, nor of the National Institutes of Health. This work was supported, in part, by computational resources and services provided by \href{https://arc.umich.edu/}{Advanced Research Computing}, a division of Information and Technology Services (ITS) at the University of Michigan, Ann Arbor. The authors would like to thank Adith Swaminathan, Tabish Rashid, and members of the \href{https://wiens-group.engin.umich.edu/}{MLD3 group} for helpful discussions regarding this work, as well as the reviewers for constructive feedback.

\section*{Data and Code Availability}
The code for all experiments is available at \url{https://github.com/MLD3/OfflineRL_FactoredActions}. The sepsis simulator is based on prior work with public implementation at \url{https://github.com/clinicalml/gumbel-max-scm}. The MIMIC-III database used in the real-data experiments of this paper is publicly available through the PhysioNet website: \url{https://physionet.org/content/mimiciii/1.4/}. The cohort definition, extraction and preprocessing code are based on prior work with publicly available implementation at \url{https://github.com/microsoft/mimic_sepsis}.

{\small
\bibliography{ref}
\bibliographystyle{unsrtnat}
}

%%%%%%%%%%%%%%%%%%%%%%%%%%%%%%%%%%%%%%%%%%%%%%%%%%%%%%%%%%%%

\section*{Checklist}

% %%% BEGIN INSTRUCTIONS %%%
% The checklist follows the references.  Please
% read the checklist guidelines carefully for information on how to answer these
% questions.  For each question, change the default \answerTODO{} to \answerYes{},
% \answerNo{}, or \answerNA{}.  You are strongly encouraged to include a {\bf
% justification to your answer}, either by referencing the appropriate section of
% your paper or providing a brief inline description.  For example:
% \begin{itemize}
%   \item Did you include the license to the code and datasets? \answerYes{See Section~\ref{gen_inst}.}
%   \item Did you include the license to the code and datasets? \answerNo{The code and the data are proprietary.}
%   \item Did you include the license to the code and datasets? \answerNA{}
% \end{itemize}
% Please do not modify the questions and only use the provided macros for your
% answers.  Note that the Checklist section does not count towards the page
% limit.  In your paper, please delete this instructions block and only keep the
% Checklist section heading above along with the questions/answers below.
% %%% END INSTRUCTIONS %%%

\begin{enumerate}

\item For all authors...
\begin{enumerate}
  \item Do the main claims made in the abstract and introduction accurately reflect the paper's contributions and scope?
    \answerYes{}
  \item Did you describe the limitations of your work?
    \answerYes{Section 3.4, Appendix A}
  \item Did you discuss any potential negative societal impacts of your work?
    \answerYes{Appx A}
  \item Have you read the ethics review guidelines and ensured that your paper conforms to them?
    \answerYes{}
\end{enumerate}

\item If you are including theoretical results...
\begin{enumerate}
  \item Did you state the full set of assumptions of all theoretical results?
    \answerYes{Sec 3, Appx B}
        \item Did you include complete proofs of all theoretical results?
    \answerYes{Appendix B}
\end{enumerate}

\item If you ran experiments...
\begin{enumerate}
  \item Did you include the code, data, and instructions needed to reproduce the main experimental results (either in the supplemental material or as a URL)?
    \answerYes{see Data \& Code Availability}
  \item Did you specify all the training details (e.g., data splits, hyperparameters, how they were chosen)?
    \answerYes{Sec 4.1-4.2, Appendix D.1-D.2}
        \item Did you report error bars (e.g., with respect to the random seed after running experiments multiple times)?
    \answerYes{Fig 6}
        \item Did you include the total amount of compute and the type of resources used (e.g., type of GPUs, internal cluster, or cloud provider)?
    \answerNo{our experiments only involved small neural networks}
\end{enumerate}

\item If you are using existing assets (e.g., code, data, models) or curating/releasing new assets...
\begin{enumerate}
  \item If your work uses existing assets, did you cite the creators?
    \answerYes{MIMIC-III by \citet{johnson2016mimic}, sepsisSim by \citet{oberst2019gumbel}}
  \item Did you mention the license of the assets?
    \answerNo{please refer to the website of the creator of MIMIC-III \url{https://physionet.org/content/mimiciii/1.4/}}
  \item Did you include any new assets either in the supplemental material or as a URL? {\color{gray}[N/A]}
  \item Did you discuss whether and how consent was obtained from people whose data you're using/curating?
    \answerNo{please refer to the website of the creator of MIMIC-III \url{https://physionet.org/content/mimiciii/1.4/}}
  \item Did you discuss whether the data you are using/curating contains personally identifiable information or offensive content?
    \answerYes{Sec 4.2, MIMIC-III is de-identified}
\end{enumerate}

\item If you used crowdsourcing or conducted research with human subjects...
\begin{enumerate}
  \item Did you include the full text of instructions given to participants and screenshots, if applicable?
    \answerNA{}
  \item Did you describe any potential participant risks, with links to Institutional Review Board (IRB) approvals, if applicable?
    \answerNA{}
  \item Did you include the estimated hourly wage paid to participants and the total amount spent on participant compensation?
    \answerNA{}
\end{enumerate}

\end{enumerate}

%%%%%%%%%%%%%%%%%%%%%%%%%%%%%%%%%%%%%%%%%%%%%%%%%%%%%%%%%%%%

\clearpage
\appendix

\section{Additional Discussion} \label{sec:discussion}

\begin{table}[b]
\vspace{-1.5em}
{
\centering
\caption{Qualtitative comparisons of this work with the existing literature. }
\label{tab:compare}
\begin{tabular}{l|cccccc}
\toprule
 & \makecell[l]{Policy- \\based?} & \makecell[l]{Model- \\based?} & \makecell[l]{Value- \\based?} & \makecell[l]{Linear value \\ decomposition?}  & \makecell[l]{Known state \\ factorization or \\ abstraction?} & \makecell[l]{Unbiasedness \\ guarantees?} \\
\midrule
\citep{osband2014near,lu2021causal} & & \cmark & & & \cmark & \\ %Osband \& Van Roy 2014; Lu et al. 2021
\citep{sallans2004reinforcement} & \cmark & & & & \cmark &\\ %Sallans \& Hinton 2004
\citep{pierrot2021factored,spooner2021factored} & \cmark & & & & \xmark & \\ %Pierrot et al. 2021; Spooner et al. 2021
\citep{koller1999computing,guestrin2003efficient,strehl2007efficient,delgado2011efficient} & & & \cmark (V) & \cmark & \cmark & \\ %Koller \& Par 1999; Guestrin et al. 2003; Strehl et al. 2007; Delgado et al. 2011
\citep{sharma2017learning} & \cmark & & \cmark (Q) & \phantom{$^1$}\cmark$^1$ & \cmark & \\ %Sharma et al. 2017
VDN \citep{sunehag2018value} & & & \cmark (Q) & \cmark & \cmark & \\ %Sunehag et al. 2017
QMIX \citep{rashid2018qmix} & & & \cmark (Q) & *$^2$ & \cmark & \\ %Rashid et al. 2018
BDQ \citep{tavakoli2018action} & & & \cmark (Q) & *$^2$ & \xmark & \\ %Tavakoli et al. 2018
\textbf{This work} & & & \cmark (Q) & \cmark & \xmark & \cmark \\
\bottomrule
\end{tabular}
}
\footnotesize{$^1$Empirically tested various “combination” functions including linear. $^2$Both QMIX and BDQ do not aggregate the sub-Q functions; instead, they aggregate the argmax sub-actions.}
\vspace{-1.5em}
\end{table}

\textbf{Computational Efficiency.} While our main analysis focuses on statistical efficiency (variance) and its trade-off with approximation error (bias), here we outline some considerations on computational efficiency. To compute the values for all output heads in \cref{fig:arch}, there is a clear saving of computational cost by our approach with a linear complexity $O(D)$ (measured in flops) in the number of sub-actions, whereas the baseline has an exponential complexity $O(\exp(D))$. We consider two common inference operations after the values of the output heads are computed: $\max_{\avec} Q(s,\avec)$ and $\argmax_{\avec} Q(s,\avec)$. For both operations, the baseline has an exponential time complexity of $O(\exp(D))$. For our proposed approach, an optimized implementation has a linear time complexity of $O(D)$: one can perform argmax/max per sub-action and then concatenate/sum the results. In our current code release, we did not implement the optimized version; instead, we made use of the sub-action featurization matrix defined in \cref{appx:subspace_proof} so that automatic differentiation can be applied directly. This implementation is computationally more expensive than our analysis above and than the baseline: the forward pass includes a dense matrix multiplication with time complexity $O(D \exp(D))$ flops, followed by an $O(\exp(D))$ argmax/max operation. In settings where computational complexity might be a bottleneck (especially at inference time), we recommend using the featurization matrix implementation for learning and the optimized version for inference. 

\textbf{Limitations.} Our theoretical analysis in \cref{sec:theory} focuses on the ``realizability'' condition of the linear function class \citep{chen2019infotheory}, where we are interested in guarantees of zero approximation error, i.e., whether the true $Q^*$ lies within the linear function class. In principle, it is possible to find $Q^*$ given a realizable function class (e.g., by enumerating all member functions). However, when Q-learning-style iterative algorithms are used in practice, its convergence relies on a stronger ``completeness'' condition, as discussed in \citep{chen2019infotheory,xie2021batch,zhan2022offline}. We did not investigate how our proposed form of parameterization (and the specific shape of bias introduced) interacts with the learning procedure, and this is an interesting direction for future work (\citet{wang2021towards} studied this for linear value factorization in the context of FQI but for multi-agent RL). 

\textbf{Ethical Considerations and Societal Impact.} In general, policies computationally derived using RL must be carefully validated before they can be used in high-stakes domains such as healthcare. Our linear parameterization implicitly makes an independence assumption with respect to the sub-actions, allowing the Q-function to generalize to sub-action combinations that are underexplored (and even unexplored) in the offline data (as shown in \cref{sec:sepsis-sim}). When the independence assumptions are valid (according to domain knowledge), this is a case of a ``free lunch'' as we can reduce variance without introducing any bias. However, inaccurate or incomplete domain knowledge may render the independence assumptions invalid and cause the agent to incorrectly generalize to dangerous actions (e.g., learned policy recommends drug combinations with adverse side effects, see \cref{sec:practical}). This misuse may be alleviated by incorporating additional offline RL safeguards to constrain the learned policy (e.g., BCQ was used in \cref{sec:mimic-sepsis} to restrict the learned policy to not take rarely observed sub-action combinations). Still, to apply RL in healthcare and other safety-critical domains, it is important to consult and closely collaborate with domain experts (e.g., clinicians for healthcare problems) to come up with meaningful tasks and informed assumptions, and perform thorough evaluations involving both the quantitative and qualitative aspects \citep{gottesman2018evaluating,gottesman2019guidelines}.

\newpage

\section{Detailed Theoretical Analyses} \label{appx:theory}

\subsection{Sufficient Condition: The Trivial Setting - $D$ Parallel MDPs} \label{appx:sufficient-trivial}
To build intuition, we first consider a related setting where $D$ MDPs are running in parallel. If every MDP evolves independently as controlled by its respective policy, then the total return from all $D$ MDPs should naturally be the sum of the individual returns from each MDP. Formally, we state the following proposition involving fully factored MDPs and factored policies. Here, we use the vector notation $\svec = [s_1, \cdots, s_D]$ to indicate the explicit state space factorization. 

\begin{definition} \label{def:factored-MDP}
Given MDPs $\Mcal_1 \cdots \Mcal_D$ where each $\Mcal_d$ is defined by $(\Scal_d, \Acal_d, p_d, r_d)$, a fully factored MDP $\Mcal =  \bigotimes_{d=1}^{D} \Mcal_d$ is defined by $\Scal, \Acal, p, r$ such that $\Scal = \bigotimes_{d=1}^{D} \Scal_d$, $\Acal = \bigotimes_{d=1}^{D} \Acal_d$, $p(\svec'|\svec,\avec) = \prod_{d=1}^{D} p_d(s'_d|s_d,a_d)$, and $r(\svec,\avec) = \sum_{d=1}^{D} r_d(s_d,a_d)$.
\end{definition}

\begin{definition} \label{def:factored-policy}
Given MDPs $\Mcal_1 \cdots \Mcal_d$ and policies $\pi_1 \cdots \pi_D$ where each $\pi_d: \Scal_d \rightarrow \Delta(\Acal_d)$, then a factored policy $\pi = \bigotimes_{d=1}^{D} \pi_j$ for the MDP $\Mcal = \bigotimes_{d=1}^{D} \Mcal_j$ is $\pi:\Scal\rightarrow\Delta(\Acal)$ such that $\pi(\avec|\svec) = \prod_{d=1}^{D} \pi_d(a_d|s_d)$.
\end{definition}

\begin{proposition} \label{thm:sufficient-factored}
The Q-function of policy $\pi = \bigotimes_{d=1}^{D} \pi_j$ for MDP $\Mcal = \bigotimes_{d=1}^{D} \Mcal_d$ can be expressed as \(Q_{\Mcal}^{\pi}(\svec,\avec) = \sum_{d=1}^{D} Q_{\Mcal_d}^{\pi_d}(s_d,a_d)\).
\end{proposition}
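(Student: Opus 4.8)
The plan is to exhibit the candidate function
\begin{equation*}
\tilde Q(\svec,\avec) := \sum_{d=1}^{D} Q_{\Mcal_d}^{\pi_d}(s_d,a_d)
\end{equation*}
and show it equals $Q_{\Mcal}^{\pi}$. Since $Q_{\Mcal}^{\pi}$ is the unique fixed point of the policy-evaluation Bellman operator $T^{\pi}$ for the pair $(\Mcal,\pi)$ (a $\gamma$-contraction in sup-norm, assuming bounded rewards and $\gamma<1$, or an episodic horizon), it suffices to verify that $\tilde Q$ satisfies the Bellman evaluation equation
\begin{equation*}
\tilde Q(\svec,\avec) = r(\svec,\avec) + \gamma\,\mathbb{E}_{\svec'\sim p(\cdot|\svec,\avec)}\,\mathbb{E}_{\avec'\sim\pi(\cdot|\svec')}\big[\tilde Q(\svec',\avec')\big].
\end{equation*}

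To carry out the check, I would substitute the factored forms from \cref{def:factored-MDP} and \cref{def:factored-policy}: $r(\svec,\avec)=\sum_d r_d(s_d,a_d)$, $p(\svec'|\svec,\avec)=\prod_d p_d(s'_d|s_d,a_d)$, and $\pi(\avec'|\svec')=\prod_d \pi_d(a'_d|s'_d)$. The crucial observation is that $\tilde Q(\svec',\avec')=\sum_d Q_{\Mcal_d}^{\pi_d}(s'_d,a'_d)$ is additive across coordinates, while the joint law of $(\svec',\avec')$ given $(\svec,\avec)$ is a product measure across coordinates; hence the expectation of a coordinatewise sum against a product measure collapses into a sum of one-dimensional expectations,
\begin{equation*}
\mathbb{E}\big[\tilde Q(\svec',\avec')\big]=\sum_{d=1}^{D}\mathbb{E}_{s'_d\sim p_d(\cdot|s_d,a_d)}\,\mathbb{E}_{a'_d\sim\pi_d(\cdot|s'_d)}\big[Q_{\Mcal_d}^{\pi_d}(s'_d,a'_d)\big].
\end{equation*}
Combining this with the additive reward and regrouping by coordinate, the right-hand side of the Bellman equation becomes $\sum_d\big(r_d(s_d,a_d)+\gamma\,\mathbb{E}_{s'_d}\mathbb{E}_{a'_d}[Q_{\Mcal_d}^{\pi_d}(s'_d,a'_d)]\big)$, and each summand is exactly $Q_{\Mcal_d}^{\pi_d}(s_d,a_d)$ by the Bellman evaluation equation for $(\Mcal_d,\pi_d)$. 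The sum is $\tilde Q(\svec,\avec)$, so $\tilde Q$ is the fixed point and the claim follows.

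Two alternative routes are available and worth noting. One, matching the induction style used for \cref{thm:sufficient-abstract}, sets $Q^{(0)}\equiv 0$, $Q^{(k+1)}=T^{\pi}Q^{(k)}$, proves $Q^{(k)}(\svec,\avec)=\sum_d Q^{(k)}_{\Mcal_d,\pi_d}(s_d,a_d)$ by induction on $k$ (the inductive step being the same product-measure-versus-sum collapse), then passes to the limit via contraction. A second, purely probabilistic route conditions the trajectory on $s_1=\svec,a_1=\avec$: because $p$ and $\pi$ factor, the marginal law of the $d$-th coordinate process $(s_{d,t},a_{d,t})_{t\ge 1}$ is precisely that of $\Mcal_d$ run under $\pi_d$ started from $(s_d,a_d)$, and since $r_t=\sum_d r_d(s_{d,t},a_{d,t})$, linearity of expectation gives $Q_{\Mcal}^{\pi}(\svec,\avec)=\sum_d Q_{\Mcal_d}^{\pi_d}(s_d,a_d)$ directly.

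The only genuinely delicate point is the bookkeeping behind ``expectation of a coordinatewise sum against a product measure equals the sum of marginal expectations'': it is elementary, but it must invoke \emph{both} the transition factorization and the policy factorization—dropping either one breaks the collapse—which also previews why no hypothesis is superfluous in the more general \cref{thm:sufficient-abstract}. Convergence of the series and validity of the contraction/limit arguments are handled by the standing assumptions of bounded rewards and $\gamma<1$ (or a finite episodic horizon).
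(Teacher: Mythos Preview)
Your proposal is correct. Your primary route—verifying that $\tilde Q=\sum_d Q_{\Mcal_d}^{\pi_d}$ satisfies the joint Bellman evaluation equation and then invoking uniqueness of the fixed point under $\gamma<1$—differs from the paper, which instead carries out the explicit induction on $h$-step Q-functions (your second alternative): it shows $Q^{\pi,(h)}_{\Mcal}=\sum_d Q^{\pi_d,(h)}_{\Mcal_d}$ for every finite horizon $h$ by the same product-measure-versus-additive-sum collapse you identify, and then lets $h\to\infty$. Your fixed-point argument is slightly slicker and avoids tracking the horizon, at the cost of needing the contraction hypothesis ($\gamma<1$, bounded rewards) up front; the paper's induction is more constructive and, since each $h$-step identity is exact, also covers $\gamma=1$ in the finite-horizon/episodic case without appealing to a limit of a contraction. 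Both arguments hinge on exactly the same computation—the simultaneous use of the transition and policy factorizations to split the expectation—so neither is materially more general here.
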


To match the form in \cref{eqn:Q-decomposition}, we can set $q_d(\svec,a_d) = Q_{\Mcal_d}^{\pi_d}(s_d,a_d)$. Importantly, each $Q_{\Mcal_d}^{\pi_d}$ does not depend on any $a_{d'}$ where $d' \neq d$. Note that although our definition of $q_d$ is allowed to condition on the entire state space $\svec$, each $Q_{\Mcal_d}^{\pi_d}$ only depends on $s_d$. \cref{thm:sufficient-factored} can be seen as a corollary to \cref{thm:sufficient-abstract} where the abstractions are defined using the sub-state spaces, such that $\phi_d: \Scal \to \Scal_d$.

\begin{proof}[Proof of \cref{thm:sufficient-factored}]

Without loss of generality, we consider the setting with $D=2$ such that $\mathcal{A} = \mathcal{A}_1 \times \mathcal{A}_2$; extension to $D>2$ is straightforward. The proof is based on mathematical induction on a sequence of $h$-step Q-functions of $\pi$ defined as $Q^{\pi,(h)}_{\Mcal}(\svec,\avec) = \mathbb{E} [\sum_{t=1}^{h}\gamma^{t-1}r_t | \svec_1 = \svec, \avec_1 = \avec, \avec_t \sim \pi]$. 

\begin{adjustwidth}{1em}{0pt}
\textit{Base case.} For $h=1$, the one-step Q-function is simply the reward, which by assumption \(r(\svec,\avec) = r_1(s_1,a_1) + r_2(s_2,a_2)\). Therefore, \( Q^{\pi,(1)}_{\Mcal}(s,\avec) = Q^{\pi_1, (1)}_{\Mcal_1}(s_1,a_1) + Q^{\pi_2, (1)}_{\Mcal_2}(s_2,a_2) \). 

\textit{Inductive step.} 
Suppose \(Q^{\pi,(h)}_{\Mcal}(\svec,\avec) = Q^{\pi_1,(h)}_{\Mcal_1}(s_1,a_1) + Q^{\pi_2,(h)}_{\Mcal2}(s_2,a_2)\) holds. We can express $Q^{\pi,(h+1)}_{\Mcal}$ in terms of $Q^{\pi,(h)}_{\Mcal}$ using the Bellman equation: 

\begin{equation*}
    Q^{\pi,(h+1)}_{\Mcal}(\svec,\avec) = \underbrace{r(\svec,\avec)}_{\tiny \circled{1}} + \gamma \underbrace{\sum_{\svec'} p(\svec'|\svec,\avec) V^{\pi,(h)}_{\Mcal}(\svec')}_{\tiny \circled{2}}
\end{equation*}

where \( \displaystyle V^{\pi,(h)}_{\Mcal}(\svec') = \sum_{\avec'} \pi(\avec'|\svec') Q^{\pi,(h)}_{\Mcal}(\svec',\avec') \). 

By \cref{def:factored-MDP}, \smallcircled{1} can be written as a sum \(r(\svec,\avec) = r_1(s_1,a_1) + r_2(s_2,a_2)\) where each summand depends on only either $a_1$ or $a_2$ but not both. Next we show that \smallcircled{2} also decomposes in a similar manner. For a given $\svec$ we have:
\begin{align*}
    & V^{\pi,(h)}_{\Mcal}(\svec) = \sum_{\avec} \pi(\avec|\svec) Q^{\pi,(h)}_{\Mcal}(\svec,\avec) \\
    &= \sum_{a_1, a_2} \pi_1(a_1|s_1) \pi_2(a_2|s_2) \Big(Q^{\pi_1,(h)}_{\Mcal_1}(s_1,a_1) + Q^{\pi_2,(h)}_{\Mcal_2}(s_2,a_2)\Big) \\
    &= \text{$\textstyle \Big(\cancelto{1}{\sum_{a_2} \pi_2(a_2|s_2)}\Big) \sum_{a_1} \pi_1(a_1|s_1) Q^{\pi_1,(h)}_{\Mcal_1}(s_1,a_1)$} + \text{$\textstyle \Big(\cancelto{1}{\sum_{a_1} \pi_1(a_1|s_1)}\Big) \sum_{a_2} \pi_2(a_2|s_2) Q^{\pi_2,(h)}_{\Mcal_2}(s_2,a_2)$} \\
    &= \Big(\sum_{a_1}\pi_1(a_1|s_1) Q^{\pi_1,(h)}_{\Mcal_1}(s_1,a_1)\Big) + \Big(\sum_{a_2}\pi_2(a_2|s_2) Q^{\pi_2,(h)}_{\Mcal_2}(s_2,a_2)\Big) \ ,
    % &= V^h_1(s'_1) + V^h_2(s'_2) 
\end{align*}

where we use the fact that $\pi_1(a_1|s_1) Q^{\pi_1,(h)}_{\Mcal_1}(s_1,a_1)$ is independent of $\pi_2(a_2|s_2)$ (and vice versa), and that $\pi_d(\cdot|s_d)$ is a probability simplex. Letting \(V^{\pi_d,(h)}_{\Mcal_d}(s_d) = \sum_{a_d}\pi_1(a_d|s_d) Q^{\pi_d,h}_{\Mcal_d}(s_d,a_d)\), then \(V^{\pi,(h)}_{\Mcal}(s') = V^{\pi_1,(h)}_{\Mcal_1}(s'_1) + V^{\pi_2,(h)}_{\Mcal_2}(s'_2)\). 

Substituting into \smallcircled{2}, we have:
\begin{align*}
    & \sum_{\svec'} p(\svec'|\svec,\avec) V^{\pi,(h)}_{\Mcal}(\svec') \\
    &= \sum_{s'_1,s'_2} p_1(s'_1|s_1,a_1) p_2(s'_2|s_2,a_2) \left(V^{\pi_1,(h)}_{\Mcal_1}(s'_1) + V^{\pi_2,(h)}_{\Mcal_2}(s'_2)\right) \\
    &= \text{$\textstyle \Big(\cancelto{1}{\sum_{s'_2} p_2(s'_2|s_2,a_2)}\Big) \sum_{s'_1} p_1(s'_1|s_1,a_1) V^{\pi_1,(h)}_{\Mcal_1}(s'_1)$} + \text{$\textstyle \Big(\cancelto{1}{\sum_{s'_1} p_1(s'_1|s_1,a_1)}\Big) \sum_{s'_2} p_2(s'_2|s_2,a_2) V^{\pi_2,(h)}_{\Mcal_2}(s'_2)$} \\
    &= \Big(\sum_{s'_1} p_1(s'_1|s_1,a_1) V^{\pi_1,(h)}_{\Mcal_1}(s'_1)\Big) + \Big(\sum_{s'_2} p_2(s'_2|s_2,a_2) V^{\pi_2,(h)}_{\Mcal_2}(s'_2)\Big)
\end{align*}
where we make use of a similar independence property between $p_1(s'_1|s_1,a_1)V^{\pi_1,(h)}_{\Mcal_1}(s'_1)$ and $p_2(s'_2|s_2,a_2)$, and the fact that that $p_d(\cdot|s_d,a_d)$ is a probability simplex. 

Therefore, we have \( Q^{\pi,(h+1)}_{\Mcal}(\svec,\avec) = Q^{\pi_1,(h+1)}_{\Mcal_1}(s_1,a_1) + Q^{\pi_2,(h+1)}_{\Mcal_2}(s_2,a_2) \) as desired, where \( Q^{\pi_d,(h+1)}_{\Mcal_d}(s_d,a_d) = r_d(s_d,a_d) + \gamma \sum_{s'_d} p_d(s'_d|s_d,a_d) \sum_{a'_d} \pi_j(a'_d|s'_d) Q^{\pi_d,(h)}_{\Mcal_j}(s'_d,a'_d) \). 
\end{adjustwidth}
By mathematical induction, this decomposition holds for any $h$-step $Q$-function. Letting $h \rightarrow \infty$ shows that this holds for the full Q-function. 
\end{proof}

\subsection{Sufficient Condition: The Abstraction Setting} \label{appx:sufficient-abstract}
\vspace{-0.5em}
We first review some important background on state abstractions. Using the properties of state abstractions, we can prove the main sufficient condition in \cref{thm:sufficient-abstract}. This proof follows largely from the techniques used in proving \cref{thm:sufficient-factored}, with the exception of how marginalization over the state space is handled. 

\textbf{\textit{Background on State Abstractions.}} A state abstraction (also known as state aggregation) \citep{li2006abstraction}, is a mapping $\phi: \Scal \to \Zcal$ that converts each element of the primitive state space $\Scal$ to an element of the abstract state space $\Zcal$. Intuitively, if two states $s_1$ and $s_2$ are mapped to the same element under $\phi$, i.e., $\phi(s_1) = \phi(s_2)$, then they are treated as the same (abstract) state under the abstraction. Therefore, we can view an abstraction as a partitioning of the primitive state space into non-overlapping subsets. Since a state abstraction is a many-to-one mapping, we define its inverse as $\phi^{-1}(z) = \{\tilde{s}: \phi(\tilde{s}) = z\}$, a set containing all primitive states that are mapped to the abstract state $z$. 

We have the following property of summations involving state abstractions, where for any function $f: \Scal \to \mathbb{R}$,
\[
    \sum_{s \in \Scal} f(s) = \adjustlimits \sum_{z \in \Zcal} \sum_{\tilde{s} \in \phi^{-1}(z)} f(\tilde{s})
\]
To understand this property, let us consider the sum of $f(s)$ for all states in $\Scal$ which can be obtained in two different ways: i) directly iterating through the elements of $\Scal$, ii) first iterating through the partitions of $\Scal$ (induced by the abstraction), and then iterating through the elements in each partition, giving rise to the double summation. This property allows us to change the index of summation from primitive states to abstract states. For multiple abstractions $\boldsymbol{\phi} = [\phi_1, \cdots, \phi_D]$ where $\phi_{d} \neq \phi_{d'}$ if $d \neq d'$, denoting $\boldsymbol{z} = \boldsymbol{\phi}(s) = [z_1,\dots,z_D]$, we can similarly define the inverse abstraction $\boldsymbol{\phi}^{-1}(\boldsymbol{z}) = \{\tilde{s}: \boldsymbol{\phi}(\tilde{s}) = \boldsymbol{z}\}$, and the summation property similarly applies.

\begin{proof}[Proof of \cref{thm:sufficient-abstract}]

Without loss of generality, we consider the setting with $D=2$ so $\mathcal{A} = \mathcal{A}_1 \times \mathcal{A}_2$; extension to $D>2$ is straightforward. The proof is based on mathematical induction on a sequence of $h$-step Q-functions of $\pi$ denoted by $Q^{(h)}(s,\avec) = \mathbb{E} [\sum_{t=1}^{h}\gamma^{t-1}r_t | s_1 = s, \avec_1 = \avec, \avec_t \sim \pi]$. 

\begin{adjustwidth}{1em}{0pt}
\textit{Base case.} For $h=1$, the one-step Q-function is simply the reward, which by assumption \(r(s,\avec) = r_1(z_1,a_1) + r_2(z_2,a_2)\). We can trivially set \( q^{(1)}_d(z_d,a_d) = r_d(z_d,a_d) \) such that \( Q^{(1)}(s,\avec) = q^{(1)}_1(z_1,a_1) + q^{(1)}_2(z_2,a_2) \). 

\textit{Inductive step.} 
Suppose \(Q^{(h)}(s,\avec) = q^{(h)}_1(z_1,a_1) + q^{(h)}_2(z_2,a_2)\) holds. We can express $Q^{(h+1)}$ in terms of $Q^{(h)}$ using the Bellman equation: 

\begin{equation*}
    Q^{(h+1)}(s,\avec) = \underbrace{r(s,\avec)}_{\tiny \circled{1}} + \gamma \underbrace{\sum_{s'} p(s'|s,\avec) V^{(h)}(s')}_{\tiny \circled{2}}
\end{equation*}

where \( \displaystyle V^{(h)}(s') = \sum_{\avec'} \pi(\avec'|s') Q^{(h)}(s',\avec') \). 

\smallcircled{1} can be written as a sum \(r(s,\avec) = r_1(z_1,a_1) + r_2(z_2,a_2)\) where each summand depends on only either $a_1$ or $a_2$ but not both. Next we show \smallcircled{2} also decomposes in a similar manner. 

For a given $s$ we have:
\begin{align*}
    & V^{(h)}(s) = \sum_{\avec} \pi(\avec|s) Q^{(h)}(s,\avec) \\
    &= \sum_{a_1, a_2} \pi_1(a_1|z_1) \pi_2(a_2|z_2) \Big(q^{(h)}_1(z_1,a_1) + q^{(h)}_2(z_2,a_2)\Big) \\
    &= \text{$\textstyle \Big(\cancelto{1}{\sum_{a_2} \pi_2(a_2|z_2)}\Big) \sum_{a_1} \pi_1(a_1|z_1) q^{(h)}_1(z_1,a_1)$} + \text{$\textstyle \Big(\cancelto{1}{\sum_{a_1} \pi_1(a_1|z_1)}\Big) \sum_{a_2} \pi_2(a_2|z_2) q^{(h)}_2(z_2,a_2)$} \\
    &= \sum_{a_1}\pi_1(a_1|z_1) q^{(h)}_1(z_1,a_1) + \sum_{a_2}\pi_2(a_2|z_2) q^{(h)}_2(z_2,a_2) \ ,
    % &= V^h_1(s'_1) + V^h_2(s'_2) 
\end{align*}
where we used the property that $\pi_1(a_1|z_1) q^{(h)}_1(z_1,a_1)$ is independent of $\pi_2(a_2|z_2)$ (and vice versa), and that $\pi_d(\cdot|z_d)$ is a probability simplex. Letting \(v^{(h)}_d(z_d) = \sum_{a_d}\pi_d(a_d|z_d) q^{(h)}_d(z_d,a_d)\), then we can write \(V^{(h)}(s') = v^{(h)}_1(z'_1) + v^{(h)}_2(z'_2)\). 

Substituting into \smallcircled{2}, we have:
\begin{align*}
    & \sum_{s'} p(s'|s,\avec) V^{(h)}(s') = \sum_{\boldsymbol{z}'} \sum_{\tilde{s} \in \boldsymbol{\phi}^{-1}(\boldsymbol{z}')} p(\tilde{s}|s,\avec) V^{(h)}(\tilde{s}) \\
    &= \sum_{\boldsymbol{z}'} \sum_{\tilde{s} \in \boldsymbol{\phi}^{-1}(\boldsymbol{z}')} p(\tilde{s}|s,\avec) V^{(h)}(\tilde{s}) \\
    &= \sum_{\boldsymbol{z}'} \Big(\sum_{\tilde{s} \in \boldsymbol{\phi}^{-1}(\boldsymbol{z}')} p(\tilde{s}|s,\avec)\Big) V^{(h)}(\tilde{s}) \\
    &= \sum_{z'_1,z'_2} p_1(z'_1|z_1,a_1) p_2(z'_2|z_2,a_2) \left(v^{(h)}_1(z'_1) + v^{(h)}_2(z'_2)\right) \\
    &= \text{$\textstyle \Big(\cancelto{1}{\sum_{z'_2} p_2(z'_2|z_2,a_2)}\Big) \sum_{z'_1} p_1(z'_1|z_1,a_1) v^{(h)}_{1}(z'_1)$} + \text{$\textstyle \Big(\cancelto{1}{\sum_{z'_1} p_1(z'_1|z_1,a_1)}\Big) \sum_{z'_2} p_2(z'_2|z_2,a_2) v^{(h)}_{2}(z'_2)$} \\
    &= \Big(\sum_{z'_1} p_1(z'_1|z_1,a_1) v^{(h)}_1(z'_1)\Big) + \Big(\sum_{z'_2} p_2(z'_2|z_2,a_2) v^{(h)}_2(z'_2)\Big)
\end{align*}
where on the first line we used the property of state abstractions to replace the index of summation, and from the second to the third line we used the fact that for all $\tilde{s} \in \boldsymbol{\phi}^{-1}(\boldsymbol{z}')$ that have the same abstract state vector $\boldsymbol{z}'$, their value $V^{(h)}(s') = v^{(h)}_1(z'_1) + v^{(h)}_2(z'_2)$ are equal; this allows us to directly sum their transition probabilities $p(\tilde{s}|s,\avec)$. Following that, we substitute in \cref{eqn:abstract-factored-transition}, and then use a similar independence property as above and that $p_d(\cdot|z_d,a_d)$ is a probability simplex. 

Therefore, we have \( Q^{(h+1)}(s,\avec) = q^{(h+1)}_1(z_1,a_1) + q^{(h+1)}_2(z_2,a_2) \) as desired where \( q^{(h+1)}_d(z_d,a_d) = r_d(z_d,a_d) + \gamma \sum_{z'_d} p_d(z'_d|z_d,a_d) \sum_{a'_d} \pi(a'_d|z'_d) q^{(h)}_d(z'_d,a'_d)\). 
\end{adjustwidth}
By mathematical induction, this decomposition holds for any $h$-step $Q$-function. Letting $h \rightarrow \infty$ shows that this holds for the full Q-function. 
\end{proof}

\subsection{Policy Learning with Bias - Performance Bounds} \label{appx:perf_bounds}

Consider a particular model-based procedure for approximating the optimal Q-function using \cref{eqn:Q-decomposition}: i) finding approximations $\widehat{\Mcal} = (\hat{p}, \hat{r})$ that are close to the true transition/reward functions $p$, $r$ such that there exists some state abstraction set $\boldsymbol{\phi}$ with $\hat{p}$, $\hat{r}$ satisfying \eqref{eqn:abstract-factored-transition} and \eqref{eqn:abstract-factored-reward} with respect to $\boldsymbol{\phi}$, ii) doing planning (e.g., dynamic programming) using the approximate MDP parameters $\hat{p}$ and $\hat{r}$. We can show the following performance bounds; note that these upper bounds are loose and information-theoretic (in that they require knowledge of the implicit factorization). 

\begin{proposition}
If the approximation errors in $\hat{p}$ and $\hat{r}$ are upper bounded by $\epsilon_{p}$ and $\epsilon_{r}$ for all $s\in\Scal, \avec\in\Acal$:
\begin{align*}
    \sum_{s'} \big| p(s'|s,\avec) - \hat{p}(s'|s,\avec) \big| &\leq \epsilon_{p}, \\
    \big| r(s,\avec) - \hat{r}(s,\avec) \big| &\leq \epsilon_{r},
\end{align*}
then the above model-based procedure leads to an approximate Q-function $\hat{Q}$ and an approximate policy $\hat{\pi}$ that satisfy:
\begin{align*}
    \| Q^{*}_{\Mcal} - Q^{*}_{\widehat{\Mcal}} \|_{\infty} &\leq \frac{\epsilon_{r}}{1-\gamma} + \frac{\gamma \epsilon_{p} R_{\max}}{2(1-\gamma)^2}, \\
    \| V^{*}_{\Mcal} - V^{\hat{\pi}}_{\Mcal} \|_{\infty} &\leq \frac{2\epsilon_{r}}{1-\gamma} + \frac{\gamma \epsilon_{p} R_{\max}}{(1-\gamma)^2}.
\end{align*}
\end{proposition}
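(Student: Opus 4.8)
The plan is to prove this as a ``simulation lemma''-style perturbation bound; it uses nothing about the factored structure, only that $\widehat{\Mcal}=(\hat p,\hat r)$ is uniformly close to $\Mcal=(p,r)$. First I would set up a fixed-point perturbation inequality. Let $\mathcal T$ and $\hat{\mathcal T}$ denote the Bellman optimality operators of $\Mcal$ and $\widehat{\Mcal}$ acting on $Q$-functions; both are $\gamma$-contractions in $\|\cdot\|_\infty$, with fixed points $Q^*_{\Mcal}$ and $Q^*_{\widehat{\Mcal}}$. Writing $Q^*_{\Mcal}-Q^*_{\widehat{\Mcal}}=\mathcal T Q^*_{\Mcal}-\hat{\mathcal T}Q^*_{\widehat{\Mcal}}$, inserting $\pm\,\hat{\mathcal T}Q^*_{\Mcal}$, and invoking the contraction of $\hat{\mathcal T}$, one gets $\|Q^*_{\Mcal}-Q^*_{\widehat{\Mcal}}\|_\infty\le\frac{1}{1-\gamma}\,\|(\mathcal T-\hat{\mathcal T})Q^*_{\Mcal}\|_\infty$. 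The same manipulation applied to the policy-evaluation operators $\mathcal T^\pi$ and $\hat{\mathcal T}^\pi$ gives, for every fixed policy $\pi$, $\|V^\pi_{\Mcal}-V^\pi_{\widehat{\Mcal}}\|_\infty\le\frac{1}{1-\gamma}\,\|(\mathcal T^\pi-\hat{\mathcal T}^\pi)V^\pi_{\Mcal}\|_\infty$.

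Next I would bound the one-step operator gap. Evaluating $(\mathcal T-\hat{\mathcal T})Q^*_{\Mcal}$ at a pair $(s,\avec)$ and splitting off the reward term, the discrepancy is at most $\epsilon_r+\gamma\,\bigl|\sum_{s'}\bigl(p(s'|s,\avec)-\hat p(s'|s,\avec)\bigr)V^*_{\Mcal}(s')\bigr|$. The key step is that the signed measure $p(\cdot|s,\avec)-\hat p(\cdot|s,\avec)$ integrates to zero, so $V^*_{\Mcal}$ may be replaced by $V^*_{\Mcal}-c\mathbf 1$ for an arbitrary constant $c$; choosing $c$ to be the midpoint of the range of $V^*_{\Mcal}$ and applying Hölder bounds this term by $\gamma\epsilon_p\cdot\tfrac12\,\mathrm{span}(V^*_{\Mcal})\le\frac{\gamma\epsilon_p R_{\max}}{2(1-\gamma)}$, using that $\mathrm{span}(V^*_{\Mcal})\le R_{\max}/(1-\gamma)$ (which holds e.g.\ when rewards lie in $[0,R_{\max}]$). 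Combined with the first step this yields the $Q$-bound. The identical computation with $V^\pi_{\Mcal}$ in place of $V^*_{\Mcal}$ gives a per-policy simulation bound $\|V^\pi_{\Mcal}-V^\pi_{\widehat{\Mcal}}\|_\infty\le\frac{\epsilon_r}{1-\gamma}+\frac{\gamma\epsilon_p R_{\max}}{2(1-\gamma)^2}$ for all $\pi$.

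Finally, to pass from value error to policy suboptimality, let $\pi^*$ be optimal in $\Mcal$ while, by construction of the procedure, $\hat\pi$ is optimal in $\widehat{\Mcal}$. I would decompose
\[ V^{\pi^*}_{\Mcal}-V^{\hat\pi}_{\Mcal}=\bigl(V^{\pi^*}_{\Mcal}-V^{\pi^*}_{\widehat{\Mcal}}\bigr)+\bigl(V^{\pi^*}_{\widehat{\Mcal}}-V^{\hat\pi}_{\widehat{\Mcal}}\bigr)+\bigl(V^{\hat\pi}_{\widehat{\Mcal}}-V^{\hat\pi}_{\Mcal}\bigr), \]
observe that the middle term is $\le 0$ because $\hat\pi$ maximizes value in $\widehat{\Mcal}$, and bound each outer term in $\|\cdot\|_\infty$ by the per-policy bound from the previous step; this doubles the constant and gives the $V$-bound. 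The only non-mechanical point — the hard part — is the factor $\tfrac12$ in the transition term: the naive Hölder estimate $\bigl|\sum(p-\hat p)V\bigr|\le\epsilon_p\|V\|_\infty\le\epsilon_p R_{\max}/(1-\gamma)$ is off by a factor of two, and recovering the stated constant requires using $\sum(p-\hat p)=0$ to re-center $V$ about the midpoint of its range so that only half its span enters. The remaining steps are routine contraction-mapping bookkeeping.
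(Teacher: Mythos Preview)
Your proposal is correct and is precisely the standard simulation-lemma argument; the paper does not actually prove this proposition but simply defers to the classical results of Singh and Yee (1994) and Kearns and Singh (2002), so you have in fact supplied the details that the paper omits. Your handling of the $\tfrac12$ factor via re-centering against the span of $V$ is exactly the refinement needed to match the stated constants.
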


\begin{proof}
See classical results by \citet{singh1994upper} and \citet{kearns2002near} (the simulation lemma). \renewcommand{\qedsymbol}{}
\end{proof}

\subsection{Subspace of Representable $Q$ Functions} \label{appx:subspace_proof}
To help understand how the linear parameterization of Q-function \cref{eqn:Q-decomposition} affects the representation power of the function class, we first define the following matrices for action space featurization. 

\begin{definition}
The \textit{sub-action mapping matrix for sub-action space $\Acal_d$}, $\boldsymbol{\Psi}_d$, is defined as 
\begin{align*}
    \boldsymbol{\Psi}_j = \begin{bmatrix}
    \rotvert & \boldsymbol{\psi}_d(\avec^1)^{\intercal} & \rotvert\\
    & \vdots & \\
    \rotvert & \boldsymbol{\psi}_d(\avec^{|\Acal|})^{\intercal} & \rotvert \\
    \end{bmatrix} \in \set{0,1}^{|\Acal|\times|\Acal_d|}
\end{align*}
where each row $\boldsymbol{\psi}_d(\avec^i)^{\intercal} \in \set{0,1}^{1 \times |\Acal_d|}$ is a one-hot vector with a value $1$ in column $\proj_{\Acal \to \Acal_d}(\avec^i)$. 
\end{definition}
\begin{remark}
The $i$-th row of $\boldsymbol{\Psi}_d$ corresponds to an action $\avec^i \in \Acal$, and the $j$-th column corresponds to a particular element of the sub-action space $a_d^j \in \Acal_d$. The $(i,j)$-entry of $\boldsymbol{\Psi}_d$ is 1 if and only if the projection of $\avec^i$ onto the sub-action space $\Acal_d$ is $a_d^j$. Since each row is a one-hot vector, the sum of elements in each row is exactly 1, i.e., $\boldsymbol{\psi}_d(\avec^i)^{\intercal} \boldsymbol{1} = 1$. 
\end{remark}

\begin{definition} \label{def:action-matrix}
The \textit{sub-action mapping matrix}, $\boldsymbol{\Psi}$, is defined by a horizontal concatenation of $\boldsymbol{\Psi}_d$ for $d = 1 \dots D$
\begin{align*}
    \boldsymbol{\Psi} = \begin{bmatrix}
    \\
    \boldsymbol{\Psi}_1 & \cdots & \boldsymbol{\Psi}_D \\
    \\
    \end{bmatrix} \in \set{0,1}^{|\Acal| \times (\sum_{d}|\Acal_d|)}
\end{align*}
\end{definition}
\begin{remark}
$\boldsymbol{\Psi}$ describes how to map each action $\avec^i \in \Acal$ to its corresponding sub-actions. Therefore, the sum of elements in each row is exactly $D$, the number of sub-action spaces; $\boldsymbol{\psi}(\avec^i)^{\intercal} \boldsymbol{1} = D$. 
\end{remark}

\begin{definition}
The \textit{condensed sub-action mapping matrix}, $\tilde{\boldsymbol{\Psi}}$, is
\begin{align*}
    \tilde{\boldsymbol{\Psi}} = \left[ \begin{array}{c|ccc}
        & && \\
        \boldsymbol{1} & \tilde{\boldsymbol{\Psi}}_1 & \cdots & \tilde{\boldsymbol{\Psi}}_D \\
        & && \\
    \end{array} \right]  \in \set{0,1}^{|\Acal| \times \left(1+\sum_{d}(|\Acal_d|-1)\right)}
\end{align*}
where the first column contains all 1's, and $\tilde{\boldsymbol{\Psi}}_d$ denotes $\boldsymbol{\Psi}_d$ with the first column removed. 
\end{definition}

\begin{proposition} \label{thm:projection}
$\colspace(\boldsymbol{\Psi}) = \colspace(\tilde{\boldsymbol{\Psi}})$ \; and \; $\rank(\boldsymbol{\Psi}) = \rank(\tilde{\boldsymbol{\Psi}}) = \ncols(\tilde{\boldsymbol{\Psi}})$ \; (i.e., matrix $\tilde{\boldsymbol{\Psi}}$ has full column rank). Consequently, $\boldsymbol{\Psi} \boldsymbol{\Psi}^{+} = \tilde{\boldsymbol{\Psi}} \tilde{\boldsymbol{\Psi}}^{+}$. 
\end{proposition}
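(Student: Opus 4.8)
The plan is to prove Proposition~\ref{thm:projection} in three stages: first show $\colspace(\boldsymbol{\Psi}) = \colspace(\tilde{\boldsymbol{\Psi}})$, then show $\tilde{\boldsymbol{\Psi}}$ has full column rank, and finally deduce the projector identity $\boldsymbol{\Psi}\boldsymbol{\Psi}^{+} = \tilde{\boldsymbol{\Psi}}\tilde{\boldsymbol{\Psi}}^{+}$ as a corollary.

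\textbf{Step 1 (column spaces agree).} I would argue both inclusions. For ``$\supseteq$'': the all-ones column $\boldsymbol{1}$ of $\tilde{\boldsymbol{\Psi}}$ equals $\boldsymbol{\Psi}_d \boldsymbol{1}$ for any fixed $d$ (each row of $\boldsymbol{\Psi}_d$ is one-hot, so its row sums are $1$), hence $\boldsymbol{1} \in \colspace(\boldsymbol{\Psi})$; and every column of each $\tilde{\boldsymbol{\Psi}}_d$ is literally a column of $\boldsymbol{\Psi}_d$, hence of $\boldsymbol{\Psi}$. For ``$\subseteq$'': each $\boldsymbol{\Psi}_d$ has the property that its columns sum (across columns, i.e.\ $\boldsymbol{\Psi}_d \boldsymbol{1}$) to $\boldsymbol{1}$, so the first (dropped) column of $\boldsymbol{\Psi}_d$ equals $\boldsymbol{1} - \tilde{\boldsymbol{\Psi}}_d \boldsymbol{1}$, a linear combination of columns of $\tilde{\boldsymbol{\Psi}}$; the remaining columns of $\boldsymbol{\Psi}_d$ appear directly in $\tilde{\boldsymbol{\Psi}}$. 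Since this holds for every $d$, all columns of $\boldsymbol{\Psi}$ lie in $\colspace(\tilde{\boldsymbol{\Psi}})$. This gives equality of column spaces and hence $\rank(\boldsymbol{\Psi}) = \rank(\tilde{\boldsymbol{\Psi}})$.

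\textbf{Step 2 (full column rank of $\tilde{\boldsymbol{\Psi}}$).} This is the crux. I would show the only solution to $\tilde{\boldsymbol{\Psi}} \boldsymbol{x} = \boldsymbol{0}$ is $\boldsymbol{x} = \boldsymbol{0}$. Write $\boldsymbol{x} = (c; x^{(1)}; \dots; x^{(D)})$ matching the block structure, where $c \in \mathbb{R}$ and $x^{(d)} \in \mathbb{R}^{|\Acal_d|-1}$ indexes the non-first elements of $\Acal_d$. Then for the action $\avec$ whose every sub-action is the first element of its sub-action space, the corresponding row of $\tilde{\boldsymbol{\Psi}}$ is $(1, 0, \dots, 0)$, forcing $c = 0$. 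Next, for the action that takes the first element in every coordinate except coordinate $d$, where it takes element $a_d^j$ (with $j \neq 1$), the row picks out $c$ plus the entry of $x^{(d)}$ at position $j$; since $c=0$ this forces that entry of $x^{(d)}$ to be $0$. Ranging over all $d$ and all $j \neq 1$ (each such action exists in the Cartesian product $\Acal = \bigotimes_d \Acal_d$) kills every coordinate of every $x^{(d)}$. Hence $\boldsymbol{x} = \boldsymbol{0}$, so $\tilde{\boldsymbol{\Psi}}$ has linearly independent columns, i.e.\ $\rank(\tilde{\boldsymbol{\Psi}}) = \ncols(\tilde{\boldsymbol{\Psi}}) = 1 + \sum_d(|\Acal_d|-1)$. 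Combined with Step~1, this establishes the rank claim. (I expect this selection-of-special-rows argument to be the main obstacle only in the sense of bookkeeping the block indices carefully; the idea is elementary.)

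\textbf{Step 3 (projector identity).} For any real matrix $M$, the product $MM^{+}$ is the orthogonal projector onto $\colspace(M)$. Since Step~1 gives $\colspace(\boldsymbol{\Psi}) = \colspace(\tilde{\boldsymbol{\Psi}})$, the two orthogonal projectors coincide, i.e.\ $\boldsymbol{\Psi}\boldsymbol{\Psi}^{+} = \tilde{\boldsymbol{\Psi}}\tilde{\boldsymbol{\Psi}}^{+}$. This finishes the proof; I would state the projector fact with a one-line justification (uniqueness of the orthogonal projector onto a subspace, or the four Moore--Penrose conditions).
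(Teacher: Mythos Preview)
Your proposal is correct. The column-space equality in Step~1 and the projector identity in Step~3 match the paper's reasoning essentially verbatim (the paper invokes the same one-hot row-sum identity $\boldsymbol{\Psi}_d\boldsymbol{1}=\boldsymbol{1}$ and the uniqueness of the orthogonal projector onto a subspace).

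The one genuine difference is in Step~2. The paper establishes full column rank of $\tilde{\boldsymbol{\Psi}}$ by a sequence of \emph{column-wise non-membership} claims: it argues, for each column $\boldsymbol{c}$ of $\tilde{\boldsymbol{\Psi}}$, that $\boldsymbol{c}$ cannot lie in the span of the remaining columns, repeatedly exploiting that the first entry (corresponding to the all-first-sub-action action) of every column of every $\tilde{\boldsymbol{\Psi}}_d$ is zero. You instead prove $\ker\tilde{\boldsymbol{\Psi}}=\{0\}$ by \emph{selecting special rows}: the row for the all-first action isolates the intercept $c$, and then each row for an action differing from the baseline in exactly one coordinate isolates a single entry of the corresponding $x^{(d)}$. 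In effect you are exhibiting a $\bigl(1+\sum_d(|\Acal_d|-1)\bigr)\times\bigl(1+\sum_d(|\Acal_d|-1)\bigr)$ submatrix of $\tilde{\boldsymbol{\Psi}}$ that is lower-triangular with ones on the diagonal. This is a cleaner and more self-contained argument than the paper's chain of six claims; it also sidesteps the paper's somewhat delicate Claim~6 (which involves nested span-exclusion reasoning). Conversely, the paper's column-by-column analysis makes the redundancy structure of $\boldsymbol{\Psi}$ more explicit, which is useful for the surrounding discussion of how many free parameters the linear decomposition actually has.
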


\begin{corollary} \label{cor:Q-in-subspace}
Suppose the Q-function $Q$ of a policy $\pi$ at state $s$ is linearly decomposable with respect to the sub-actions, i.e., we can write $Q(s,a) = \sum_{d=1}^{D} q_d(s,a_d)$ for all $a_d\in\Acal_d$, then there exists $\boldsymbol{w}$ and $\tilde{\boldsymbol{w}}$ such that the column vector containing the Q-values for all actions at state $s$ can be expressed as
$\boldsymbol{Q}(s,\Acal) = \boldsymbol{\Psi} \boldsymbol{w} = \tilde{\boldsymbol{\Psi}} \tilde{\boldsymbol{w}}$. In other words, \cref{eqn:Q-decomposition} is equivalent to $\boldsymbol{Q}(s,\Acal) \in \colspace(\tilde{\boldsymbol{\Psi}})$. 
\end{corollary}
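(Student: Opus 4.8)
The plan is to exhibit the weight vector $\boldsymbol{w}$ explicitly from the decomposition components and then invoke \cref{thm:projection} to pass from $\boldsymbol{\Psi}$ to $\tilde{\boldsymbol{\Psi}}$. First I would fix the state $s$ and assume $Q(s,\avec) = \sum_{d=1}^{D} q_d(s,a_d)$ for every $\avec = [a_1,\dots,a_D] \in \Acal$. For each $d$, enumerate $\Acal_d = \{a_d^1,\dots,a_d^{|\Acal_d|}\}$ using the same ordering that indexes the columns of $\boldsymbol{\Psi}_d$, and form the block $\boldsymbol{w}_d = (q_d(s,a_d^1),\dots,q_d(s,a_d^{|\Acal_d|}))^{\intercal}$. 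Stacking these gives $\boldsymbol{w} = (\boldsymbol{w}_1^{\intercal},\dots,\boldsymbol{w}_D^{\intercal})^{\intercal} \in \mathbb{R}^{\sum_d |\Acal_d|}$, aligned with the column partition of $\boldsymbol{\Psi}$ in \cref{def:action-matrix}.

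Next I would check the identity $\boldsymbol{\Psi}\boldsymbol{w} = \boldsymbol{Q}(s,\Acal)$ row by row. The $i$-th row of $\boldsymbol{\Psi}$ is $\boldsymbol{\psi}(\avec^i)^{\intercal} = [\boldsymbol{\psi}_1(\avec^i)^{\intercal},\dots,\boldsymbol{\psi}_D(\avec^i)^{\intercal}]$, so $(\boldsymbol{\Psi}\boldsymbol{w})_i = \sum_{d=1}^{D}\boldsymbol{\psi}_d(\avec^i)^{\intercal}\boldsymbol{w}_d$. Because $\boldsymbol{\psi}_d(\avec^i)$ is the one-hot indicator of $\proj_{\Acal\to\Acal_d}(\avec^i)$, the product $\boldsymbol{\psi}_d(\avec^i)^{\intercal}\boldsymbol{w}_d$ equals $q_d\big(s,\proj_{\Acal\to\Acal_d}(\avec^i)\big)$, and summing over $d$ returns $\sum_{d=1}^{D} q_d\big(s,\proj_{\Acal\to\Acal_d}(\avec^i)\big) = Q(s,\avec^i)$. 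Hence $\boldsymbol{Q}(s,\Acal) = \boldsymbol{\Psi}\boldsymbol{w} \in \colspace(\boldsymbol{\Psi})$; applying $\colspace(\boldsymbol{\Psi}) = \colspace(\tilde{\boldsymbol{\Psi}})$ from \cref{thm:projection} yields a $\tilde{\boldsymbol{w}}$ with $\boldsymbol{Q}(s,\Acal) = \tilde{\boldsymbol{\Psi}}\tilde{\boldsymbol{w}}$, unique since $\tilde{\boldsymbol{\Psi}}$ has full column rank. For the converse direction behind the word ``equivalently'', I would reverse this: given $\boldsymbol{Q}(s,\Acal) \in \colspace(\tilde{\boldsymbol{\Psi}}) = \colspace(\boldsymbol{\Psi})$, take $\boldsymbol{w}$ with $\boldsymbol{Q}(s,\Acal) = \boldsymbol{\Psi}\boldsymbol{w}$, read its blocks off as the values of the $q_d(s,\cdot)$, and rerun the same row-wise computation to recover \cref{eqn:Q-decomposition} at $s$.

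The only thing requiring care is bookkeeping: keeping the row index over $\Acal$, the block/column structure of $\boldsymbol{\Psi}$ induced by the $\Acal_d$, and the evaluation of each $q_d$ at the projected sub-action $\proj_{\Acal\to\Acal_d}(\avec^i)$ all consistent. Once the one-hot structure of $\boldsymbol{\psi}_d$ is used to see that $\boldsymbol{\psi}_d(\avec^i)^{\intercal}\boldsymbol{w}_d$ picks out exactly the relevant $q_d$ value, everything else is immediate, and the transition between $\boldsymbol{\Psi}$ and $\tilde{\boldsymbol{\Psi}}$ is entirely supplied by \cref{thm:projection}. I do not expect a genuine obstacle here — the mathematical content lives in \cref{thm:projection}, and this corollary merely repackages it in Q-function language.
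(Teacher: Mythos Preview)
Your proposal is correct and matches the paper's approach: the paper does not spell out a formal proof for this corollary but illustrates the construction in the worked example immediately following it, where $\boldsymbol{w}$ is built exactly as you describe (stacking the $q_d(s,\cdot)$ values in block form) and the passage to $\tilde{\boldsymbol{\Psi}}$ is supplied by \cref{thm:projection}. Your row-by-row verification via the one-hot structure of $\boldsymbol{\psi}_d$ and your handling of the converse are precisely what the paper's example is implicitly relying on.
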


\begin{corollary} \label{cor:Q-notin-subspace}
Suppose $\boldsymbol{Q}(s,\Acal) \notin \colspace(\tilde{\boldsymbol{\Psi}})$. Let $\hat{\boldsymbol{w}} = \boldsymbol{\Psi}^{+} \boldsymbol{Q}(s,\Acal)$ and $\hat{\tilde{\boldsymbol{w}}} = \tilde{\boldsymbol{\Psi}}^{+} \boldsymbol{Q}(s,\Acal)$ be the least-squares solutions of the respective linear equations. Then $\boldsymbol{\Psi} \hat{\boldsymbol{w}} = \tilde{\boldsymbol{\Psi}} \hat{\tilde{\boldsymbol{w}}}$. 
\end{corollary}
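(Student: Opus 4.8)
The plan is to reduce the claim to the single identity $\boldsymbol{\Psi}\boldsymbol{\Psi}^{+} = \tilde{\boldsymbol{\Psi}}\tilde{\boldsymbol{\Psi}}^{+}$ already established in Proposition~\ref{thm:projection}. Recall the standard linear-algebra fact that for any real matrix $M$, the product $MM^{+}$ is precisely the orthogonal projector onto $\colspace(M)$, and that the minimum-norm least-squares solution of $Mx \approx b$ is $x = M^{+}b$, so that $Mx = MM^{+}b$ is the orthogonal projection of $b$ onto $\colspace(M)$. Applying this with $M = \boldsymbol{\Psi}$ and $b = \boldsymbol{Q}(s,\Acal)$ gives $\boldsymbol{\Psi}\hat{\boldsymbol{w}} = \boldsymbol{\Psi}\boldsymbol{\Psi}^{+}\boldsymbol{Q}(s,\Acal)$, and with $M = \tilde{\boldsymbol{\Psi}}$ gives $\tilde{\boldsymbol{\Psi}}\hat{\tilde{\boldsymbol{w}}} = \tilde{\boldsymbol{\Psi}}\tilde{\boldsymbol{\Psi}}^{+}\boldsymbol{Q}(s,\Acal)$.

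First I would invoke Proposition~\ref{thm:projection}, which states $\colspace(\boldsymbol{\Psi}) = \colspace(\tilde{\boldsymbol{\Psi}})$ and, as a consequence, $\boldsymbol{\Psi}\boldsymbol{\Psi}^{+} = \tilde{\boldsymbol{\Psi}}\tilde{\boldsymbol{\Psi}}^{+}$. Chaining the two displayed identities through this equality then yields $\boldsymbol{\Psi}\hat{\boldsymbol{w}} = \boldsymbol{\Psi}\boldsymbol{\Psi}^{+}\boldsymbol{Q}(s,\Acal) = \tilde{\boldsymbol{\Psi}}\tilde{\boldsymbol{\Psi}}^{+}\boldsymbol{Q}(s,\Acal) = \tilde{\boldsymbol{\Psi}}\hat{\tilde{\boldsymbol{w}}}$, which is exactly the assertion. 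It is worth remarking that the hypothesis $\boldsymbol{Q}(s,\Acal)\notin\colspace(\tilde{\boldsymbol{\Psi}})$ is not actually needed for the equality to hold; it merely pins down the ``biased'' regime, where both fitted vectors equal a common projected target strictly different from $\boldsymbol{Q}(s,\Acal)$, so that this corollary complements Corollary~\ref{cor:Q-in-subspace} (in which the projection of $\boldsymbol{Q}(s,\Acal)$ is $\boldsymbol{Q}(s,\Acal)$ itself).

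There is essentially no hard part here; the only point to keep straight is that $MM^{+}$ is the orthogonal projector onto $\colspace(M)$ irrespective of whether $M$ has full column rank, so it applies equally to $\boldsymbol{\Psi}$ (rank-deficient in general) and to $\tilde{\boldsymbol{\Psi}}$ (full column rank by Proposition~\ref{thm:projection}). If a more self-contained argument is preferred, one can avoid even citing the consequence in Proposition~\ref{thm:projection} and argue from $\colspace(\boldsymbol{\Psi}) = \colspace(\tilde{\boldsymbol{\Psi}})$ alone: both $\boldsymbol{\Psi}\hat{\boldsymbol{w}}$ and $\tilde{\boldsymbol{\Psi}}\hat{\tilde{\boldsymbol{w}}}$ are, by definition of the least-squares solution, the unique element of this common subspace nearest to $\boldsymbol{Q}(s,\Acal)$ in Euclidean norm, hence they coincide.
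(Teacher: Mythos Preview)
Your proposal is correct and follows exactly the approach the paper intends: the corollary is a direct consequence of Proposition~\ref{thm:projection}'s identity $\boldsymbol{\Psi}\boldsymbol{\Psi}^{+} = \tilde{\boldsymbol{\Psi}}\tilde{\boldsymbol{\Psi}}^{+}$, applied to $\boldsymbol{Q}(s,\Acal)$. The paper does not spell out a separate proof, but its remark after the corollary conveys the same projection argument you give.
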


\begin{remark}
\cref{cor:Q-in-subspace,cor:Q-notin-subspace} imply there are two possible implementations, regardless of whether the true Q-function can be represented by the linear parameterization. Intuitively, both versions try to project the true Q-value vector $\boldsymbol{Q}(s,\Acal)$ for a particular state $s$ onto the subspace spanned by the columns of $\boldsymbol{\Psi}$ or $\tilde{\boldsymbol{\Psi}}$. Since the two matrices have the same column space, the results of the projections are equal. This does not imply $\hat{\boldsymbol{w}}$ and $\hat{\tilde{\boldsymbol{w}}}$ are equal (they cannot be as they have different dimensions), but rather the resultant Q-value estimates are equal, $\hat{\boldsymbol{Q}}(s,\Acal) = \boldsymbol{\Psi} \hat{\boldsymbol{w}} = \tilde{\boldsymbol{\Psi}} \hat{\tilde{\boldsymbol{w}}}$. 
\end{remark}

To make the theorem statements more concrete, we inspect a simple numerical example and verify the theoretical properties. 

\begin{example}
Consider an MDP with  $\Acal = \Acal_1 \times \Acal_2$, where $\Acal_1 = \set{0,1}$ and $\Acal_2 = \set{0,1}$. Consequently, $|\Acal_1| = |\Acal_2| = 2$ and $|\Acal| = 2^2 = 4$. 

Suppose for state $s$ we can write $Q(s,a) = Q(s,[a_1, a_2]) = q_1(s,a_1) + q_2(s,a_2)$ for all $a_1\in\Acal_1, a_2\in\Acal_2$. Then
\begin{align*}
    \boldsymbol{Q}(s,\Acal) &= 
    \begin{bmatrix}
        Q(s, a_1=0, a_2=0) \\
        Q(s, a_1=0, a_2=1) \\
        Q(s, a_1=1, a_2=0) \\
        Q(s, a_1=1, a_2=1) \\
    \end{bmatrix}
    = \begin{bmatrix}
        q_1(s, 0) + q_2(s, 0) \\
        q_1(s, 0) + q_2(s, 1) \\
        q_1(s, 1) + q_2(s, 0) \\
        q_1(s, 1) + q_2(s, 1) \\
    \end{bmatrix}= \begin{bmatrix}
        1 & 0 & 1 & 0 \\
        1 & 0 & 0 & 1 \\
        0 & 1 & 1 & 0 \\
        0 & 1 & 0 & 1 \\
    \end{bmatrix} \begin{bmatrix}
        q_1(s, 0) \\
        q_1(s, 1) \\
        q_2(s, 0) \\
        q_2(s, 1) \\
    \end{bmatrix} \\[1em]
    & = \begin{bmatrix} \vert \\ \rotvert \; \boldsymbol{\Psi} \; \rotvert \\ \vert \end{bmatrix} \begin{bmatrix} \vert \\ \boldsymbol{w} \\ \vert \end{bmatrix}
    \quad \text{where } \boldsymbol{\Psi} = \left[ \begin{matrix}
        1 & 0 & 1 & 0 \\
        1 & 0 & 0 & 1 \\
        0 & 1 & 1 & 0 \\ 
        \coolunder{\boldsymbol{\Psi}_1}{0 & 1} & \coolunder{\boldsymbol{\Psi}_2}{0 & 1}\\
    \end{matrix} \right], \;\;
    \boldsymbol{w} = 
    \begin{bmatrix}
        q_1(s, 0) \\
        q_1(s, 1) \\
        q_2(s, 0) \\
        q_2(s, 1) \\
    \end{bmatrix}
    \begin{matrix}% matrix for right braces
    \coolrightbrace{x \\ x}{\boldsymbol{w}_1}\\
    \coolrightbrace{y \\ y}{\boldsymbol{w}_2}
\end{matrix}
\end{align*}

We can also write
\begin{align*}
    \boldsymbol{Q}(s,\Acal) = \tilde{\boldsymbol{\Psi}} \tilde{\boldsymbol{w}}, \quad \text{where } \tilde{\boldsymbol{\Psi}} = \left[ \begin{matrix}
        1 & 0 & 0 \\
        1 & 0 & 1 \\
        1 & 1 & 0 \\
        1 & 1 & 1 \\
    \end{matrix} \right], \;\;
    \tilde{\boldsymbol{w}} = 
    \begin{bmatrix}
        v_0(s) \\
        u_1(s) \\
        u_2(s) \\
    \end{bmatrix} = 
    \begin{bmatrix}
        q_1(s,0) + q_2(s,0) \\
        q_1(s,1) - q_1(s,0) \\
        q_2(s,1) - q_2(s,0) \\
    \end{bmatrix}
\end{align*}

One can verify that $\rank(\boldsymbol{\Psi}) = \rank(\tilde{\boldsymbol{\Psi}}) = 3$ and $\colspace(\boldsymbol{\Psi}) = \colspace(\tilde{\boldsymbol{\Psi}})$, because the columns of $\tilde{\boldsymbol{\Psi}}$ are linearly independent, but the columns of $\boldsymbol{\Psi}$ are not linearly independent:
\begin{align*}
    \begin{bmatrix} 1 \\ 1 \\ 0 \\ 0 \end{bmatrix} + 
    \begin{bmatrix} 0 \\ 0 \\ 1 \\ 1 \end{bmatrix} - 
    \begin{bmatrix} 1 \\ 0 \\ 1 \\ 0 \end{bmatrix} =
    \begin{bmatrix} 0 \\ 1 \\ 0 \\ 1 \end{bmatrix}. 
\end{align*}

Furthermore, 
\begin{align*}
    \boldsymbol{\Psi}^{+} = \begin{bmatrix}
        \phantom{-}3/8 & \phantom{-}3/8 & -1/8           & -1/8 \\
        -1/8           & -1/8           & \phantom{-}3/8 & \phantom{-}3/8 \\
        \phantom{-}3/8 & -1/8           & \phantom{-}3/8 & -1/8 \\
        -1/8           & \phantom{-}3/8 & -1/8           & \phantom{-}3/8 \\
    \end{bmatrix}, \;\;
    \tilde{\boldsymbol{\Psi}}^{+} = \begin{bmatrix}
        \phantom{-}3/4 & \phantom{-}1/4 & \phantom{-}1/4 & -1/4 \\
        -1/2           & -1/2           & \phantom{-}1/2 & \phantom{-}1/2 \\
        -1/2           & \phantom{-}1/2 & -1/2           & \phantom{-}1/2 \\
    \end{bmatrix}
\end{align*}
and 
\begin{align*}
    \boldsymbol{\Psi} \boldsymbol{\Psi}^{+} = \tilde{\boldsymbol{\Psi}} \tilde{\boldsymbol{\Psi}}^{+} = 
    \begin{bmatrix}
        \phantom{-}3/4 & \phantom{-}1/4 & \phantom{-}1/4 & -1/4 \\
        \phantom{-}1/4 & \phantom{-}3/4 & -1/4 & \phantom{-}1/4 \\
        \phantom{-}1/4 & -1/4           & \phantom{-}3/4 & \phantom{-}1/4 \\
        -1/4           & \phantom{-}1/4 & \phantom{-}1/4 & \phantom{-}3/4 \\
    \end{bmatrix}. 
\end{align*}
\end{example}

\begin{proof}[Proof of \cref{thm:projection}]
\quad

First note that $\boldsymbol{\Psi}$ is a tall matrix for non-trivial cases, with more rows than columns, because $|\Acal| = \prod_d |\Acal_d| \geq \sum_d |\Acal_d|$ if $|\Acal_d| \geq 2$ for all $d$ (see \href{https://math.stackexchange.com/questions/2998898/show-that-product-is-larger-than-sum}{proof}). Therefore, the rank of $\boldsymbol{\Psi}$ is the number of linear independent columns of $\boldsymbol{\Psi}$. 

We use the following notation to write matrix $\boldsymbol{\Psi}_{d}$ in terms of its columns:
\begin{align*}
    \boldsymbol{\Psi}_{d} = 
    \begin{bmatrix}
        \vert & & \vert \\
        \boldsymbol{c}_{d,1} & \cdots & \boldsymbol{c}_{d,|\Acal_d|} \\
        \vert & & \vert \\
    \end{bmatrix}. 
\end{align*}

The following statements are true: 

\begin{enumerate}[label={\textbf{{Claim} \arabic*:}}, leftmargin=*]
    \item The columns of $\boldsymbol{\Psi}_d$ are pairwise orthogonal, ${\boldsymbol{c}_{d,j}}^{\intercal}\boldsymbol{c}_{d,j'} = 0, \forall j \neq j'$, and they form an orthogonal basis. This is because each row $\boldsymbol{\psi}_d(\avec^i)^{\intercal}$ is a one-hot vector, containing only one $1$; this implies that out of the two entries in row $i$ of $\boldsymbol{c}_{d,j}$ and $\boldsymbol{c}_{d,j'}$, at least one entry is $0$, and their product must be $0$.
    \item The sum of entries in each row of $\boldsymbol{\Psi}_d$ is $1$, and $\sum_{j=1}^{|\Acal_d|} \boldsymbol{c}_{d,j} = \boldsymbol{1}$ a column vector of 1's with matching size. This is a direct consequence of each row $\boldsymbol{\psi}_d(\avec^i)^{\intercal}$ being a one-hot vector. In other words, $\boldsymbol{1} \in \colspace(\boldsymbol{\Psi}_d)$.
    \item The columns of $\boldsymbol{\Psi}$ are not linearly independent. This is because there is not a unique way to write $\boldsymbol{1}$ as a linear combination of the columns of $\boldsymbol{\Psi}$. For example, $\sum_{j=1}^{|\Acal_d|} \boldsymbol{c}_{d,j} = \sum_{j=1}^{|\Acal_{d'}|} \boldsymbol{c}_{d',j} = \boldsymbol{1}$ for some $d' \neq d$, where we used the columns of $\boldsymbol{\Psi}_d$ and $\boldsymbol{\Psi}_{d'}$. 
    \item $\boldsymbol{1} \notin \colspace(\tilde{\boldsymbol{\Psi}}_1 \cdots \tilde{\boldsymbol{\Psi}}_D)$ because the first entry of every column vector in any $\tilde{\boldsymbol{\Psi}}_d$ is $0$ and no linear combination of them can result in a $1$. Consequently, $\boldsymbol{1} \notin \colspace(\tilde{\boldsymbol{\Psi}}_d)$ for any $d$. 
    \item $\boldsymbol{c}_{d,1} \notin \colspace(\boldsymbol{1}, \tilde{\boldsymbol{\Psi}}_{d'}: d' \neq d)$, where $\boldsymbol{c}_{d,1}$ is the column removed from $\boldsymbol{\Psi}_d$ to construct $\tilde{\boldsymbol{\Psi}}_d$. This can also be seen from the first entry of the column vector: the first entry of $\boldsymbol{c}_{d,1}$ is $1$, and all columns of $\tilde{\boldsymbol{\Psi}}_{d'}: d' \neq d$ have the first entry being $0$. 
    \item $\boldsymbol{c}_{d,j} \notin \colspace(\boldsymbol{1}, \tilde{\boldsymbol{\Psi}}_1 \cdots \tilde{\boldsymbol{\Psi}}_D \setminus \set{\boldsymbol{c}_{d,j}})$ for $j > 1$. By expressing $\boldsymbol{c}_{d,j} = (\boldsymbol{1} - \sum_{j'=2, j' \neq j}^{|\Acal_d|} \boldsymbol{c}_{d,j'})  + (- \boldsymbol{c}_{d,1})$, we observe that the first part of the sum lies in the column space, while the second part does not (from the previous claim, $\boldsymbol{c}_{d,1}$ is not in the column space of $\tilde{\boldsymbol{\Psi}}_{d'}$ where $d' \neq d$; this is because within $\tilde{\boldsymbol{\Psi}}_{d}$, the only way is $\boldsymbol{c}_{d,1} = \boldsymbol{1} - \sum_{j'=2}^{|\Acal_d|} \boldsymbol{c}_{d,j'}$ and we have excluded one of the columns $\boldsymbol{c}_{d,j}$ from the column space). 
    % Additionally, since $\sum_{l=2}^{|\Acal_d|} \boldsymbol{c}_{d,j} = \boldsymbol{1} - \boldsymbol{c}_{d,1}$, this implies that the column span of $\tilde{\boldsymbol{\Phi}}_d$ does not lie in the column span of $\tilde{\boldsymbol{\Phi}}_{d'}: d' \neq d$.  
\end{enumerate}

Combining these claims implies that each column of $\tilde{\boldsymbol{\Psi}}$ cannot be expressed as a linear combination of all other columns, and thus $\tilde{\boldsymbol{\Psi}}$ has full column rank, $\rank(\tilde{\boldsymbol{\Psi}}) = \ncols(\tilde{\boldsymbol{\Psi}}) = 1 + \sum_{d=1}^{D} (|\Acal_d|-1)$. It follows that $\tilde{\boldsymbol{\Psi}}$ contains the linearly independent subset of columns from $\boldsymbol{\Psi}$, and their column spaces and ranks are equal. 

$\boldsymbol{\Psi} \boldsymbol{\Psi}^{+}$ and $\tilde{\boldsymbol{\Psi}} \tilde{\boldsymbol{\Psi}}^{+}$ are orthogonal projection matrices onto the column space of $\boldsymbol{\Psi}$ and $\tilde{\boldsymbol{\Psi}}$, respectively. Since $\colspace(\boldsymbol{\Phi}) = \colspace(\tilde{\boldsymbol{\Psi}})$, it follows that $\boldsymbol{\Psi} \boldsymbol{\Psi}^{+} = \tilde{\boldsymbol{\Psi}} \tilde{\boldsymbol{\Psi}}^{+}$. 
\end{proof}

\subsection{A Necessary Condition for Unbiasedness} \label{appx:necessary}

Consider the matrix form of the Bellman equation (cf. Sec 2 of \citet{lagoudakis2003least}): 
\begin{equation*}
    \boldsymbol{Q} = \boldsymbol{R} + \gamma \boldsymbol{P}^{\pi} \boldsymbol{Q}
\end{equation*}

where $\boldsymbol{Q} \in \mathbb{R}^{|\Scal||\Acal|}$ is a vector containing the Q-values for all state-action pairs, $\boldsymbol{R}\in \mathbb{R}^{|\Scal||\Acal|}$, and $\boldsymbol{P}^{\pi} \in \mathbb{R}^{|\Scal||\Acal|\times|\Scal||\Acal|}$ is the $(s,a)$-transition matrix induced by the MDP and policy $\pi$. Solving this equation gives us the Q-function in closed form:
\begin{equation}
    \boldsymbol{Q} = (\boldsymbol{I} - \gamma \boldsymbol{P}^{\pi})^{-1} \boldsymbol{R} \label{eqn:bellman-matrix}
\end{equation}
where $\boldsymbol{I} \in \mathbb{R}^{|\Scal||\Acal|\times|\Scal||\Acal|}$. 

To derive a necessary condition, we start by assuming that the Q-function is representable by the linear parameterization, i.e., there exists $\boldsymbol{W} \in \mathbb{R}^{(\sum_{d=1}^{D}|\Acal_d|) \times |\mathcal{S}|}$ such that $\vect^{-1}_{|\Acal|\times|\Scal|}(\boldsymbol{Q}) = \boldsymbol{\Psi} \boldsymbol{W}$. Here, $\vect^{-1}_{|\Acal|\times|\Scal|}$ is the inverse vectorization operator that reshapes the vector of all Q-values into a matrix of size $|\Acal|\times|\Scal|$, and $\boldsymbol{\Psi} \in \{0,1\}^{|\Acal| \times (\sum_{d=1}^{D}|\Acal_d|)}$ is defined in \cref{appx:subspace_proof}. Substituting \cref{eqn:bellman-matrix} into the premise gives us a necessary condition: if there exists $\boldsymbol{W}\in \mathbb{R}^{(\sum_{d=1}^{D}|\Acal_d|) \times |\mathcal{S}|}$ such that 
\[\vect^{-1}_{|\Acal|\times|\Scal|} \big((\boldsymbol{I} - \gamma \boldsymbol{P}^{\pi})^{-1} \boldsymbol{R} \big) = \boldsymbol{\Psi} \boldsymbol{W}\]

Unfortunately, unlike the sufficient conditions in \cref{thm:sufficient-abstract} (and \cref{thm:sufficient-factored}), this necessary condition is not as clean and likely not verifiable in most settings. The matrix inverse and $\vect^{-1}$ reshaping operation make it challenging to further manipulate the expression. This highlights the non-trivial nature of the problem.

\subsection{Variance Reduction in the Bandit Setting} \label{appx:rademacher}

\textbf{\textit{Background on Rademacher complexity.}}
Let $\Fcal$ be a family of functions mapping from $\mathbb{R}^{d}$ to $\mathbb{R}$. The empirical Rademacher complexity of $\Fcal$ for a sample $\mathcal{S} = \{\mathbf{x}_1, \dots, \mathbf{x}_m\}$ is defined by
\[ \widehat{\mathfrak{R}}_{\mathcal{S}}(\Fcal) = \underset{\boldsymbol{\sigma}}{\mathbb{E}}\left[ \sup_{f\in\Fcal} \frac{1}{m}\sum_{i=1}^{m} \sigma_i f(\mathbf{x}_i)\right], \]
where $\boldsymbol{\sigma} = [\sigma_1, \dots, \sigma_m]$ is a vector of i.i.d. Rademacher variables, i.e., independent uniform r.v.s taking values in $\{-1,+1\}$. 

For a matrix $\mathbf{M} \in \mathbb{R}^{m \times D}$, define the $(p,q)$-group norm as the $q$-norm of the $p$-norm of the columns of $\mathbf{M}$, that is $\|\mathbf{M}\|_{p,q} = \|[\|\mathbf{M}_1\|_p, \cdots, \|\mathbf{M}_D\|_p] \|_q$, where $\mathbf{M}_j$ is the $j$-th column of $\mathbf{M}$. 

In \citet{awasthi2020rademacher}, Theorem 2 stated that: let $\mathcal{F} = \{ f = \mathbf{w}^{\intercal} \mathbf{x} : \|\mathbf{w}\|_p \leq A \}$ be a family of linear functions defined over $\mathbb{R}^d$ with bounded weight in $\ell_2$-norm, then the empirical Rademacher complexity of $\Fcal$ for a sample $\mathcal{S} = \{\mathbf{x}_1, \dots, \mathbf{x}_m\}$ satisfies the following lower bound (where $\mathbf{X} = [\mathbf{x}_1 \dots \mathbf{x}_m]^{\intercal}$): 
\[ \widehat{\mathfrak{R}}_{\mathcal{S}}(\Fcal) \geq \frac{A}{\sqrt{2}m}\|\mathbf{X}\|_{2,2}. \]

\begin{proof}[Proof for \cref{thm:variance}]
For the sake of argument, we consider the one-timestep bandit setting; extension to the sequential setting can be similarly derived following \citet{chen2019infotheory,duan2021risk}. Let the true generative model be 
\(\boldsymbol{Q}^* = \boldsymbol{\Psi} \boldsymbol{r} + \boldsymbol{\psi}_{\scriptscriptstyle\textsf{Interact}} r_{\scriptscriptstyle\textsf{Interact}}\) (details in \cref{appx:OVB}). We formally show the reduction in the variance of the estimators, by comparing the lower bound of their respective empirical Rademacher complexities. A smaller Rademacher complexity translates into lower variance estimators. 

Suppose we obtain a sample of $m$ actions and apply the linear approximation. Our approach for factored action space corresponds to the matrix $\boldsymbol{X} \in \set{0,1}^{m \times (\sum_{d}|\Acal_d|)}$, obtained by stacking the corresponding rows of $\boldsymbol{\Psi}$ (recall \cref{def:action-matrix}). The complete, combinatorial action space corresponds to the matrix $\boldsymbol{X}' = [\boldsymbol{X}, \boldsymbol{x}_{\scriptscriptstyle\textsf{Interact}}] \in \set{0,1}^{m \times (1+\sum_{d}|\Acal_d|)}$ by adding the corresponding rows of $\boldsymbol{\psi}_{\scriptscriptstyle\textsf{Interact}}$. By definition, $\|\boldsymbol{X}\|_{p,q} < \|\boldsymbol{X}'\|_{p,q}$, since the former drops a column with non-zero norm that exists in the latter. 

Consider the following two function families, for the factored action space and the complete action space respectively:
\begin{align*}
    \mathcal{F}_{\mathrm{\scriptscriptstyle F}} & = \{ f = \mathbf{w}_{\mathrm{\scriptscriptstyle F}}^\intercal \mathbf{x}: \|\mathbf{w}_{\mathrm{\scriptscriptstyle F}}\|_2 \leq A\} \\
    \mathcal{F}_{\mathrm{\scriptscriptstyle C}} & = \{ f = \mathbf{w}_{\mathrm{\scriptscriptstyle C}}^\intercal \mathbf{x}': \|\mathbf{w}_{\mathrm{\scriptscriptstyle C}}\|_2 \leq A\},
\end{align*}
for some $A >0$.
A straightforward application of Theorem 2 of \citet{awasthi2020rademacher} shows that the lower bound on the Rademacher complexity of the of the factored action space is smaller than that of the complete action space, which completes our argument.
\end{proof}

\subsection{Standardization of Rewards for the Bandit Setting (\cref{thm:argmax-preserve})} \label{appx:bandit-standardize}

\begin{figure}[h]
    \centering
    \includegraphics[valign=c]{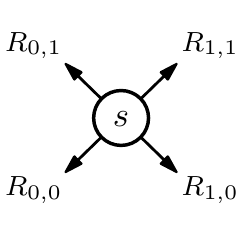} \qquad $\leadsto$ \quad \includegraphics[valign=c]{fig/bandit2d.pdf} 
    \caption{Standardization of rewards.}
    \label{fig:bandit-reward-standard}
\end{figure}

Suppose the rewards of the four arms are $[R_{0,0}, R_{0,1}, R_{1,0}, R_{1,1}]$. We can apply the following transformations to reduce any reward function to the form of $[0, \alpha, 1, 1+\alpha+\beta]$, and these transformations do not affect the least-squares solution: 
\begin{itemize}
    \item If $R_{0,0} = R_{1,0}$ and $R_{0,1} = R_{1,1}$, we can ignore x-axis sub-action as setting it to either $0$ ($\leftarrow$) or $1$ ($\rightarrow$) does not affect the reward. Similarly, if $R_{0,0} = R_{0,1}$ and $R_{1,0} = R_{1,1}$, we can ignore y-axis sub-action. In both cases, this reduces to a one-dimensional action space which we do not discuss further. 
    \item Now at least one of the following is false: $R_{0,0} = R_{1,0}$ or $R_{0,1} = R_{1,1}$. If $R_{0,0} \neq R_{1,0}$, skip this step. Otherwise, it must be that $R_{0,0} = R_{1,0}$ and $R_{0,1} \neq R_{1,1}$. Swap the role of down vs. up such that the new $R_{0,0} \neq R_{1,0}$. 
    \item If $R_{0,0} < R_{1,0}$, skip this step. Otherwise it must be that $R_{0,0} > R_{1,0}$. Swap the role of left vs. right so that $R_{0,0} < R_{1,0}$. 
    \item If $R_{0,0} \neq 0$, subtract $R_{0,0}$ from all rewards so that the new $R_{0,0} = 0$. 
    \item Now $R_{1,0} > R_{0,0} > 0$ must be positive. If $R_{1,0} \neq 1$, divide all rewards by $R_{1,0}$ so that the new $R_{1,0} = 1$. 
    \item Lastly, we should have $R_{0,0} = 0$ and $R_{1,0} = 1$. Set $\alpha = R_{0,1}$ and $\beta = R_{1,1} - R_{1,0} - R_{0,1}$. 
\end{itemize}

\subsection{Omitted-Variable Bias in the Bandit Setting (\cref{thm:argmax-preserve})} \label{appx:OVB}

Suppose the true generative model is
\[ 
Q^*(\avec) = 
\mathbbm{1}_{(a_{\mathrm{x}} = {\scriptscriptstyle\textsf{Left}})} r_{\scriptscriptstyle\textsf{Left}} + 
\mathbbm{1}_{(a_{\mathrm{x}} = {\scriptscriptstyle\textsf{Right}})} r_{\scriptscriptstyle\textsf{Right}} + 
\mathbbm{1}_{(a_{\mathrm{y}} = {\scriptscriptstyle\textsf{Down}})} r_{\scriptscriptstyle\textsf{Down}} + 
\mathbbm{1}_{(a_{\mathrm{y}} = {\scriptscriptstyle\textsf{Up}})} r_{\scriptscriptstyle\textsf{Up}} + 
\mathbbm{1}_{(\avec = {\scriptscriptstyle\textsf{Right,Up}})} r_{\scriptscriptstyle\textsf{Interact}} 
\]

In other words, 

\[
    \begin{bmatrix} Q^*({\scriptstyle\swarrow}) \\ Q^*({\scriptstyle\nwarrow}) \\ Q^*({\scriptstyle\searrow}) \\ Q^*({\scriptstyle\nearrow}) \end{bmatrix} 
    = \begin{bmatrix}
        1 & 0 & 1 & 0 \\
        1 & 0 & 0 & 1 \\
        0 & 1 & 1 & 0 \\
        0 & 1 & 0 & 1 \\
    \end{bmatrix} \begin{bmatrix}
    r_{\scriptscriptstyle\textsf{Left}} \\ r_{\scriptscriptstyle\textsf{Right}} \\ r_{\scriptscriptstyle\textsf{Down}} \\ r_{\scriptscriptstyle\textsf{Up}}
    \end{bmatrix} + \begin{bmatrix} 0 \\ 0 \\ 0 \\ 1 \end{bmatrix} r_{\scriptscriptstyle\textsf{Interact}}
    \quad \leadsto \quad \boldsymbol{Q}^* = \boldsymbol{\Psi} \boldsymbol{r} + \boldsymbol{\psi}_{\scriptscriptstyle\textsf{Interact}} r_{\scriptscriptstyle\textsf{Interact}}
\]

Here, $r_{\scriptscriptstyle\textsf{Left}},
r_{\scriptscriptstyle\textsf{Right}},
r_{\scriptscriptstyle\textsf{Down}},
r_{\scriptscriptstyle\textsf{Up}},
r_{\scriptscriptstyle\textsf{Interact}}$ are parameters of the generative model. Note that the matrix $[\boldsymbol{\Psi}, \boldsymbol{\psi}_{\scriptscriptstyle\textsf{Interact}}]$ has a column space of $\mathbb{R}^{4}$, i.e., this generative model captures every possible reward configuration of the four actions. 

Applying our proposed linear approximation translates to ``dropping'' the interaction parameter, $r_{\scriptscriptstyle\textsf{Interact}}$, and estimate the remaining four parameters. This leads to a form of omitted-variable bias, which can be computed as:
\begin{align*}
\boldsymbol{\Psi}^{+} \boldsymbol{\psi}_{\scriptscriptstyle\textsf{Interact}} r_{\scriptscriptstyle\textsf{Interact}} =&\begin{bmatrix}
    1 & 0 & 1 & 0 \\
    1 & 0 & 0 & 1 \\
    0 & 1 & 1 & 0 \\
    0 & 1 & 0 & 1 \\
\end{bmatrix}^{+} \begin{bmatrix} 0 \\ 0 \\ 0 \\ 1 \end{bmatrix} r_{\scriptscriptstyle\textsf{Interact}} \\
=& \begin{bmatrix}
    \phantom{-}3/8 & \phantom{-}3/8 & -1/8           & -1/8 \\
    -1/8           & -1/8           & \phantom{-}3/8 & \phantom{-}3/8 \\
    \phantom{-}3/8 & -1/8           & \phantom{-}3/8 & -1/8 \\
    -1/8           & \phantom{-}3/8 & -1/8           & \phantom{-}3/8 \\
\end{bmatrix} \begin{bmatrix} 0 \\ 0 \\ 0 \\ 1 \end{bmatrix} r_{\scriptscriptstyle\textsf{Interact}} =  \begin{bmatrix} -1/8 \\ \phantom{-}3/8 \\ -1/8 \\ \phantom{-}3/8 \end{bmatrix} r_{\scriptscriptstyle\textsf{Interact}}
\end{align*}

The biased estimate of the four parameters are:
\begin{align*}
\setlength\arraycolsep{1pt}
\def\arraystretch{1.2}
\hat{\boldsymbol{r}} = \boldsymbol{r} + \boldsymbol{\Psi}^{+} \boldsymbol{\psi}_{\scriptscriptstyle\textsf{Interact}} r_{\scriptscriptstyle\textsf{Interact}} \quad \leadsto \quad \begin{bmatrix}
    \hat{r}_{\scriptscriptstyle\textsf{Left}} \\ \hat{r}_{\scriptscriptstyle\textsf{Right}} \\ \hat{r}_{\scriptscriptstyle\textsf{Down}} \\ \hat{r}_{\scriptscriptstyle\textsf{Up}}
\end{bmatrix}
= 
\begin{bmatrix}
    r_{\scriptscriptstyle\textsf{Left}} &-& \frac{1}{8} r_{\scriptscriptstyle\textsf{Interact}} \\ r_{\scriptscriptstyle\textsf{Right}} &+& \frac{3}{8} r_{\scriptscriptstyle\textsf{Interact}} \\ r_{\scriptscriptstyle\textsf{Down}} &-& \frac{1}{8} r_{\scriptscriptstyle\textsf{Interact}} \\ r_{\scriptscriptstyle\textsf{Up}} &+& \frac{3}{8} r_{\scriptscriptstyle\textsf{Interact}}
\end{bmatrix}
\end{align*}

and the estimated Q-values are:
\begin{align*}
\hat{\boldsymbol{Q}} = \begin{bmatrix} \hat{Q}({\scriptstyle\swarrow}) \\ \hat{Q}({\scriptstyle\nwarrow}) \\ \hat{Q}({\scriptstyle\searrow}) \\ \hat{Q}({\scriptstyle\nearrow}) \end{bmatrix} 
&= \begin{bmatrix}
    1 & 0 & 1 & 0 \\
    1 & 0 & 0 & 1 \\
    0 & 1 & 1 & 0 \\
    0 & 1 & 0 & 1 \\
\end{bmatrix}
\setlength\arraycolsep{1pt}\def\arraystretch{1.2}
\begin{bmatrix}
r_{\scriptscriptstyle\textsf{Left}} &-& \frac{1}{8} r_{\scriptscriptstyle\textsf{Interact}} \\ r_{\scriptscriptstyle\textsf{Right}} &+& \frac{3}{8} r_{\scriptscriptstyle\textsf{Interact}} \\ r_{\scriptscriptstyle\textsf{Down}} &-& \frac{1}{8} r_{\scriptscriptstyle\textsf{Interact}} \\ r_{\scriptscriptstyle\textsf{Up}} &+& \frac{3}{8} r_{\scriptscriptstyle\textsf{Interact}}
\end{bmatrix} 
= \setlength\arraycolsep{1pt}\def\arraystretch{1.2}
\begin{bmatrix} 
r_{\scriptscriptstyle\textsf{Left}} + r_{\scriptscriptstyle\textsf{Down}} & - \frac{1}{4} r_{\scriptscriptstyle\textsf{Interact}} \\ r_{\scriptscriptstyle\textsf{Left}} + r_{\scriptscriptstyle\textsf{Up}} & + \frac{1}{4} r_{\scriptscriptstyle\textsf{Interact}} \\ r_{\scriptscriptstyle\textsf{Right}} + r_{\scriptscriptstyle\textsf{Down}} & +\frac{1}{4} r_{\scriptscriptstyle\textsf{Interact}} \\ r_{\scriptscriptstyle\textsf{Right}} + r_{\scriptscriptstyle\textsf{Up}} & +\frac{3}{4} r_{\scriptscriptstyle\textsf{Interact}} \end{bmatrix}
\end{align*}

For the bandit problem in \cref{fig:2d_bandit}a, substituting $r_{\scriptscriptstyle\textsf{Left}} + r_{\scriptscriptstyle\textsf{Down}} = 0$, $r_{\scriptscriptstyle\textsf{Left}} + r_{\scriptscriptstyle\textsf{Up}} = \alpha$, $r_{\scriptscriptstyle\textsf{Right}} + r_{\scriptscriptstyle\textsf{Down}} = 1$, and $r_{\scriptscriptstyle\textsf{Interact}} = \beta$ gives
\begin{align*}
\setlength\arraycolsep{1pt}
\def\arraystretch{1.2}
    \begin{bmatrix} \hat{Q}({\scriptstyle\swarrow}) \\ \hat{Q}({\scriptstyle\nwarrow}) \\ \hat{Q}({\scriptstyle\searrow}) \\ \hat{Q}({\scriptstyle\nearrow}) \end{bmatrix} = \begin{bmatrix} & - \frac{1}{4} \beta \\ \alpha & + \frac{1}{4} \beta \\ 1 & +\frac{1}{4} \beta \\ 1 + \alpha & +\frac{3}{4} \beta \end{bmatrix}
\end{align*}
which is the solution we presented in \cref{fig:2d_bandit}c.

\subsection{Accounting for Sub-action Interactions} \label{appx:interactions}
When the interaction effect is not negligible and can lead to suboptimal performance, one solution is to explicitly encode the residual interaction terms in the decomposed Q-function by letting $Q(s, \avec) = \sum_{d=1}^{D} q_d(s,a_d) + \mathfrak{R}(\avec)$. The exact parameterization of the residual term $\mathfrak{R}(\avec)$ is problem dependent: one may incorporate \citet{tavakoli2021learning} to systematically consider interactions of certain ``ranks'' (e.g., limiting it to only two-way or three-way interactions), and consider regularizing the magnitude of residual terms so we still benefit from the efficiency gains of the linear decomposition.

\section{More Illustrative Examples} \label{appx:examples}

\begin{figure}[h]
    \centering
    \begin{tabular}{lp{1.25cm}l}
    (a) & & (b) \\[-2ex]
    \includegraphics[scale=1, trim=0 15 10 0, clip, valign=c]{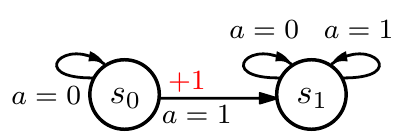} & &
    \includegraphics[scale=0.75, valign=c]{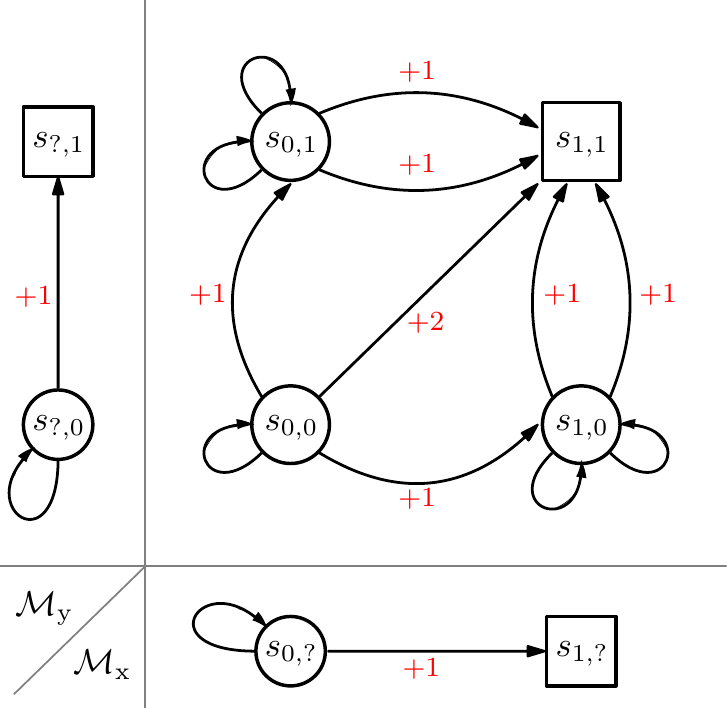}
    \end{tabular}
    \caption{(a) A one-dimensional chain MDP, with an initial state $s_0$ and an absorbing state $s_1$, and two actions $a=0$ (left) and $a=1$ (right). (b) A two-dimensional chain MDP shown together with the component chains $\Mcal_{\mathrm{x}}$ and $\Mcal_{\mathrm{y}}$. Rewards are denoted in \red{red}. Squares $\square$ indicate absorbing states whose outgoing transition arrows are omitted. For readability, in the diagram, the states and actions are laid out following a convention similar to the Cartesian coordinate system so that the bottom left state has index $(0,0)$, and right and up both increase the corresponding coordinate by 1. }
    \label{fig:chains}
\end{figure}

In this appendix, we discuss the building blocks of the examples used in the main paper and provide additional examples to support the theoretical properties presented in \cref{sec:theory}. 

\textbf{One-dimensional Chain.} First, consider the chain problem depicted in \cref{fig:chains}a. The agent always starts in the initial state $s_0$ and can take one of two possible actions: left ($a=0$), which leads the agent to stay at $s_0$, or right ($a=1$), which leads the agent to transition into $s_1$ and receive a reward of $+1$. After reaching the absorbing state $s_1$, both $a=0$ and $a=1$ lead the agent to stay at $s_1$ with zero reward. For $\gamma<1$, a (deterministic) optimal policy is $\pi^{*}(s_0) = 1$, and either action can be taken in $s_1$. Next, we use this MDP to construct a two-dimensional problem. 

\textbf{Two-dimensional Chain.} Following the construction used in \cref{def:factored-MDP}, we consider an MDP $\Mcal = \Mcal_{\mathrm{x}} \times \Mcal_{\mathrm{y}}$ consisting of two chains (the horizontal chain $\Mcal_{\mathrm{x}}$ and the vertical chain $\Mcal_{\mathrm{y}}$) running in parallel, as shown in \cref{fig:chains}b. Their corresponding state spaces are $\Scal_{\mathrm{x}} = \{s_{0,?}, s_{1,?}\}$ and $\Scal_{\mathrm{y}} = \{s_{?,0}, s_{?,1}\}$, which indicate the x- and y-coordinates respectively. There are 4 actions from each state, depicted by diagonal arrows $\{\swarrow, \nwarrow, \searrow, \nearrow\}$; each action $\avec = [a_{\mathrm{x}}, a_{\mathrm{y}}]$ effectively leads the agent to perform $a_{\mathrm{x}}$ in $\Mcal_{\mathrm{x}}$ and $a_{\mathrm{y}}$ in $\Mcal_{\mathrm{y}}$. For example, taking action $\nearrow=[\rightarrow,\uparrow]$ from state $s_{0,0}$ leads the agent to transition into state $s_{1,1}$ and receive a reward of $+2$ (the sum of $+1$ from $\Mcal_{\mathrm{x}}$ and $+1$ from $\Mcal_{\mathrm{y}}$). For $\gamma<1$, an optimal policy for this MDP is to always move up and right, $\pi^{*}(\cdot) = \nearrow = [\rightarrow,\uparrow]$, regardless of which state the agent is in.

\textbf{Satisfying the Sufficient Conditions.} 
Let $\phi_{\mathrm{x}}: \Scal \to \Scal_{\mathrm{x}}$ and $\phi_{\mathrm{y}}: \Scal \to \Scal_{\mathrm{y}}$ be the abstractions. By construction, the transition and reward functions of this MDP satisfy \cref{eqn:abstract-factored-policy,eqn:abstract-factored-reward}. To apply \cref{thm:sufficient-abstract}, the policy must satisfy \cref{eqn:abstract-factored-policy}. In \cref{fig:2d-chain-policies-group1}, we show three such policies (other policies in this category are omitted due to symmetry and transitions that have the same outcome), together with the true Q-functions (with $\gamma=0.9$) and their decompositions in the form of \cref{eqn:Q-decomposition}. 

\textbf{Violating the Sufficient Conditions.}
\begin{itemize}
    \item \textbf{Policy violates \cref{eqn:abstract-factored-policy} - Nonzero bias.} For this setting, we hold the MDP (transitions and rewards) unchanged. In \cref{fig:2d-chain-policies-group2}, we show seven policies that do not satisfy \cref{eqn:abstract-factored-policy}, together with the resultant Q-function and the biased linear approximation with the non-zero approximation error.
    \item \textbf{Transition violates \cref{eqn:abstract-factored-transition} - Nonzero Bias.} \cref{fig:2d-chain-transition} shows an example where one transition has been modified.
    \item \textbf{Reward violates \cref{eqn:abstract-factored-transition} - Nonzero Bias.} \cref{fig:2d-chain-reward} shows an example where one reward has been modified.
    \item \textbf{Transition violates \cref{eqn:abstract-factored-transition}, or policy violates \cref{eqn:abstract-factored-policy} - Zero Bias.} If $\gamma=0$, then the Q-function is simply the immediate reward, and any conditions on the transition or policy can be forgone. 
    \item \textbf{Reward violates \cref{eqn:abstract-factored-reward} - Zero Bias.} It is possible to construct reward functions adversarially such that $r$ itself does not satisfy the condition, and yet $Q$ can be linearly decomposed. See \cref{fig:2d-chain-reward-adversarial} for an example. 
\end{itemize}

\begin{figure}
    \centering
    \setlength{\fboxsep}{1pt}
\centerline{\scalebox{0.8}{
    \begin{tabular}{ccccc}
    \toprule
    Policy \(\pi\) & MDP diagram \phantom{+} &
    \multicolumn{1}{c@{\hspace*{\tabcolsep}\makebox[0pt]{$=$}}}{\(Q^{\pi}\)} & 
    \multicolumn{1}{c@{\hspace*{\tabcolsep}\makebox[0pt]{$+$}}}{\(Q_{\mathrm{x}}\)} & 
    \(Q_{\mathrm{y}}\) \\
    \midrule
    
    Optimal policy $\pi^*$ & & &  \\
    \(
    \begin{matrix} s_{0,0} \\ s_{0,1} \\ s_{1,0} \\ s_{1,1} \end{matrix}
    \begin{bmatrix} \nearrow \\ \nearrow \\ \nearrow \\ \nearrow \end{bmatrix} =
    \begin{bmatrix} \rightarrow,\uparrow \\ \rightarrow,\uparrow \\ \rightarrow,\uparrow \\ \rightarrow,\uparrow \end{bmatrix}
    \) 
    & \includegraphics[scale=0.75,valign=c]{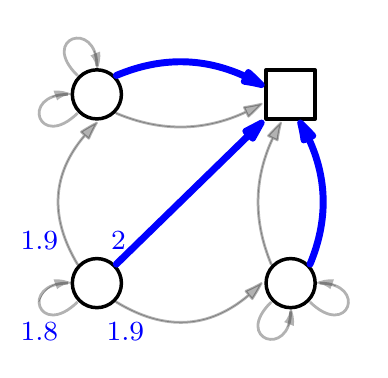} 
    & 
\(\begin{bNiceMatrix}[first-row,first-col]
& \swarrow & \nwarrow & \searrow & \nearrow \\
s_{0,0} & 1.8 & 1.9 & 1.9 & 2 \\
s_{0,1} & 0.9 & 0.9 & 1 & 1 \\
s_{1,0} & 0.9 & 1 & 0.9 & 1 \\
s_{1,1} & 0 & 0 & 0 & 0
\end{bNiceMatrix}\)
    & 
\(\begin{bNiceMatrix}[first-row,first-col]
& \leftarrow & \leftarrow & \rightarrow & \rightarrow \\
s_{0,?} & 0.9 & 0.9 & 1 & 1 \\
s_{0,?} & 0.9 & 0.9 & 1 & 1 \\
s_{1,?} & 0 & 0 & 0 & 0 \\
s_{1,?} & 0 & 0 & 0 & 0
\end{bNiceMatrix}\)
    &
\(\begin{bNiceMatrix}[first-row,first-col]
& \downarrow & \uparrow & \downarrow & \uparrow \\
s_{?,0} & 0.9 & 1 & 0.9 & 1 \\
s_{?,1} & 0 & 0 & 0 & 0 \\
s_{?,0} & 0.9 & 1 & 0.9 & 1 \\
s_{?,1} & 0 & 0 & 0 & 0
\end{bNiceMatrix}\)
    \\
    
    \midrule
    
    A non-optimal policy \\
    \(
    \begin{matrix} s_{0,0} \\ s_{0,1} \\ s_{1,0} \\ s_{1,1} \end{matrix}
    \begin{bmatrix} \nwarrow \\ \nwarrow \\ \nearrow \\ \nearrow \end{bmatrix} =
    \begin{bmatrix} \leftarrow,\uparrow \\ \leftarrow,\uparrow \\ \rightarrow,\uparrow \\ \rightarrow,\uparrow \end{bmatrix}
    \) 
    & \includegraphics[scale=0.75,valign=c]{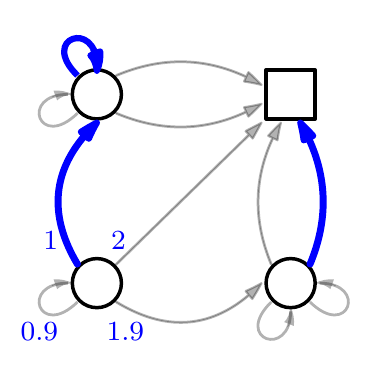} 
    & 
\(\begin{bNiceMatrix}[first-row,first-col]
& \swarrow & \nwarrow & \searrow & \nearrow \\
s_{0,0} & 0.9 & 1 & 1.9 & 2 \\
s_{0,1} & 0 & 0 & 1 & 1 \\
s_{1,0} & 0.9 & 1 & 0.9 & 1 \\
s_{1,1} & 0 & 0 & 0 & 0
\end{bNiceMatrix}\)
    & 
\(\begin{bNiceMatrix}[first-row,first-col]
& \leftarrow & \leftarrow & \rightarrow & \rightarrow \\
s_{0,?} & 0 & 0 & 1 & 1 \\
s_{0,?} & 0 & 0 & 1 & 1 \\
s_{1,?} & 0 & 0 & 0 & 0 \\
s_{1,?} & 0 & 0 & 0 & 0
\end{bNiceMatrix}\)
    &
\(\begin{bNiceMatrix}[first-row,first-col]
& \downarrow & \uparrow & \downarrow & \uparrow \\
s_{?,0} & 0.9 & 1 & 0.9 & 1 \\
s_{?,1} & 0 & 0 & 0 & 0 \\
s_{?,0} & 0.9 & 1 & 0.9 & 1 \\
s_{?,1} & 0 & 0 & 0 & 0
\end{bNiceMatrix}\)
    \\
    
    \midrule
    Another non-optimal policy \\
    \(
    \begin{matrix} s_{0,0} \\ s_{0,1} \\ s_{1,0} \\ s_{1,1} \end{matrix}
    \begin{bmatrix} \swarrow \\ \nwarrow \\ \searrow \\ \nearrow \end{bmatrix} =
    \begin{bmatrix} \leftarrow,\downarrow \\ \leftarrow,\uparrow \\ \rightarrow,\downarrow \\ \rightarrow,\uparrow \end{bmatrix}
    \) 
    & \includegraphics[scale=0.75,valign=c]{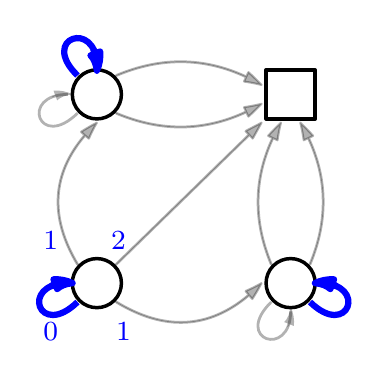} 
    &
\(\begin{bNiceMatrix}[first-row,first-col]
& \swarrow & \nwarrow & \searrow & \nearrow \\
s_{0,0} & 0 & 1 & 1 & 2 \\
s_{0,1} & 0 & 0 & 1 & 1 \\
s_{1,0} & 0 & 1 & 0 & 1 \\
s_{1,1} & 0 & 0 & 0 & 0
\end{bNiceMatrix}\)
    & 
\(\begin{bNiceMatrix}[first-row,first-col]
& \leftarrow & \leftarrow & \rightarrow & \rightarrow \\
s_{0,?} & 0 & 0 & 1 & 1 \\
s_{0,?} & 0 & 0 & 1 & 1 \\
s_{1,?} & 0 & 0 & 0 & 0 \\
s_{1,?} & 0 & 0 & 0 & 0
\end{bNiceMatrix}\)
    &
\(\begin{bNiceMatrix}[first-row,first-col]
& \downarrow & \uparrow & \downarrow & \uparrow \\
s_{?,0} & 0 & 1 & 0 & 1 \\
s_{?,1} & 0 & 0 & 0 & 0 \\
s_{?,0} & 0 & 1 & 0 & 1 \\
s_{?,1} & 0 & 0 & 0 & 0
\end{bNiceMatrix}\)
    \\
    \bottomrule
    \end{tabular}
}}
    \caption{Example MDPs and policies where \cref{thm:sufficient-factored} applies, for the optimal policy and two particular non-optimal policies. $\gamma=0.9$. We show the linear decomposition of the Q-function into $Q_{\mathrm{x}}$ and $Q_{\mathrm{y}}$. $Q_{\mathrm{x}}$ only depends on the x-coordinate of state and the sub-action that moves $\leftarrow$ or $\rightarrow$; $Q_{\mathrm{y}}$ only depends on the y-coordinate of state and the sub-action that moves $\downarrow$ or $\uparrow$. }
    \label{fig:2d-chain-policies-group1}
\end{figure}

\begin{figure}
    \centering
    \setlength{\fboxsep}{1pt}
\centerline{\scalebox{0.725}{
\begin{tabular}{cc}
    \begin{tabular}{cccc}
    \toprule
    \(\pi(\Scal)\) & MDP diagram \phantom{+} & \(Q^{\pi}(s_{0,0}, \Acal)\) & \(\hat{Q}(s_{0,0}, \Acal)\) \\
    
    \midrule
    
    \(
    \begin{matrix} s_{0,0} \\ s_{0,1} \\ s_{1,0} \\ s_{1,1} \end{matrix}
    \begin{bmatrix} \nwarrow \\ \nearrow \\ \nearrow \\ \nearrow \end{bmatrix} =
    \begin{bmatrix} \hspace*{-\fboxsep}{\colorbox{yellow!75}{$\leftarrow$}}\hspace*{-\fboxsep},\uparrow \\ \hspace*{-\fboxsep}{\colorbox{yellow!75}{$\rightarrow$}}\hspace*{-\fboxsep},\uparrow \\ \rightarrow,\uparrow \\ \rightarrow,\uparrow \end{bmatrix}
    \)
    & \includegraphics[scale=0.75,valign=c]{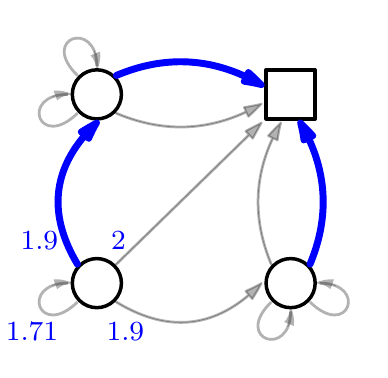} 
    & 
    \(
    \begin{matrix} \swarrow \\ \nwarrow \\ \searrow \\ \nearrow \end{matrix}
    \begin{bmatrix} 1.71 \\ 1.9 \\ 1.9 \\ 2 \end{bmatrix}
    \)
    & 
    \(
    \begin{matrix} \swarrow \\ \nwarrow \\ \searrow \\ \nearrow \end{matrix}
    \begin{bmatrix} 1.7325 \\ 1.8775 \\ 1.8775 \\ 2.0225 \end{bmatrix}
    \) \\
    
    \midrule
    
    \(
    \begin{matrix} s_{0,0} \\ s_{0,1} \\ s_{1,0} \\ s_{1,1} \end{matrix}
    \begin{bmatrix} \swarrow \\ \nearrow \\ \nearrow \\ \nearrow \end{bmatrix} =
    \begin{bmatrix} \hspace*{-\fboxsep}{\colorbox{yellow!75}{$\leftarrow$}}\hspace*{-\fboxsep},\hspace*{-\fboxsep}{\colorbox{pink!75}{$\downarrow$}}\hspace*{-\fboxsep} \\ \hspace*{-\fboxsep}{\colorbox{yellow!75}{$\rightarrow$}}\hspace*{-\fboxsep},\uparrow \\ \rightarrow,\hspace*{-\fboxsep}{\colorbox{pink!75}{$\uparrow$}}\hspace*{-\fboxsep} \\ \rightarrow,\uparrow \end{bmatrix}
    \)
    & \includegraphics[scale=0.75,valign=c]{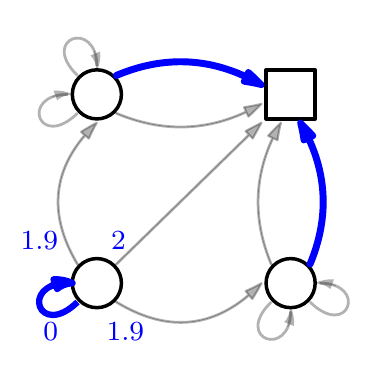}
    & 
    \(
    \begin{matrix} \swarrow \\ \nwarrow \\ \searrow \\ \nearrow \end{matrix}
    \begin{bmatrix} 0 \\ 1.9 \\ 1.9 \\ 2 \end{bmatrix}
    \)
    & 
    \(
    \begin{matrix} \swarrow \\ \nwarrow \\ \searrow \\ \nearrow \end{matrix}
    \begin{bmatrix} 0.45 \\ 1.45 \\ 1.45 \\ 2.45 \end{bmatrix}
    \) \\
    
    \midrule
    
    \(
    \begin{matrix} s_{0,0} \\ s_{0,1} \\ s_{1,0} \\ s_{1,1} \end{matrix}
    \begin{bmatrix} \searrow \\ \swarrow \\ \searrow \\ \searrow \end{bmatrix} =
    \begin{bmatrix} \hspace*{-\fboxsep}{\colorbox{yellow!75}{$\rightarrow$}}\hspace*{-\fboxsep},\downarrow \\ \hspace*{-\fboxsep}{\colorbox{yellow!75}{$\leftarrow$}}\hspace*{-\fboxsep},\downarrow \\ \rightarrow,\downarrow \\ \rightarrow,\downarrow \end{bmatrix}
    \)
    & \includegraphics[scale=0.75,valign=c]{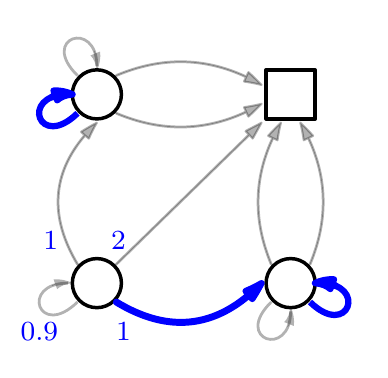}
    & 
    \(
    \begin{matrix} \swarrow \\ \nwarrow \\ \searrow \\ \nearrow \end{matrix}
    \begin{bmatrix} 0.9 \\ 1 \\ 1 \\ 2 \end{bmatrix}
    \)
    & 
    \(
    \begin{matrix} \swarrow \\ \nwarrow \\ \searrow \\ \nearrow \end{matrix}
    \begin{bmatrix} 0.675 \\ 1.225 \\ 1.225 \\ 1.775 \end{bmatrix}
    \) 
    \\
    
    \midrule
    
    \(
    \begin{matrix} s_{0,0} \\ s_{0,1} \\ s_{1,0} \\ s_{1,1} \end{matrix}
    \begin{bmatrix} \searrow \\ \swarrow \\ \searrow \\ \searrow \end{bmatrix} =
    \begin{bmatrix} \hspace*{-\fboxsep}{\colorbox{yellow!75}{$\rightarrow$}}\hspace*{-\fboxsep},\downarrow \\ \hspace*{-\fboxsep}{\colorbox{yellow!75}{$\leftarrow$}}\hspace*{-\fboxsep},\downarrow \\ \rightarrow,\downarrow \\ \rightarrow,\downarrow \end{bmatrix}
    \)
    & \includegraphics[scale=0.75,valign=c]{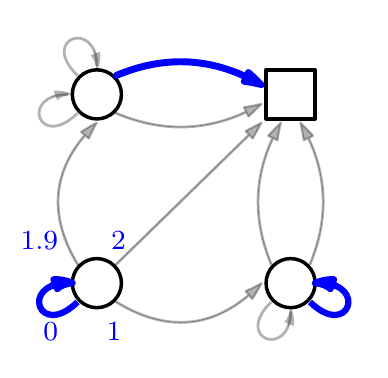}
    & 
    \(
    \begin{matrix} \swarrow \\ \nwarrow \\ \searrow \\ \nearrow \end{matrix}
    \begin{bmatrix} 0 \\ 1.9 \\ 1 \\ 2 \end{bmatrix}
    \)
    & 
    \(
    \begin{matrix} \swarrow \\ \nwarrow \\ \searrow \\ \nearrow \end{matrix}
    \begin{bmatrix} 0.225 \\ 1.675 \\ 0.775 \\ 2.225 \end{bmatrix}
    \) 
    \\
    
    \bottomrule
\end{tabular}
    &
\begin{tabular}{cccc}
    \toprule
    \(\pi(\Scal)\) & MDP diagram \phantom{+} & \(Q^{\pi}(s_{0,0}, \Acal)\) & \(\hat{Q}(s_{0,0}, \Acal)\) \\
    
    \midrule
    
    \(
    \begin{matrix} s_{0,0} \\ s_{0,1} \\ s_{1,0} \\ s_{1,1} \end{matrix}
    \begin{bmatrix} \nearrow \\ \nwarrow \\ \nearrow \\ \nearrow \end{bmatrix} =
    \begin{bmatrix} \hspace*{-\fboxsep}{\colorbox{yellow!75}{$\rightarrow$}}\hspace*{-\fboxsep},\uparrow \\ \hspace*{-\fboxsep}{\colorbox{yellow!75}{$\leftarrow$}}\hspace*{-\fboxsep},\uparrow \\ \rightarrow,\uparrow \\ \rightarrow,\uparrow \end{bmatrix}
    \)
    & \includegraphics[scale=0.75,valign=c]{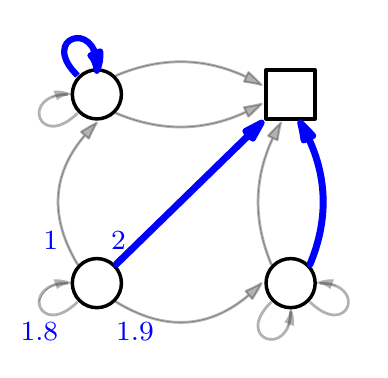} 
    & 
    \(
    \begin{matrix} \swarrow \\ \nwarrow \\ \searrow \\ \nearrow \end{matrix}
    \begin{bmatrix} 1.8 \\ 1 \\ 1.9 \\ 2 \end{bmatrix}
    \)
    & 
    \(
    \begin{matrix} \swarrow \\ \nwarrow \\ \searrow \\ \nearrow \end{matrix}
    \begin{bmatrix} 1.575 \\ 1.225 \\ 2.125 \\ 1.775 \end{bmatrix}
    \) \\
    
    \midrule
    
    \(
    \begin{matrix} s_{0,0} \\ s_{0,1} \\ s_{1,0} \\ s_{1,1} \end{matrix}
    \begin{bmatrix} \searrow \\ \nwarrow \\ \nearrow \\ \nearrow \end{bmatrix} =
    \begin{bmatrix} \hspace*{-\fboxsep}{\colorbox{yellow!75}{$\rightarrow$}}\hspace*{-\fboxsep},\hspace*{-\fboxsep}{\colorbox{pink!75}{$\downarrow$}}\hspace*{-\fboxsep} \\ \hspace*{-\fboxsep}{\colorbox{yellow!75}{$\leftarrow$}}\hspace*{-\fboxsep},\uparrow \\ \rightarrow,\hspace*{-\fboxsep}{\colorbox{pink!75}{$\uparrow$}}\hspace*{-\fboxsep} \\ \rightarrow,\uparrow \end{bmatrix}
    \)
    & \includegraphics[scale=0.75,valign=c]{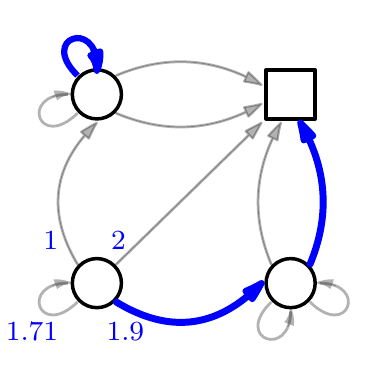}
    & 
    \(
    \begin{matrix} \swarrow \\ \nwarrow \\ \searrow \\ \nearrow \end{matrix}
    \begin{bmatrix} 1.71 \\ 1 \\ 1.9 \\ 2 \end{bmatrix}
    \)
    & 
    \(
    \begin{matrix} \swarrow \\ \nwarrow \\ \searrow \\ \nearrow \end{matrix}
    \begin{bmatrix} 1.5075 \\ 1.2025 \\ 2.1025 \\ 1.7975 \end{bmatrix}
    \) \\
    
    \midrule
    
    \(
    \begin{matrix} s_{0,0} \\ s_{0,1} \\ s_{1,0} \\ s_{1,1} \end{matrix}
    \begin{bmatrix} \nearrow \\ \nwarrow \\ \searrow \\ \searrow \end{bmatrix} =
    \begin{bmatrix} \hspace*{-\fboxsep}{\colorbox{yellow!75}{$\rightarrow$}}\hspace*{-\fboxsep},\hspace*{-\fboxsep}{\colorbox{pink!75}{$\uparrow$}}\hspace*{-\fboxsep} \\ \hspace*{-\fboxsep}{\colorbox{yellow!75}{$\leftarrow$}}\hspace*{-\fboxsep},\downarrow \\ \rightarrow,\hspace*{-\fboxsep}{\colorbox{pink!75}{$\downarrow$}}\hspace*{-\fboxsep} \\ \rightarrow,\downarrow \end{bmatrix}
    \)
    & \includegraphics[scale=0.75,valign=c]{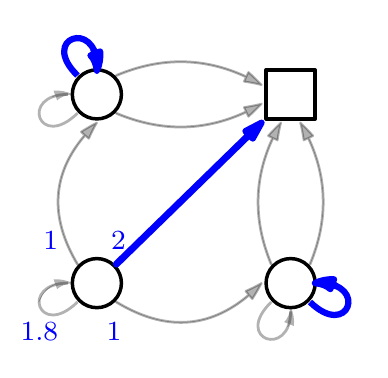}
    & 
    \(
    \begin{matrix} \swarrow \\ \nwarrow \\ \searrow \\ \nearrow \end{matrix}
    \begin{bmatrix} 1.8 \\ 1 \\ 1 \\ 2 \end{bmatrix}
    \)
    & 
    \(
    \begin{matrix} \swarrow \\ \nwarrow \\ \searrow \\ \nearrow \end{matrix}
    \begin{bmatrix} 1.35 \\ 1.45 \\ 1.45 \\ 1.55 \end{bmatrix}
    \) 
    \\
    \midrule
    \vspace{7em} \\
    
    \bottomrule
    \end{tabular}
\end{tabular}
    
}}
    \caption{Example MDPs and policies where \cref{thm:sufficient-factored} does not apply because the policy violates \cref{eqn:abstract-factored-policy} (violations are highlighted). $\gamma=0.9$. For example, in the first case, the policy does not take the same sub-action from $s_{0,0}$ and $s_{0,1}$ with respect to the horizontal chain $\Mcal_{\mathrm{x}}$. Applying the linear approximation produces biased estimates $\hat{Q}$ of the true Q-function, $Q^{\pi}$. }
    \label{fig:2d-chain-policies-group2}
\end{figure}

\begin{figure}[h]
    \centering
    \setlength{\fboxsep}{1pt}
\centerline{\scalebox{0.8}{
    \begin{tabular}{cccc}
    \toprule
    \(\pi(\Scal)\) & MDP diagram \phantom{+} & \(Q^{\pi}(s_{0,0}, \Acal)\) & \(\hat{Q}(s_{0,0}, \Acal)\) \\
    \midrule
    
    \(
    \begin{matrix} s_{0,0} \\ s_{0,1} \\ s_{1,0} \\ s_{1,1} \end{matrix}
    \begin{bmatrix} \nearrow \\ \nearrow \\ \nearrow \\ \nearrow \end{bmatrix} =
    \begin{bmatrix} \rightarrow,\uparrow \\ \rightarrow,\uparrow \\ \rightarrow,\uparrow \\ \rightarrow,\uparrow \end{bmatrix}
    \)  
    & \includegraphics[scale=0.75,valign=c]{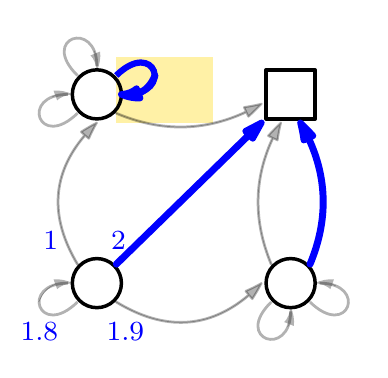} 
    & \(\begin{bmatrix} 1.8 \\ 1.9 \\ 1 \\ 2 \end{bmatrix}\)
    & \(\begin{bmatrix} 1.575 \\ 2.125 \\ 1.225 \\ 1.775 \end{bmatrix}\) \\
    
    \bottomrule
    \end{tabular}
}}
    \caption{Example MDPs and policies where \cref{thm:sufficient-abstract} does not apply because the transition function violates \cref{eqn:abstract-factored-transition}. $\gamma=0.9$. In this example, the highlighted transition corresponding to the action $\nearrow = [\rightarrow, \uparrow]$ from $s_{0,1}$ does not move right ($\rightarrow$ under $\Mcal_{\mathrm{x}}$) to $s_{1,1}$ and instead moves back to state $s_{0,1}$. Applying the linear approximation produces biased estimates $\hat{Q}$ of the true Q-function, $Q^{\pi}$. }
    \label{fig:2d-chain-transition}
\end{figure}

\begin{figure}[h]
    \centering
    \setlength{\fboxsep}{1pt}
\centerline{\scalebox{0.8}{
    \begin{tabular}{cccc}
    \toprule
    Reward function & Q-function & \(Q^{\pi}(s_{0,0}, \Acal)\) & \(\hat{Q}(s_{0,0}, \Acal)\) \\
    \midrule
    
    \includegraphics[scale=0.75,valign=c]{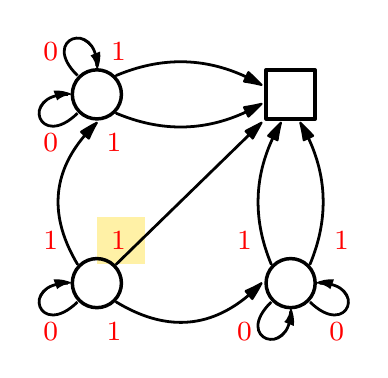} 
    & \includegraphics[scale=0.75,valign=c]{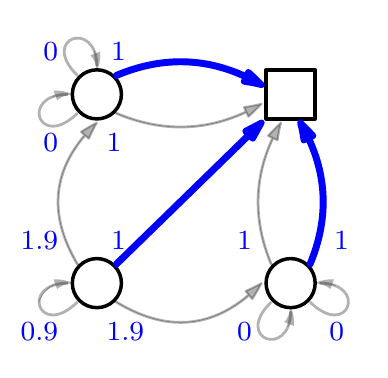}
    & \(\begin{bmatrix} 0.9 \\ 1.9 \\ 1.9 \\ 1 \end{bmatrix}\)
    & \(\begin{bmatrix} 1.375 \\ 1.425 \\ 1.425 \\ 1.475 \end{bmatrix}\) \\
    \bottomrule
    \end{tabular}
}}
    \caption{Example MDPs and policies where \cref{thm:sufficient-abstract} does not apply because the reward function violates \cref{eqn:abstract-factored-reward}. $\gamma=0.9$. In this example, the reward function of the bottom left state $s_{0,0}$ does not satisfy the condition because the reward of $\nearrow$ is $1 \neq 2=1+1$. Applying the linear approximation produces biased estimates $\hat{Q}$ of the true Q-function, $Q^{\pi}$. }
    \label{fig:2d-chain-reward}
\end{figure}

\begin{figure}[h]
    \centering
    \setlength{\fboxsep}{1pt}
\centerline{\scalebox{0.8}{
    \begin{tabular}{ccccc}
    \toprule
    Reward function & Q-function & 
    \multicolumn{1}{c@{\hspace*{\tabcolsep}\makebox[0pt]{$=$}}}{\(Q^{\pi}\)} & 
    \multicolumn{1}{c@{\hspace*{\tabcolsep}\makebox[0pt]{$+$}}}{\(Q_{\mathrm{x}}\)} & 
    \(Q_{\mathrm{y}}\) \\
    \midrule
    
    \includegraphics[scale=0.75,valign=c]{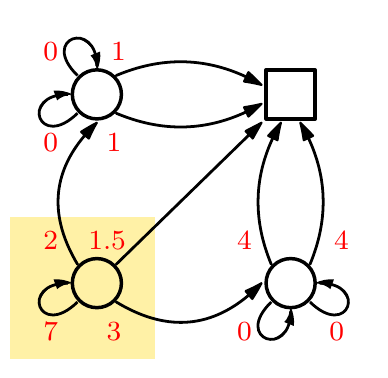} 
    & \includegraphics[scale=0.75,valign=c]{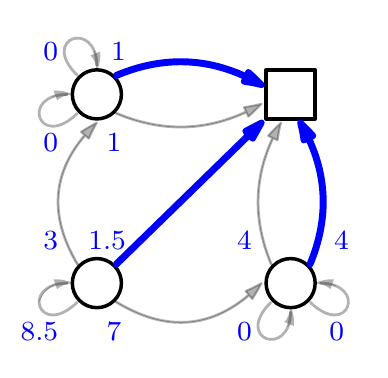}  
    &
\(\begin{bNiceMatrix}[first-row,first-col]
& \swarrow & \nwarrow & \searrow & \nearrow \\
s_{0,0} & 8.5 & 3 & 7 & 1.5 \\
s_{0,1} & 0 & 0 & 1 & 1 \\
s_{1,0} & 0 & 4 & 0 & 4 \\
s_{1,1} & 0 & 0 & 0 & 0
\end{bNiceMatrix}\)
    & 
\(\begin{bNiceMatrix}[first-row,first-col]
& \leftarrow & \leftarrow & \rightarrow & \rightarrow \\
s_{0,?} & 1.5 & 1.5 & 0 & 0 \\
s_{0,?} & 0 & 0 & 1 & 1 \\
s_{1,?} & 0 & 0 & 0 & 0 \\
s_{1,?} & 0 & 0 & 0 & 0
\end{bNiceMatrix}\)
    &
\(\begin{bNiceMatrix}[first-row,first-col]
& \downarrow & \uparrow & \downarrow & \uparrow \\
s_{?,0} & 7 & 1.5 & 7 & 1.5 \\
s_{?,1} & 0 & 0 & 0 & 0 \\
s_{?,0} & 0 & 4 & 0 & 4 \\
s_{?,1} & 0 & 0 & 0 & 0
\end{bNiceMatrix}\)
    \\
    \bottomrule
    \end{tabular}
}}
    \caption{Example MDPs and policies where \cref{thm:sufficient-abstract} does not apply because the reward function violates \cref{eqn:abstract-factored-reward}. $\gamma=1$. In this example, the reward function of the bottom left state $s_{0,0}$ does not satisfy the condition because $7+1.5 \neq 2+3$. However, there exists a linear decomposition of the true Q-function, $Q^{\pi}$, for a particular policy denoted by bold blue arrows. }
    \label{fig:2d-chain-reward-adversarial}
\end{figure}

\clearpage
\section{Experiments} \label{appx:experiments}

\subsection{Sepsis Simulator - Implementation Details} \label{appx:sepsisSim-impl}
When generating the datasets, we follow the default initial state distribution specified in the original implementation. 

By default, we used neural networks consisting of one hidden layer with 1,000 neurons and ReLU activation to allow for function approximators with sufficient expressivity. We trained these networks using the Adam optimizer (default settings) \citep{adam} with a batch size of 64 for a maximum of 100 epochs, applying early stopping on $10\%$ ``validation data'' (specific to each supervised task) with a patience of 10 epochs. We minimized the mean squared error (MSE) for regression tasks (each iteration of FQI). For FQI, we also added value clipping (to be within the range of possible returns $[-1,1]$) when computing bootstrapping targets to ensure a bounded function class and encourage better convergence behavior \citep{mnih2015atari}. 

\subsection{MIMIC Sepsis - Implementation Details} \label{appx:mimic-impl}
The RNN AIS encoder was trained to predict the mean of a unit-variance multivariate Gaussian that outputs the observation at subsequent timesteps, conditioned on the subsequent actions, following the idea in \citet{subramanian2019approximate}. We performed a grid search over the hyperparameters (\cref{tab:mimic_HP}) for training the RNN, selecting the model that achieved the smallest validation loss. Using the best encoder model, we then trained the offline RL policy using BCQ (and factored BCQ), considering validation performance of all checkpoints (saved every 100 iterations, for a maximum of 10,000 iterations) and all combinations of the BCQ hyperparameters (\cref{tab:mimic_HP}). 

\begin{table}[h]
    \centering
    \caption{Hyperparameter values used for training the RNN approximate information state as well as BCQ for offline RL. Discrete BCQ for both the baseline and factored implementation are identical except for the final layer of the Q-networks.}
\scalebox{0.9}{
    \begin{tabular}{lc}
    \toprule
    \textbf{Hyperparameter} & \textbf{Searched Settings} \\
    \midrule
    RNN: \\
    - Embedding dimension, $d_S$ & $\{8, 16, 32, 64, 128\}$ \\
    - Learning rate & $\{$ 1e-5, 5e-4, 1e-4, 5e-3, 1e-3 $\}$ \\
    \midrule
    BCQ (with 5 random restarts): \\
    - Threshold, $\tau$ & $\{0, 0.01, 0.05, 0.1, 0.3, 0.5, 0.75, 0.999\}$ \\
    - Learning rate & 3e-4 \\
    - Weight decay & 1e-3 \\
    - Hidden layer size & 256 \\
    \bottomrule
    \end{tabular}}
    \label{tab:mimic_HP}
\end{table}

\subsection{MIMIC Sepsis results} \label{appx:mimic-results}
\begin{figure}[h]
    \centering
    \begin{tabular}{cc}
        \includegraphics[scale=0.4]{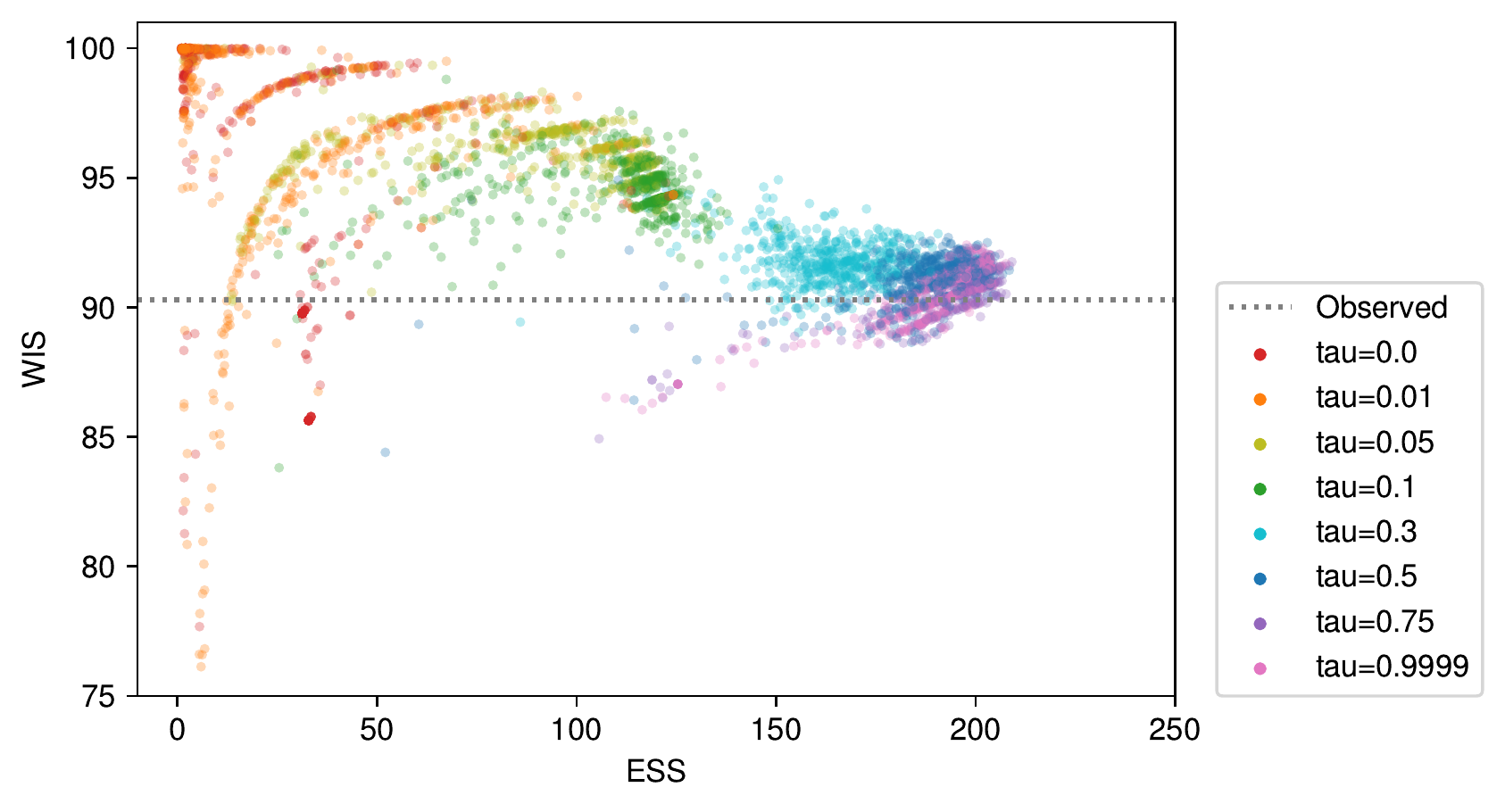} & \includegraphics[scale=0.4]{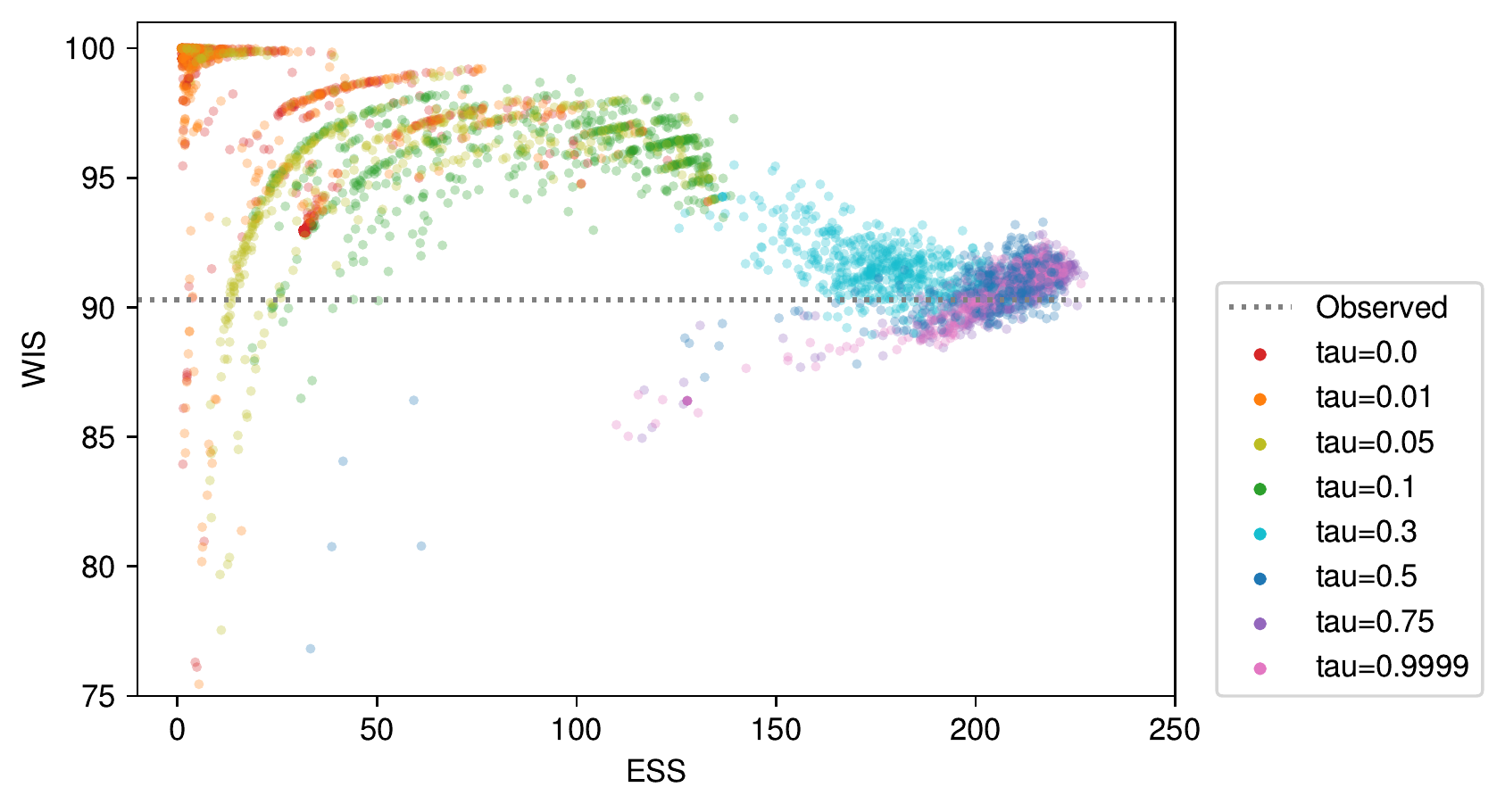}
    \end{tabular}
    \caption{Validation performance (in terms of WIS and ESS) for all hyperparameter settings and all checkpoints considered during model selection. Left - baseline, Right - proposed. }
    \label{fig:mimic_validation_full}
\end{figure}

\begin{figure}
    \centering
    \begin{tabular}{cc}
    \includegraphics[scale=0.75]{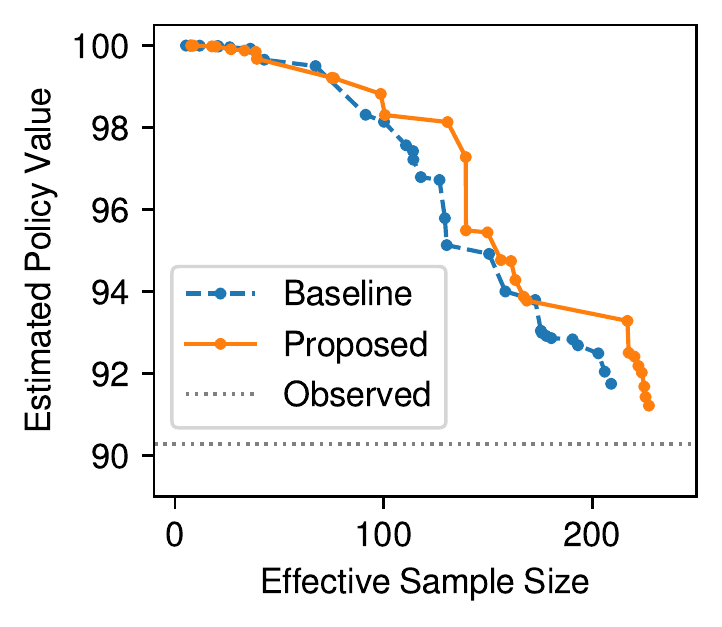} & 
    \includegraphics[scale=0.75]{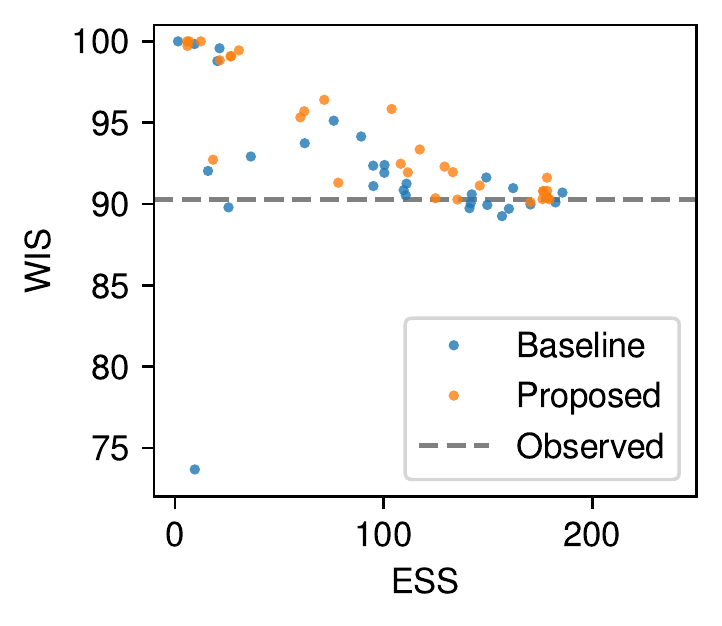}
    \end{tabular}
    \caption{Left - Pareto frontiers of validation performance for the baseline and proposed approaches; Right - test performance of the candidate models that lie on the validation Pareto frontier. The validation performance largely reflects the test performance, and proposed approach outperforms the baseline in terms of test performance albeit with a bit more overlap. }
\end{figure}

\begin{figure}
    \centering
    \includegraphics[scale=0.6]{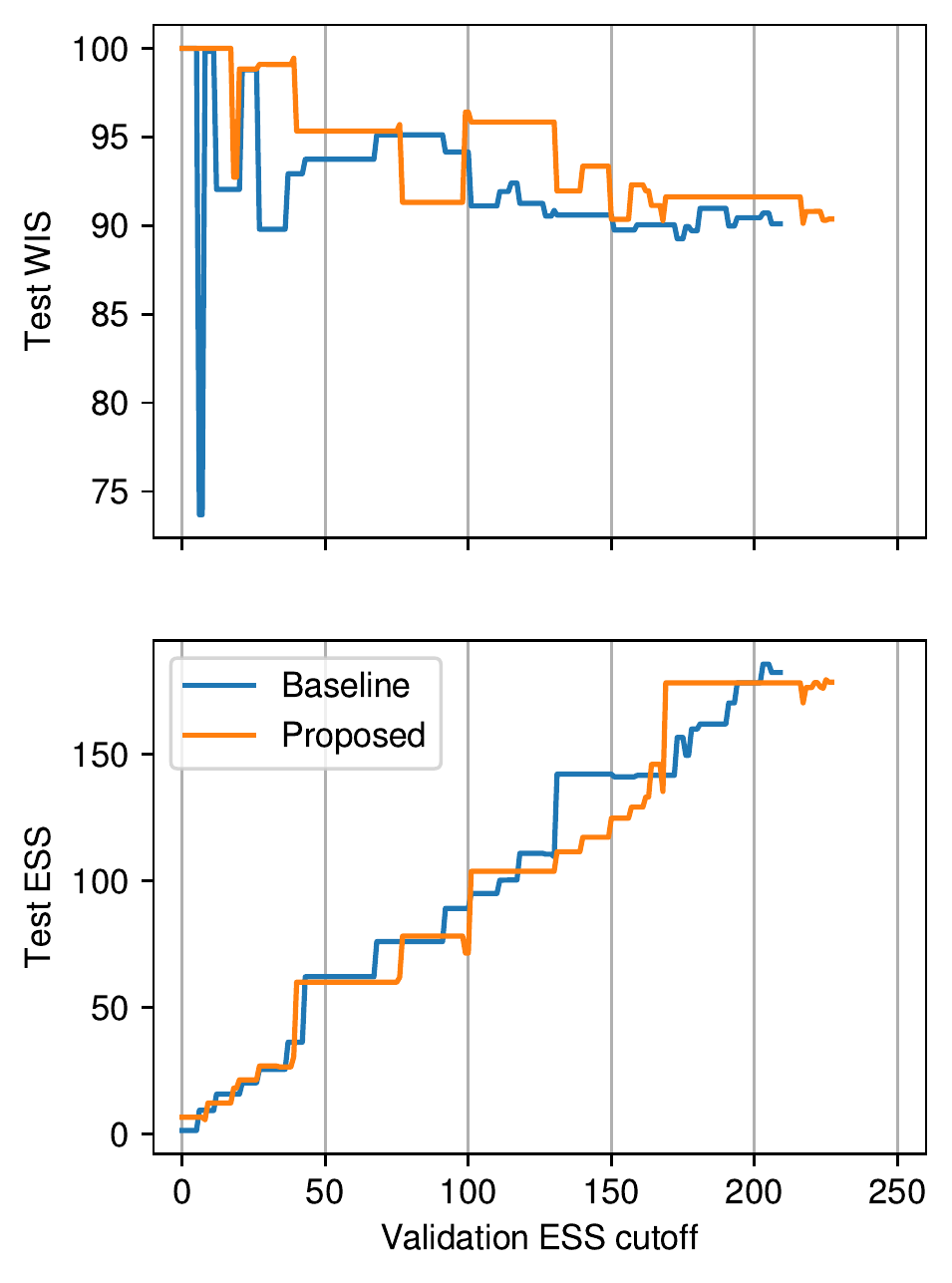}
    \caption{Model selection with different minimum ESS cutoffs. In the main paper we used ESS $\geq 200$; here we sweep this threshold and compare the resultant selected policies for both the baseline and proposed approach (only using candidate models that lie on the validation Pareto frontier). In general, across the ESS cutoffs, the proposed approach outperforms the baseline in terms of test set WIS value, with comparable or slightly lower ESS. }
\end{figure}

\end{document}